\documentclass{article}
\usepackage[preprint]{neurips_2026}
\usepackage{microtype}
\usepackage{graphicx}
\usepackage{amssymb}
\usepackage{longtable}
\usepackage{booktabs}
\usepackage{amsmath}
\usepackage{amsthm}
\usepackage{enumitem}
\usepackage{algorithm}
\usepackage{algorithmicx}
\usepackage{adjustbox}
\usepackage{algpseudocode}
\usepackage{caption}

\newcommand{\Rbb}{{\mathbb{R}}}

\newcommand{\Ab}{{\bf A}}

\newcommand{\Mb}{{\bf M}}

\newcommand{\Hb}{{\bf H}}
\newcommand{\Cb}{{\bf C}}

\newcommand{\gb}{{\mathbf{g}}}

\newcommand{\Qb}{{\bf Q}}

\newcommand{\xb}{{\bf x}}

\newcommand{\yb}{{\bf y}}
\newcommand{\zb}{{\bf z}}
\newcommand{\Zb}{{\bf Z}}

\newcommand{\EX}{{\mathbb{E}}}

\newcommand{\zetab}{{\bf \zeta}}
\newcommand{\Bb}{{\bf B}}
\newcommand{\Gb}{{\bf G}}
\newcommand{\Xb}{{\bf X}}

\newcommand{\Wb}{{\bf W}}

\newcommand{\nf}[1]{\|#1\|_F} 
\newcommand{\nt}[1]{\|#1\|_2}

\newcommand{\Bc}{{\mathcal{B}}}

\newcommand{\Psib}{{\mbox{\boldmath $\Psi$}}}
\newcommand{\Sigmab}{{\mbox{\boldmath $\Sigma$}}}

\newcommand{\thetab}{{\mbox{\boldmath $\theta$}}}

\newcommand{\argmin}{\mathop{{\rm argmin}}}

\newcommand{\Tr}{\mbox{{Tr}}}

\newcommand{\Ib}{{\bf I}}

\newcommand{\Deltab}{{\mbox{\boldmath $\Delta$}}}

\newcommand{\vb}{{\mathbf v}}

\newcommand{\Dc}{{\mathcal{D}}}

\newcommand{\Nc}{{\mathcal{N}}}

\newcommand{\Pc}{{\mathcal{P}}}

\newcommand{\Hc}{{\cal H}}
\newcommand{\Lc}{{\cal L}}
\newsavebox\mybox

\usepackage{multirow}
\usepackage{subcaption}

\usepackage{hyperref}
\newtheorem{theorem}{Theorem}

\newcommand{\refappendix}[1]{\hyperref[#1]{Appendix~\ref*{#1}}}
\newtheorem{lemma}[theorem]{Lemma}
\newtheorem{definition}{Definition}
\newtheorem{assumption}{Assumption}

\newtheorem{proposition}{Proposition}
\newtheorem{remark}{Remark}
\newtheorem{example}{Example}

\usepackage{pgfplots}
\pgfplotsset{compat=1.18}
\usepgfplotslibrary{groupplots,fillbetween}
\usetikzlibrary{fillbetween,plotmarks}
\usepackage{xcolor}
\usepackage{pgfplotstable}

\definecolor{coral}{rgb}{0.8, 0.5, 0.31}
\definecolor{BenignGray}{gray}{0.35}
\definecolor{AttackRed}{rgb}{0.78,0.00,0.08}

\definecolor{CClipMagenta}{rgb}{0.80,0.00,0.50}
\definecolor{CClipBuck}{rgb}{0.55,0.00,0.55}
\definecolor{BulyanBuck}{rgb}{0.54, 0.17, 0.89}
\definecolor{RfaGreen}{rgb}{0.10,0.50,0.10}
\definecolor{RfaBuck}{rgb}{0.05,0.35,0.25}
\definecolor{CwMedBlue}{rgb}{0.00,0.30,0.80}
\definecolor{HuberBrown}{RGB}{150,134,54}

\colorlet{FedLAWColor}{orange!80}
\colorlet{BulyanColor}{cyan!98!blue}
\colorlet{KrumColor}{teal!80}
\colorlet{TrimmedColor}{red!80}
\colorlet{NoDefenceColor}{black}

\colorlet{CClipColor}{CClipMagenta}
\colorlet{CClipBuckColor}{CClipBuck}
\colorlet{BulyanBuckColor}{BulyanBuck}
\colorlet{RFAColor}{RfaGreen}
\colorlet{RFABuckColor}{RfaBuck}
\colorlet{CwMedColor}{black!50} 
\colorlet{HuberColor}{HuberBrown}

\pgfplotsset{
  /tikz/fedlaw/.style     ={FedLAWColor,     very thick, mark=*,        mark options={scale=1, fill=FedLAWColor}},
  /tikz/bulyan/.style     ={BulyanColor,     semithick, mark=o,         mark options={scale=1}},
  /tikz/krum/.style       ={KrumColor,       thin,      mark=square*,   mark options={scale=0.7}},
  /tikz/trimmed/.style    ={TrimmedColor,    thin,      mark=triangle*, mark options={scale=1}},
  /tikz/nodefence/.style  ={NoDefenceColor,  thin,      mark=diamond*,  mark options={scale=1}},
  /tikz/cclip/.style      ={CClipColor,      thin,      mark=star,      mark options={scale=1}},
  /tikz/cclipbuck/.style  ={CClipBuckColor,  thin,      mark=asterisk,  mark options={scale=1}},
  /tikz/bulyanbuck/.style ={BulyanBuckColor, thin,      mark=pentagon*, mark options={scale=1}},
  /tikz/rfa/.style        ={RFAColor,        thin,      mark=x,         mark options={scale=1.15}},
  /tikz/rfabuck/.style    ={RFABuckColor,    thin,      mark=+,         mark options={scale=1.1}},
  /tikz/cwmed/.style      ={CwMedColor,      thin,      mark=diamond*,  mark options={scale=1}},
  /tikz/huber/.style      ={HuberColor,      semithick, mark=triangle,  mark options={scale=1}},
}

\pgfplotsset{
  neurips style/.style={
    width=0.9\linewidth,
    height=5.3cm,
    xmin=0, xmax=100,
    line width=0.45pt,
    grid=major,
    grid style={solid, gray!30},
    legend cell align=left,
    legend style={
      font=\footnotesize,
      at={(0.97,0.5)}, anchor=north east,
      draw=none,
      fill=white, fill opacity=0.9,
      /tikz/every even column/.style={column sep=5pt},
    },
    legend columns=2,
  }
}

\pgfkeys{/series/.is family, /series,
  fedlaw/.code={
    \addplot[/tikz/fedlaw]
      table[x=fraction_malicious, y=sparse_capped_mean, col sep=space]{\SeriesFile};},
  bulyan/.code={
    \addplot[/tikz/bulyan]
      table[x=fraction_malicious, y=bulyan_mean, col sep=space]{\SeriesFile};},
  krum/.code={
    \addplot[/tikz/krum]
      table[x=fraction_malicious, y=krum_mean, col sep=space]{\SeriesFile};},
  trimmed/.code={
    \addplot[/tikz/trimmed]
      table[x=fraction_malicious, y=trimmed_mean, col sep=space]{\SeriesFile};},
  nodefence/.code={
    \addplot[/tikz/nodefence]
      table[x=fraction_malicious, y=nodefence_mean, col sep=space]{\SeriesFile};},
  cclip/.code={
    \addplot[/tikz/cclip]
      table[x=fraction_malicious, y=cclip_mean, col sep=space]{\SeriesFile};},
  cclipbuck/.code={
    \addplot[/tikz/cclipbuck]
      table[x=fraction_malicious, y=cclip_bucketing_mean, col sep=space]{\SeriesFile};},
  bulyanbuck/.code={
    \addplot[/tikz/bulyanbuck]
      table[x=fraction_malicious, y=bulyan_bucketing_mean, col sep=space]{\SeriesFile};},
  rfa/.code={
    \addplot[/tikz/rfa]
      table[x=fraction_malicious, y=rfa_mean, col sep=space]{\SeriesFile};},
  rfabuck/.code={
    \addplot[/tikz/rfabuck]
      table[x=fraction_malicious, y=rfa_bucketing_mean, col sep=space]{\SeriesFile};},
  cwmed/.code={
    \addplot[/tikz/cwmed]
      table[x=fraction_malicious, y=cwmed_mean, col sep=space]{\SeriesFile};},
  huber/.code={
    \addplot[/tikz/huber]
      table[x=fraction_malicious, y=huber_mean, col sep=space]{\SeriesFile};},
}

\pgfkeys{/serieslegend/.is family, /serieslegend,
  fedlaw/.code={\pgfkeys{/series/fedlaw}\addlegendentry{\textbf{FedLAW (Ours)}}},
  bulyan/.code={\pgfkeys{/series/bulyan}\addlegendentry{Bulyan}},
  krum/.code={\pgfkeys{/series/krum}\addlegendentry{Krum}},
  trimmed/.code={\pgfkeys{/series/trimmed}\addlegendentry{Trimmed Mean}},
  nodefence/.code={\pgfkeys{/series/nodefence}\addlegendentry{No Defence}},
  cclip/.code={\pgfkeys{/series/cclip}\addlegendentry{CClip}},
  cclipbuck/.code={\pgfkeys{/series/cclipbuck}\addlegendentry{CClip + Bucketing}},
  bulyanbuck/.code={\pgfkeys{/series/bulyanbuck}\addlegendentry{Bulyan + Bucketing}},
  rfa/.code={\pgfkeys{/series/rfa}\addlegendentry{RFA}},
  rfabuck/.code={\pgfkeys{/series/rfabuck}\addlegendentry{RFA + Bucketing}},
  cwmed/.code={\pgfkeys{/series/cwmed}\addlegendentry{Coordinate-wise Median}},
  huber/.code={\pgfkeys{/series/huber}\addlegendentry{Huber Aggregator}},
}

\title{SeqLoRA: Bilevel Orthogonal Adaptation for Continual Multi-Concept Generation}
\author{%
  Javad Parsa\thanks{Equal contribution. Correspondence to: \texttt{javad.parsa@it.uu.se}}  \\
  Uppsala University, ETH Zurich \\
  \And
  Enis Simsar\textsuperscript{*} \\
  ETH Zurich \\
  \And
  Amir Joudaki \\
  ETH Zurich \\
  \And
  Thomas Hofmann \\
  ETH Zurich \\
  \And
  André M. H. Teixeira \\
  Uppsala University, Sweden \\
}

\begin{document}
\maketitle
\begin{abstract}
   Parameter-efficient fine-tuning enables fast personalization of text-to-image diffusion models, but composing multiple custom concepts remains challenging due to representation interference. Existing modular methods either rely on expensive post-hoc fusion or freeze adaptation subspaces, which limit expressiveness and concept fidelity. To address this trade-off, we propose Sequential regularized LoRA (SeqLoRA), a constrained continual learning framework that jointly optimizes both LoRA factors via bilevel optimization. Theoretically, we establish strong convergence guarantees for our algorithm and model the residual layer activations as a matrix sub-Gaussian process to derive high-probability bounds on catastrophic forgetting. We further prove that learning the LoRA basis from data minimizes residual interference energy more effectively than frozen-basis methods. Experiments on multi-concept image generation demonstrate that SeqLoRA improves identity preservation and scalability across up to 101 concepts, while avoiding costly fusion and reducing attribute interference in composed generations.
\end{abstract}
\section{Introduction}
Text-to-image diffusion models have transformed visual content generation by allowing users to create high-quality, diverse images directly from natural-language prompts~\citep{ho2020denoising,rombach2022high,saharia2022photorealistic,ramesh2022hierarchical}. These models are trained to invert a progressive noising process, gradually converting random noise into a coherent image that aligns with a textual description. Thanks to their stable optimization behavior and strong ability to capture diverse data modes, diffusion models now support a wide range of applications, including artistic creation, media production, data augmentation, and scientific visualization~\citep{po2024state,luo2025taming}. Large-scale text-to-image systems such as Stable Diffusion~\citep{rombach2022high} and DALL\textperiodcentered E~\citep{ramesh2022hierarchical} have seen broad adoption, while parameter-efficient fine-tuning methods, especially Low-Rank Adaptation (LoRA)~\citep{hu2022lora}, have made it feasible for individual users to adapt these foundation models to new visual concepts using only a small set of reference images~\citep{ruiz2023dreambooth,gal2023image}.

While single-concept customization has become highly mature~\citep{ruiz2023dreambooth,gal2023image,hu2022lora}, a key challenge arises when users want to generate images that include multiple personalized concepts at once, for instance, two specific pets interacting in the same scene, or a particular artistic style combined with a custom object~\citep{kumari2023multi,gu2023mix}. The central scientific issue is \emph{representation interference}: because concepts are usually learned separately, their corresponding adaptations may occupy overlapping regions in the model’s parameter space~\citep{po2024orthogonal,liang2024inflora}. When these adaptations are combined at inference time, the shared directions can lead to \emph{attribute entanglement}, where identity-related features from one concept bleed into another, resulting in mixed attributes, weakened identities, or even incoherent generations~\citep{po2024orthogonal,gu2023mix}. This problem is especially pronounced for semantically similar concepts, such as different human faces, whose learned weight residuals tend to be highly correlated and lie in nearly the same subspaces~\citep{po2024orthogonal}. As the number of concepts increases, the combinatorial growth in possible concept subsets makes it infeasible to train a separate model for every combination, motivating modular methods in which independently fine-tuned components can be composed at test time without retraining~\citep{gu2023mix,po2024orthogonal}.
\begin{figure}[t] 
        \centering
        \includegraphics[width=0.82\textwidth]{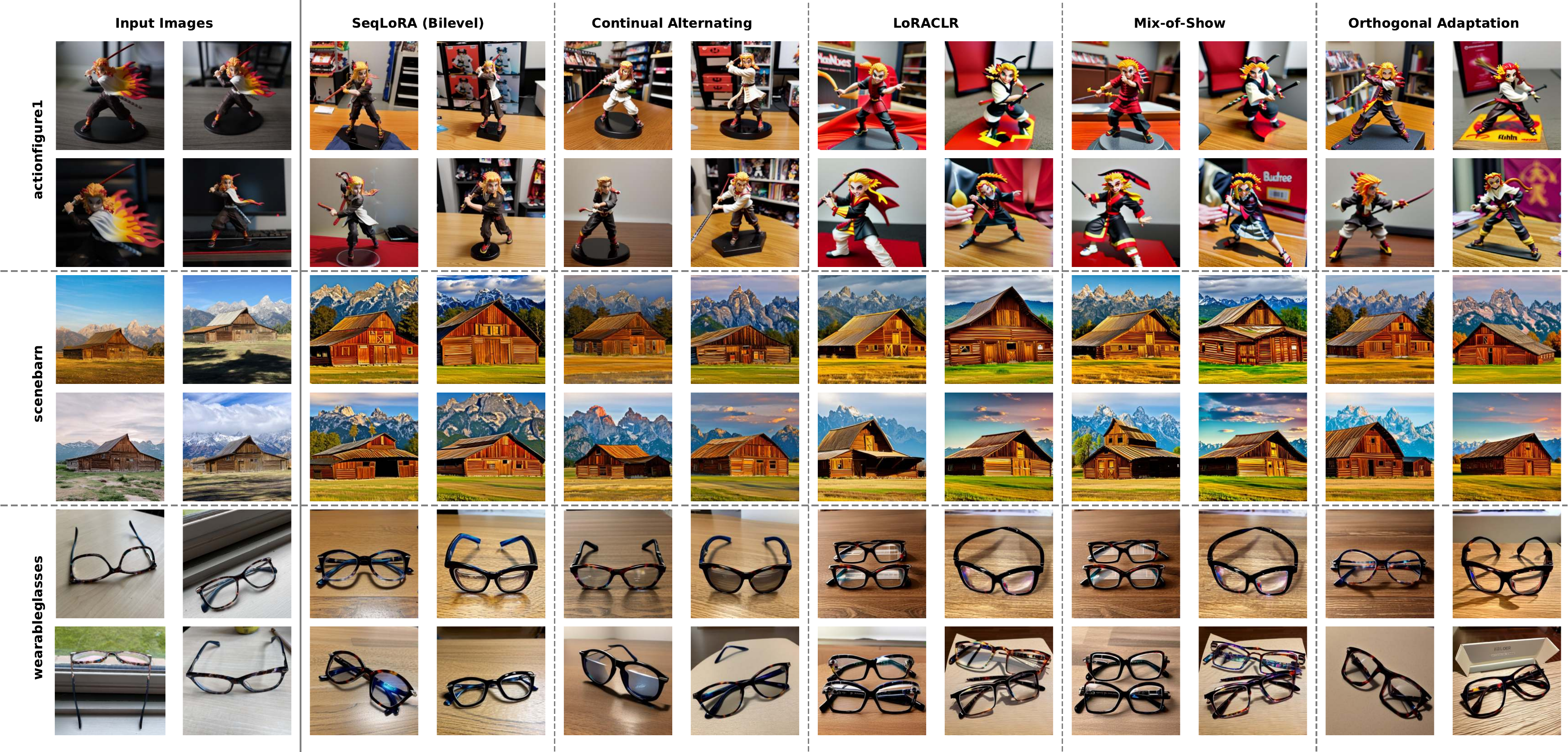}
        \caption{Qualitative comparison of multi-concept image generation across different methods for 32 concepts. We show the original concept images (Input) and 4 generated samples (seeds) for all methods.}
        \label{fig:qualitative}
\end{figure}

Several recent works have proposed strategies to mitigate concept interference, each offering partial solutions with distinct trade-offs.  \emph{Joint training} methods such as Custom Diffusion~\citep{kumari2023multi} and Break-A-Scene~\citep{avrahami2023break} optimize all concepts simultaneously within a shared model, which can reduce interference but requires concurrent access to all concept data, a requirement that is unscalable to large concept libraries and raises data privacy concerns in multi-user settings.  Gradient fusion approaches, exemplified by Mix-of-Show~\citep{gu2023mix}, allow independent fine-tuning of each concept and then merge the resulting models through an optimization-based fusion step to preserve identity; however, the fusion must be re-executed for every new combination of concepts, preventing truly modular composition where individual adapters can be added or removed on the fly. Orthogonal Adaptation~\citep{po2024orthogonal} takes a different approach by sampling each concept's LoRA basis $\Bb_i$ from a shared orthogonal matrix and freezing it during training, optimizing only the coefficient matrix $\Ab_i$.  This guarantees approximate orthogonality between concepts and enables instant merging via simple summation, but freezing $\Bb_i$ restricts the feasible space of the adaptation and limits per-concept fidelity.  In the continual learning literature, InfLoRA~\citep{liang2024inflora} designs task-specific LoRA subspaces for image classification that are orthogonal to the gradient subspaces of previously learned tasks, achieving strong stability--plasticity trade-offs for sequential classification; however, it similarly freezes the basis matrix for each task, inheriting the same expressiveness limitation.

A common thread across these methods is the tension between orthogonality, which is required to prevent interference, and expressiveness, which is required for high-fidelity adaptation: existing methods do not simultaneously permit both LoRA factors to be optimized freely while enforcing the subspace orthogonality necessary for low interference composition. In this work, we resolve this tension by formulating multi-concept generation as a constrained continual learning problem. As a preview of our results, Figure~\ref{fig:qualitative} visualizes a qualitative comparison on a 32-concept task. Notably, for challenging concepts like "wearableglasses", baseline methods suffer from severe attribute entanglement, whereas SeqLoRA (our method) successfully preserves distinct concept identities.
\paragraph{Contributions.} This paper makes the following contributions:
\begin{itemize}
    \item We introduce Sequential regularized LoRA (SeqLoRA), which jointly
optimizes both LoRA factors via bilevel optimization
while enforcing subspace orthogonality through a closed-form projection, resolving the expressiveness--interference trade-off.
    \item We provide theoretical analyses establishing: (a)~monotone descent and convergence to a critical point of the constrained problem (Theorem~\ref{thm:smoothcon}), and (b)~derive high-probability catastrophic forgetting bound showing that learning the LoRA bases reduces residual interference compared to frozen bases (Theorem~\ref{thm:hanson_wright_catas}).
    \item We evaluate SeqLoRA on multi-concept generation with up to 101 concepts, demonstrating state-of-the-art identity preservation and scalability beyond the capacity of existing fusion-based methods.
\end{itemize}
\paragraph{Structure of the paper.} The remainder of the paper is organized as follows.  Section~\ref{sec:formulation} introduces the continual-learning model, the LoRA parameterization, and the interference problem.  Section~\ref{sec:method} develops the bilevel optimization formulation, derives the closed-form projection, and presents the SeqLoRA algorithm with its theoretical analysis. Section~\ref{sec:experiments} reports experimental results. Section~\ref{sec:limitations} discusses limitations and broader impact.
\section{Problem Formulation}\label{sec:formulation}
\subsection{Continual Learning Model}
We consider a sequential concept stream $\tau_1, \dots, \tau_T$, where each task $\tau_i$ provides a small reference dataset $\mathcal{D}_i = \{(\mathbf{x}_{i,m}, c_i)\}_{m=1}^{M_i}$ pairing $M_i$ reference images with a concept-specific text token $c_i$. Only $\mathcal{D}_i$ is accessible when task $\tau_i$ arrives; prior datasets are discarded. The goal is a compact, composable representation per concept such that any subset of the $T$ concepts can be jointly rendered at inference time without retraining, balancing \emph{plasticity} (adapting to new concepts) and \emph{stability} (preserving previous ones).
\subsection{Single-Concept Fine-Tuning with LoRA}
LoRA~\citep{hu2022lora} has emerged as the dominant strategy for parameter-efficient fine-tuning of diffusion models.
Rather than updating the full weight matrix $\Wb_i \in \Rbb^{n \times m}$ of every attention layer, LoRA
introduces a low-rank residual for each concept $i$:
\begin{equation}\label{eq:lora}
    \Wb_i = \Wb_0 + \Ab_i\Bb_i^\top,
\end{equation}
where $\Wb_0$ contains the pretrained parameters for that layer, $\Ab_i \in \Rbb^{n \times r}$ and $\Bb_i \in \Rbb^{m \times r}$ are the trainable low-rank factors with rank $r \ll \min\{m,n\}$.  For a single concept $i$, the standard training objective minimizes the denoising loss
\begin{equation}\label{eq:single-loss}
    \Lc_i(\Ab_i, \Bb_i) = \EX_{\xb_i, \boldsymbol{\epsilon}, t}\left[\left\|\boldsymbol{\epsilon} - \boldsymbol{\epsilon}_{\boldsymbol{\theta}(\Wb_i)}(\xb_i^{(t)}, t, c_i)\right\|_2^2\right],
\end{equation}
where $\boldsymbol{\epsilon} \sim \Nc(\mathbf{0}, \Ib)$ is the noise target, $t$ is a uniformly sampled diffusion timestep, $\xb_i^{(t)}$ is the noised version of the reference image, $\boldsymbol{\epsilon}_{\boldsymbol{\theta}}$ is the noise-prediction network, and $\thetab(\Wb_i)$ denotes the full network parameters with the adapted layer weight set to $\Wb_i$.
\subsection{Multi-Concept Aggregation and the Interference Problem}
In a practical deployment scenario, typically each concept is fine-tuned independently, potentially by different users in parallel, producing a separate LoRA module $(\Ab_i, \Bb_i)$ per concept \citep{gu2023mix, po2024orthogonal}.  At inference time, the goal is to compose an arbitrary subset of these modules into a single model for multi-concept generation.  The simplest aggregation strategy is to sum the individual LoRA residuals per layer:
\begin{equation}
    \Wb_T = \Wb_0 + \sum_{i=1}^{T} \Ab_{i}\Bb_{i}^\top.
    \label{eq:aggregate}
\end{equation}
To explain why naive aggregation fails, consider a linear layer adapted for concept $i$ with input $\Xb_i$.  The single-concept output is $\mathbf{O}_i(\Xb_i) = (\Wb_0 + \Ab_i\Bb_i^\top)\Xb_i$, while the merged output becomes $\hat{\mathbf{O}}_i(\Xb_i) = (\Wb_0 + \Ab_i\Bb_i^\top + \Ab_j\Bb_j^\top)\Xb_i$.  The discrepancy $\hat{\mathbf{O}}_i - \mathbf{O}_i = \Ab_j\Bb_j^\top\Xb_i$ is the crosstalk term~\citep{po2024orthogonal}.  Concept identity is preserved when this term vanishes, but since $\Xb_i$ typically has full column rank, exact cancellation is impossible.  A natural relaxation~\citep{po2024orthogonal} projects $\Xb_i$ onto the orthogonal complement of $\operatorname{col}(\Bb_j)$.  Let $\bar{\Bb}_j \in \Rbb^{m \times (m-r)}$ span this complement; then $\Ab_j\Bb_j^\top \bar{\Bb}_j\bar{\Bb}_j^\top\Xb_i = \mathbf{0}$ since $\Bb_j^\top\bar{\Bb}_j = \mathbf{0}$.  Because $r \ll \min\{m,n\}$, the complement covers most of $\Rbb^m$, so $\bar{\Bb}_j\bar{\Bb}_j^\top\Xb_i \approx \Xb_i$.  At the same time, for concept $i$'s residual to remain effective, the columns of $\Bb_i$ must not lie in $\operatorname{col}(\Bb_j)$, which requires $\Bb_i^\top\Bb_j = \mathbf{0}$.  This establishes pairwise orthogonality of the LoRA bases as the key condition for low interference aggregation.  
\subsection{Desiderata}\label{sec:desiderata}
The preceding analysis motivates three requirements: (i)~pairwise
orthogonality $\Bb_i^\top \Bb_j = \mathbf{0}$ for $i \neq j$ to reduce
inference-time interference; (ii)~joint optimization of $\Ab_i$ and $\Bb_i$
to preserve expressiveness; and (iii)~continual learning, accommodating new concepts sequentially without
access to past data or joint retraining. Prior works~\citep{po2024orthogonal, liang2024inflora} sacrifice~(ii) by
freezing $\Bb_i$. The following warm-up motivates why learning the basis is preferable to freezing it.
\begin{example}[Warm-Up: why the basis matters]\label{exa:linearwarmup}
    Consider a single-layer linear map with input $\xb\in\Rbb^m$ and output $\yb\in\Rbb^n$. For concept $j$ with distribution $\Dc_j$, define the population risk $\Lc_j(\Wb)\triangleq\EX_{(\xb,\yb)\sim \Dc_j}\{\|\yb-\Wb \xb\|_2^2\}$ and the input covariance $\Sigmab_j\triangleq \EX_{\xb\sim \Dc_j}\{\xb\xb^\top\}$. Under orthogonality constraints, the interference for concept $j$ after learning $T$ concepts is proportional to its uncaptured residual energy: $\Tr(\Pc_{B_j}^{\perp}\Sigmab_j)$, where $\Pc_{B_j}^{\perp}$ is the orthogonal projector onto the complement of $\mathrm{col}(\Bb_j)$ (full derivation in Theorem~\ref{thm:linear_residual_forgetting} in Appendix~\ref{sec:population_theory}). To minimize this interference, the rank-$r$ basis $\Bb_j$ must align with the top-$r$ principal components of $\Sigmab_j$. Consequently, randomly frozen bases (e.g., Orthogonal Adaptation \citep{po2024orthogonal}) fundamentally fail to capture this spectral geometry, leaving large residual interference.

   Importantly, the local top eigenspace is not necessarily the optimal solution for the end-to-end loss in non-linear diffusion models under orthogonality constraints. By directly minimizing the end-to-end objective, we aim to implicitly discover a maximal-energy subspace tailored for concept reconstruction, which yields the optimal projection to suppress interference.
\end{example}
In the following section, we introduce our proposed method, which is specifically designed to satisfy all three criteria simultaneously by actively navigating this subspace trade-off.
\section{Proposed Method}\label{sec:method}
Given the sequential concept stream and problem setup defined in
Section~\ref{sec:formulation}, SeqLoRA treats the previously learned bases
$\{\Bb_j\}_{j<i}$ as a compact memory of past concepts and
enforces low interference under additive LoRA composition by
constraining each new basis $\Bb_i$ to lie in the orthogonal
complement of their column span. The optimization procedure and its solution are detailed below.
\subsection{Constrained Optimization Problem}
When concept $i$ arrives, we seek LoRA factors $(\Ab_i, \Bb_i)$ that minimize the denoising loss while keeping $\Bb_i$ orthogonal to all previously learned bases:
\begin{equation}\label{eq:constrained}
    \min_{\Ab_i,\,\Bb_i\in\Bc_i}\; \Lc_i(\Ab_i, \Bb_i),
\end{equation}
where
\begin{equation}
    \Bc_i = \{\Bb_i\in\Rbb^{m\times r}\mid  \Bb_j^\top \Bb_i = \mathbf{0}, \quad \forall\, j < i, \mathbf{0}\in\Rbb^{r\times r}\}.
\end{equation}
 Problem~\eqref{eq:constrained} is jointly non-convex in $(\Ab_i, \Bb_i)$, but it has favorable structure: $\Lc_i$ is differentiable with respect to each factor individually, and the constraints are linear in $\Bb_i$.
\subsection{Bilevel Optimization Algorithm}\label{subsec:bilevel}
Several techniques for solving \eqref{eq:constrained} already exist, including the BSUM
algorithm by~\citet{razaviyayn2013bsum} and the prox-linear approach introduced in~\citep{drusvyatskiy2019efficiency}. These algorithms rely on alternating between updating the $\Ab_i$ for fixed $\Bb_i$, and $\Bb_i$ for fixed $\Ab_i$. In our experience, such updates tend to be too aggressive, and miss important couplings between the two variable blocks that are helpful for fine-tuning models.
To address these challenges, we propose a new algorithm based on rewriting \eqref{eq:constrained} as a bilevel optimization problem~\citep{dempe2002foundations}:
\begin{align}
    \nonumber&\min_{\Bb_i\in\Bc_i} \Lc_i(\Ab_i^*(\Bb_i), \Bb_i)\\&
    \text{subject to}\quad \Ab_i^*(\Bb_i) = \argmin\limits_{\Ab_i}\Lc_i(\Ab_i, \Bb_i).
    \label{eq:jointthetawnest}
\end{align}
To solve the lower-level optimization problem, we use a quadratic approximation $\hat{\Lc}_i(\Ab_i,\Bb_i)$ of $\Lc_i(\Ab_i, \Bb_i)$
\begin{equation}\label{eq:quad-A}
    \hat{\Lc}_i(\Ab_i,\Bb_i;\Ab_i^{(k)}) = \Lc_i(\Ab_i^{(k)}, \Bb_i) + \bigl\langle \nabla_{\Ab_i}\Lc_i(\Ab_i^{(k)}, \Bb_i),\; \Ab_i - \Ab_i^{(k)} \bigr\rangle + \frac{1}{2\alpha}\bigl\|\Ab_i - \Ab_i^{(k)}\bigr\|_F^2.
\end{equation}
Substituting this quadratic approximation into the lower-level optimization problem in~\eqref{eq:jointthetawnest} leads to:
\begin{equation}\label{eq:inner}
    \Ab_i^{(k+1)}(\Bb_i) = \arg\min_{\Ab_i}\; \hat{\Lc}_i(\Ab_i,\Bb_i;\Ab_i^{(k)}) = \Ab_i^{(k)} - \alpha\,\nabla_{\Ab_i}\Lc_i(\Ab_i^{(k)}, \Bb_i),
\end{equation}
where $\alpha$ is a step size. Note that the model update depends on the $\Bb_i$. This dependence is accounted for in the upper-level optimization in \eqref{eq:jointthetawnest}, whose cost function becomes
\begin{equation}\label{eq:reduced}
    \Phi_i(\Bb_i) \triangleq \Lc_i\!\bigl(\Ab_i^{(k+1)}(\Bb_i),\, \Bb_i\bigr) = \Lc_i\!\Bigl(\Ab_i^{(k)} - \alpha\,\nabla_{\Ab_i}\Lc_i(\Ab_i^{(k)}, \Bb_i),\;\Bb_i\Bigr),
\end{equation}
which is now a function of $\Bb_i$ alone. The constrained upper-level problem is
\begin{equation}\label{eq:B-constrained}
    \min_{\Bb_i\in \Bc_i}\; \Phi_i(\Bb_i).
\end{equation}
The relationship between $\Phi_i$ and $\Bb_i$ follows from the chain rule. Based on \eqref{eq:reduced}, perturbations in $\Bb_i$ influence the objective in two ways: a direct effect through the partial derivative $\nabla_{\Bb_i}\Lc_i$, and an indirect effect through the dependence of the updated factor $\Ab_i^{(k+1)}(\Bb_i)$ on $\Bb_i$. This indirect pathway is characterized by mixed derivatives $\nabla^2_{\Bb_i\Ab_i}\Lc_i$, which measure how the gradient with respect to $\Ab_i$ varies as $\Bb_i$ changes. Therefore, solving \eqref{eq:B-constrained} seeks a basis $\Bb_i$ that both reduces the denoising loss at the current iterate and shapes the lower-level update of $\Ab_i$ toward directions that further decrease the loss, while the projection onto $\Bc_i$ maintains orthogonality across concepts.

Although \eqref{eq:inner} corresponds to a single gradient descent step, the upper-level problem \eqref{eq:B-constrained} remains difficult due to the non-convexity of $\Phi_i$ and the orthogonality constraints encoded by $\Bc_i$. To handle this, we construct a quadratic model $\hat{\Phi}_i(\Bb_i)$ around $\Bb_i^{(k)}$:
\begin{equation}\label{eq:quad-B}
    \hat{\Phi}_i(\Bb_i;\Bb_i^{(k)}) \;= \; \Phi_i(\Bb_i^{(k)}) + \bigl\langle \nabla_{\Bb_i}\Phi_i(\Bb_i^{(k)}),\; \Bb_i - \Bb_i^{(k)} \bigr\rangle + \frac{1}{2\beta}\bigl\|\Bb_i - \Bb_i^{(k)}\bigr\|_F^2,
\end{equation}
where $\beta>0$ is the upper-level step-size. Note that evaluating \eqref{eq:inner} at $\Bb_i=\Bb_i^{(k)}$ yields $\Lc_i(\Ab_i^{(k+1)}(\Bb_i^{(k)}),\Bb_i)$, which recovers the standard alternating-minimization pattern: (i) fix $\Bb_i=\Bb_i^{(k)}$ and update $\Ab_i$, then (ii) fix $\Ab_i=\Ab_i^{(k+1)}$ and update $\Bb_i$.
However, unlike standard alternating minimization, which descends only the partial objective $\Lc_i(\Ab_i^{(k+1)}, \cdot)$ with frozen $\Ab_i^{(k+1)}$, the bilevel gradient $\nabla_{\Bb_i}\Phi_i$ additionally captures the coupling between factors through a cross-Hessian correction $\nabla^2_{\Bb_i\Ab_i}\Lc_i$, yielding a richer descent direction on the true reduced objective. We include this standard alternating minimization variant (without the cross-Hessian correction) as a baseline in our experiments, denoted \texttt{Continual Alternating} in Section~\ref{sec:experiments}.

Minimizing this quadratic model \eqref{eq:quad-B} subject to the orthogonality constraints replaces \eqref{eq:B-constrained} by
\begin{align}
     \Bb^{(k+1)}_i &= \argmin_{\Bb_i\in\Bc_i} \hat{\Phi}_i(\Bb_i;\Bb_i^{(k)}).
    \label{eq:findweighquadratic}
\end{align}
By taking the gradient of the main cost function and dropping constant terms, the above update is equivalent to
\begin{equation}\label{eq:nearest}
    \min_{\Bb_i\in\Bc_i}\; \bigl\|\Bb_i - \widetilde{\Bb}_i\bigr\|_F^2,
\end{equation}
where
\begin{equation}\label{eq:B-tilde}
    \widetilde{\Bb}_i = \Bb_i^{(k)} - \beta\,\nabla_{\Bb_i}\Phi_i(\Bb_i^{(k)}).
\end{equation}
We use the chain rule to compute the gradient $\nabla_{\Bb_i}\Phi_i(\Bb_i^{(k)})$. So, it is straightforward to show that
\begin{equation}\label{eq:grad-Phi-exact-vec}
    \nabla_{\Bb_i}\Phi_i(\Bb_i^{(k)}) = \nabla_{\Bb_i}\Lc_i(\Ab_i^{(k+1)}(\Bb_i^{(k)}), \Bb_i)|_{\Bb_i = \Bb_i^{(k)}} -\alpha\,\Hc_{\Ab\Bb}^{(k)}\!\left[\nabla_{\Ab_i}\Lc_i\!\bigl({\Ab}_i^{(k+1)}(\Bb_i^{(k)}), \Bb_i^{(k)}\bigr)\right].
\end{equation}
The term $\Hc_{\Ab\Bb}^{(k)}[\cdot]: \Rbb^{n \times r} \to \Rbb^{m \times r}$ is the cross-Hessian contraction operator. Directly instantiating this 4th-order tensor might be computationally prohibitive. Instead, we compute the required Hessian-matrix product efficiently using a nested automatic differentiation strategy (a double grad or Hessian-vector product trick). For a given inner residual $\mathbf{g_A} = \nabla_{\Ab_i}\Lc_i$, this operation is mathematically equivalent to:
\begin{equation}\label{eq:hvp}
    \Hc_{\Ab\Bb}^{(k)}[\mathbf{g_A}] = \nabla_{\Bb_i} \Tr\left(\mathbf{g_A}^\top \nabla_{\Ab_i}\Lc_i(\Ab_i^{(k)}, \Bb_i)\right) \Bigg|_{\Bb_i = \Bb_i^{(k)}}.
\end{equation}
The optimization problem~\eqref{eq:nearest} is a convex quadratic program with linear equality constraints. Let $\Bb_{\mathrm{int}} =[\Bb_1, \dots, \Bb_{i-1}] \in \Rbb^{m \times (i-1)r}$ denote the column-wise concatenation of all previously learned bases, allowing us to express the orthogonality constraint compactly as $\Bb_{\mathrm{int}}^\top\Bb_i = \mathbf{0}$.

The minimizer of this problem is simply the exact orthogonal projection of the unconstrained step $\widetilde{\Bb}_i$ onto the null space of $\Bb_{\mathrm{int}}^\top$. By solving the KKT conditions, this yields the closed-form geometric update:
\begin{equation}\label{eq:update-B}
    \Bb_i^{(k+1)} = \underbrace{\left[\Ib - \Bb_{\mathrm{int}}\!\left(\Bb_{\mathrm{int}}^\top \Bb_{\mathrm{int}}\right)^{-1}\!\Bb_{\mathrm{int}}^\top\right]}_{\triangleq\, \Pc^{\perp}_{B_i}} \widetilde{\Bb}_i = \operatorname{Proj}_{\operatorname{span}\{\Bb_j\}_{j<i}^{\perp}}\!\left(\widetilde{\Bb}_i\right).
\end{equation}
The projector $\Pc^\perp_{B_i}$ strips gradient components overlapping with
past concepts. Given $\Bb^{(k+1)}_i$, the lower-level factor is updated by~\eqref{eq:update-B}:
\begin{equation}
\Ab^{(k+1)}_i = \Ab^{(k)}_i - \alpha \nabla_{\Ab_i} \Lc_i\!\left(\Ab^{(k)}_i, \Bb^{(k+1)}_i\right).
\end{equation}
After $K$ bilevel iterations we freeze $(\Ab_i, \Bb_i)$, append $\Bb_i$ to $\Bb_{\text{int}}$,
and update $\Wb_i = \Wb_0 + \sum_{j \le i} \Ab_j \Bb_j^\top$. The same procedure is applied to all adapted layers, and processing the concept stream $i = 1, \ldots, T$ in sequence yields the full method summarized in Algorithm~\ref{alg:seqlora}.
\begin{remark}
    While evaluating the cross-Hessian adds marginal per-step overhead, it captures critical parameter coupling to yield a strictly richer descent direction. This accelerates convergence, requiring fewer total iterations. Consequently, this faster convergence helps to amortize the Hessian-vector product cost, ensuring highly competitive overall training efficiency (see Section~\ref{sec:experiments}).
\end{remark}
\subsection{Theoretical Results}
SeqLoRA uses quadratic surrogate models $\hat{\Lc}_i$ and $\hat{\Phi}_i$ to simplify bilevel optimization. A key question is whether minimizing these local approximations guarantees convergence to a critical point of the main loss function in \eqref{eq:constrained}, which is addressed in the following theorem.
\begin{theorem}\label{thm:smoothcon}
Under Assumption \ref{assump:smoothness} (see Appendix~\ref{sec:mathfound}), let $ \Lc_i(\Ab, \Bb) + \delta_{\Bc_i}(\Bb)$ be the composite objective, $\delta_{\Bc_i}(\Bb)$ is an indicator function, and let $\{(\Ab_i^{(k)}, \Bb_i^{(k)})\}_{k=0}^K$ be generated by Algorithm \ref{alg:seqlora}. If the step sizes satisfy $0 < \alpha < \frac{1}{L}$ and $0 < \beta < \frac{1}{L_\Phi}$, where $L_\Phi \le L(1 + \alpha L)^2 + \alpha \rho M$, then:
\begin{enumerate}
    \item \textbf{Strict Monotonic Descent:} The composite objective decreases monotonically at each iteration $k$. In particular, the step sizes $\alpha^* = \frac{1}{2L}$ and $\beta^* = \frac{1}{2L_\Phi}$ maximize the descent coefficients, yielding the sharpest guaranteed decrease.
    \item \textbf{Asymptotic Stationarity:} The sequence of iterates $\{(\Ab_i^{(k)}, \Bb_i^{(k)})\}$ asymptotically converges to a critical point of the constrained optimization problem \eqref{eq:constrained}.
\end{enumerate}
\end{theorem}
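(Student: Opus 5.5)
The plan is a standard sufficient-decrease argument for a two-block projected gradient scheme, combined with a Kurdyka--Łojasiewicz style (or summability) argument for stationarity. We take Assumption~\ref{assump:smoothness} to provide: $L$-Lipschitz gradients of $\Lc_i$ in each block (and jointly); a cross-Hessian bounded by $M$ in operator norm on the relevant region; $\rho$-Lipschitz continuity of the second derivatives; and $\Lc_i$ bounded below.

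\textbf{Step 1: smoothness of $\Phi_i$.} We differentiate $\Phi_i(\Bb)=\Lc_i(\Ab^{(k)}-\alpha\nabla_{\Ab}\Lc_i(\Ab^{(k)},\Bb),\Bb)$ as in \eqref{eq:grad-Phi-exact-vec}. The gradient has two pieces. The first is a partial gradient evaluated at a point whose $\Ab$-argument moves with $\Bb$, with Lipschitz modulus $L(1+\alpha L)$. The second is the cross-Hessian term $\alpha\Hc_{\Ab\Bb}[\nabla_{\Ab}\Lc_i]$; expanding its variation by the product rule gives terms controlled by $L\cdot\alpha L(1+\alpha L)$ and $\alpha\rho M$. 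Collecting these yields $\nabla\Phi_i$ Lipschitz with constant $L_\Phi\le L(1+\alpha L)^2+\alpha\rho M$. This identifies where the constant in the statement comes from.

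\textbf{Step 2: per-block descent.} We first treat the upper-level step. Since $\Bc_i$ is a linear subspace, \eqref{eq:update-B} is the projected gradient step, and its optimality gives
\[\langle\nabla\Phi_i(\Bb^{(k)}),\Bb^{(k+1)}-\Bb^{(k)}\rangle\le-\tfrac1\beta\nf{\Bb^{(k+1)}-\Bb^{(k)}}^2.\]
The descent lemma then gives
\[\Phi_i(\Bb^{(k+1)})\le\Phi_i(\Bb^{(k)})-\bigl(\tfrac1\beta-\tfrac{L_\Phi}{2}\bigr)\nf{\Bb^{(k+1)}-\Bb^{(k)}}^2 .\]
The lower-level step is plain gradient descent on $\Ab\mapsto\Lc_i(\Ab,\Bb)$, which yields
\[\Lc_i(\Ab^+,\Bb)\le\Lc_i(\Ab,\Bb)-\alpha\bigl(1-\tfrac{\alpha L}{2}\bigr)\nf{\nabla_{\Ab}\Lc_i}^2 .\]
We chain these through the identities $\Phi_i(\Bb^{(k)})=\Lc_i(\Ab^{(k+1)}(\Bb^{(k)}),\Bb^{(k)})\le \Lc_i(\Ab^{(k)},\Bb^{(k)})$ and $\Lc_i(\Ab_i^{(k+1)},\Bb^{(k+1)})=\Phi_i(\Bb^{(k+1)})$; the latter holds by the definition of the $\Ab$ update at $\Bb^{(k+1)}$. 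The result is
\[F^{(k+1)}\le F^{(k)}-c_1\nf{\nabla_{\Ab}\Lc_i(\Ab^{(k)},\Bb^{(k)})}^2-c_2\nf{\Bb^{(k+1)}-\Bb^{(k)}}^2,\]
where $F$ denotes the composite objective and the coefficients $c_1,c_2$ are positive under the step-size conditions. The indicator term is always zero, because every iterate is feasible. The claim about ``sharpest'' step sizes then amounts to maximizing the stated descent coefficients in $\alpha,\beta$; I expect the paper's coefficients to have the form $\alpha(1-\alpha L)$ and $\beta$-analogues, which are maximized at $1/(2L)$ and $1/(2L_\Phi)$. I would match the chosen bound form so that these claims come out.

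\textbf{Step 3: stationarity.} We telescope the decrease inequality and use the lower bound on $\Lc_i$. This gives $\sum_k\nf{\nabla_{\Ab}\Lc_i}^2<\infty$ and $\sum_k\nf{\Bb^{(k+1)}-\Bb^{(k)}}^2<\infty$, so both quantities tend to zero. The projected-gradient optimality condition shows that $\Pc^\perp_{B_i}\nabla\Phi_i(\Bb^{(k)})\to0$. Moreover $\nabla\Phi_i$ differs from $\nabla_{\Bb}\Lc_i(\Ab^{(k)},\Bb^{(k)})$ by $O(\alpha\|\nabla_{\Ab}\Lc_i\|)$ terms, using the bounds $L$ and $M$. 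Hence
\[\operatorname{dist}\bigl(0,\partial(\Lc_i+\delta_{\Bc_i})(\Ab^{(k)},\Bb^{(k)})\bigr)\to0,\]
because the normal cone of $\Bc_i$ is $\operatorname{col}(\Bb_{\mathrm{int}})$. Any limit point is therefore critical, by closedness of the limiting subdifferential. To upgrade this to convergence of the whole sequence, I would invoke the KL property, assuming it is available in Assumption~\ref{assump:smoothness} or via semi-algebraicity of the network, together with the Attouch--Bolte--Svaiter framework. This needs a relative-error bound $\operatorname{dist}(0,\partial F^{(k+1)})\le C\bigl(\|\Bb^{(k+1)}-\Bb^{(k)}\|+\|\Ab^{(k+1)}-\Ab^{(k)}\|\bigr)$, which follows from the same Lipschitz estimates, plus boundedness of the iterates.

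\textbf{Main obstacle.} The delicate step is the chaining in Step 2. The $\Ab$-descent and the $\Bb$-descent are evaluated at mismatched points, and $\Phi_i$ itself depends on $\Ab^{(k)}$, so it changes from iteration to iteration. I would handle this by fixing $k$, defining $\Phi_i^{(k)}$, and verifying carefully that $L_\Phi$ is uniform in $k$. This uniformity requires a uniform bound on $\nf{\nabla_{\Ab}\Lc_i}$ along the iterates, which I expect is exactly the role of $M$ (and possibly of a bounded sublevel set).
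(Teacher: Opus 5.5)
Your proposal is correct and follows the paper's proof essentially step for step: the same add-and-subtract estimate for $\nabla\Phi_i$ giving $L_\Phi\le L(1+\alpha L)^2+\alpha\rho M$ (in the paper $M$ bounds $\nf{\nabla_{\Ab}\Lc_i}$ and $L$ bounds the cross-Hessian, as your ``main obstacle'' paragraph eventually guessed); the same chaining through $\Phi_i(\Bb_i^{(k)})=\Lc_i(\widetilde{\Ab}_i^{(k)},\Bb_i^{(k)})$ and $\Phi_i(\Bb_i^{(k+1)})=\Lc_i(\Ab_i^{(k+1)},\Bb_i^{(k+1)})$; telescoping; and the same $O(\nf{\widetilde{\Ab}_i^{(k)}-\Ab_i^{(k)}})$ comparison of $\nabla\Phi_i$ with $\nabla_{\Bb}\Lc_i$. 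Two remarks: the paper reaches $\alpha^*=1/(2L)$, $\beta^*=1/(2L_\Phi)$ only because it uses the weaker inequality $\langle\nabla_{\Bb_i}\Phi_i(\Bb_i^{(k)}),\Bb_i^{(k+1)}-\Bb_i^{(k)}\rangle\le-\tfrac{1}{2\beta}\nf{\Bb_i^{(k+1)}-\Bb_i^{(k)}}^2$ (and its $\Ab$-analogue), which yields coefficients $\tfrac{\alpha}{2}(1-\alpha L)$ and $\tfrac{\beta}{2}(1-\beta L_\Phi)$, whereas your sharper coefficients $\alpha-\tfrac{L}{2}\alpha^2$ and $\beta-\tfrac{L_\Phi}{2}\beta^2$ peak at the excluded endpoints $1/L$, $1/L_\Phi$, so the ``sharpest decrease'' claim is an artifact of the looser bound; and the paper stops at vanishing stationarity residuals, so your KL upgrade to whole-sequence convergence needs hypotheses (the KL property and bounded iterates) that Assumption~\ref{assump:smoothness} does not supply.
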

\begin{proof}
    See Appendix \ref{sec:proofthm}.
\end{proof}
Building on our linear warm-up (Example \ref{exa:linearwarmup}), we now bound end-to-end catastrophic forgetting in deep, non-linear diffusion models. Modeling the uncaptured layer-wise residuals as a stochastic
sub-Gaussian process, the following theorem establishes a high-probability network-wide bound.
\begin{theorem}[Hanson-Wright High-Probability Bound on Catastrophic Forgetting] \label{thm:hanson_wright_catas}
Under Assumption \ref{assum:multilayer_subgaussian} (see Appendix~\ref{sec:mathfound}), let $\Gamma_\ell \triangleq \prod\limits_{\ell'=\ell+1}^{L} \gamma_{\ell'} \nt{\Wb_T^{(\ell')}}$ be the downstream amplification factor at layer $\ell$ and let $\Cb_j^{(\ell)} = \sum\limits_{k=j+1}^T\Ab_k^{(\ell)}(\Bb_k^{(\ell)})^\top$ be the crosstalk operator for concept $j$. For any $\xi \in (0,1)$, there exists an absolute universal constant $c > 0$ such that the end-to-end catastrophic forgetting satisfies, with probability at least $1-\xi$:
\begin{equation}\label{eq:e2e_boundfin}
    \Lc_j(\theta_T) - \Lc_j(\theta_j) \le L_o \sum_{\ell=1}^{L} \Gamma_\ell \sqrt{\Tr(\Psib_j^{(\ell)})~\nf{\Qb_j^{(\ell)}}^2 + t^{(\ell)}},
\end{equation}
where $\theta_T=\{\Wb_T^{(\ell)}\}_{\ell=1}^L$ and $\theta_j=\{\Wb_j^{(\ell)}\}_{\ell=1}^L$ are the full model's parameters after learning concept $T$ and $j$, respectively, and
\begin{equation}
     t^{(\ell)} = \max \left( \sqrt{C_1} K^2 \nf{\Psib_j^{(\ell)}}\nf{(\Qb_j^{(\ell)})^\top\Qb_j^{(\ell)}} \sqrt{\log \frac{2}{\xi}}, \,\, C_1 K^2 \nt{\Psib_j^{(\ell)}}\nt{\Qb_j^{(\ell)}}^2 \log \frac{2}{\xi} \right),
\end{equation}
 $C_1 = 1/c$, and $\Qb_j^{(\ell)} \triangleq \Cb_j^{(\ell)} (\Sigmab_j^{(\ell),\perp})^{1/2} \in \Rbb^{n_\ell \times m_\ell}$. Additionally, minimizing the expected residual energy $\Tr(\Psib_j^{(\ell)}) \Tr(\Sigmab_j^{(\ell),\perp})$ requires the rank-$r$ basis $\Bb_j^{(\ell)}$ to span the top-$r$ principal components of $\boldsymbol{\Sigma}_j^{(\ell)}$ within the available orthogonal complement. Thus, randomly frozen bases are generically suboptimal, yielding higher residual energy unless the remaining feature space is isotropic.
\end{theorem}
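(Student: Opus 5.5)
The plan is to reduce end-to-end forgetting to a sum of layer-wise output perturbations, and then bound each layer's perturbation with Hanson--Wright.

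\textbf{Step 1: telescoping and Lipschitz propagation.} By Assumption~\ref{assum:multilayer_subgaussian}, the output loss is $L_o$-Lipschitz, each activation is $\gamma_\ell$-Lipschitz, and the layer inputs for concept $j$ decompose into a captured part plus a sub-Gaussian residual. Write $\theta_T$ and $\theta_j$ as differing at layer $\ell$ by $\Wb_T^{(\ell)}-\Wb_j^{(\ell)}=\Cb_j^{(\ell)}$. Introduce hybrid networks that switch one layer at a time from $\theta_j$ to $\theta_T$. A standard telescoping argument then gives
\[
\Lc_j(\theta_T)-\Lc_j(\theta_j)\le L_o\sum_{\ell}\Gamma_\ell\,\|\Cb_j^{(\ell)}\xb^{(\ell)}\|.
\]
Each layer-$\ell$ discrepancy is pushed through the downstream layers, whose Lipschitz constants multiply to $\Gamma_\ell=\prod_{\ell'>\ell}\gamma_{\ell'}\|\Wb_T^{(\ell')}\|_2$. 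The layer-$\ell$ input is taken in the mean-square sense under the assumption's model.

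\textbf{Step 2: reduce the crosstalk to the residual component.} Orthogonality gives $\Bb_k^\top\Bb_j=\mathbf 0$ for $k>j$. Under the assumption, the captured component of $\xb^{(\ell)}$ lies in $\mathrm{col}(\Bb_j^{(\ell)})$ and is therefore annihilated by $\Cb_j^{(\ell)}$. Only the residual survives, which I model as $\Zb(\Sigmab_j^{(\ell),\perp})^{1/2}$ with a sub-Gaussian matrix $\Zb$. Its row covariance is $\Psib_j^{(\ell)}$ and its entries have $\psi_2$-norm at most $K$.

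The crosstalk energy is then $\|\Qb_j^{(\ell)}\Zb^\top\|_F^2$, a quadratic form in $\mathrm{vec}(\Zb)$ with matrix $\Psib\otimes\Qb^\top\Qb$ (after whitening). Its mean is $\Tr(\Psib_j^{(\ell)})\nf{\Qb_j^{(\ell)}}^2$. The Kronecker structure gives the norms needed below:
\[
\nf{\Psib\otimes\Qb^\top\Qb}=\nf{\Psib}\,\nf{\Qb^\top\Qb},\qquad \nt{\Psib\otimes\Qb^\top\Qb}=\nt{\Psib}\,\nt{\Qb}^2.
\]

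\textbf{Step 3: Hanson--Wright and the union over layers.} Apply Hanson--Wright in the form
\[
\Pr\bigl(|X-\EX X|>s\bigr)\le 2\exp\!\Bigl(-c\min\Bigl(\frac{s^2}{K^4\nf{\cdot}^2},\frac{s}{K^2\nt{\cdot}}\Bigr)\Bigr).
\]
Setting the right-hand side equal to $\xi$ and solving for $s$ gives exactly $t^{(\ell)}$, as the maximum of the two regimes with $C_1=1/c$. Taking square roots yields the per-layer bound, and substituting into Step 1 gives \eqref{eq:e2e_boundfin}. To get a single probability $1-\xi$ across layers, one either takes a union bound (replacing $\xi$ by $\xi/L$, absorbed into the constants) or reads the assumption as a joint residual whose deviation is controlled once. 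I would note this caveat explicitly.

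\textbf{Step 4: optimality of the basis.} The expected term is governed by $\Tr(\Sigmab_j^{\perp})=\Tr(\Sigmab_j)-\Tr(\Pc_{B_j}\Sigmab_j)$. Maximizing $\Tr(\Pc\Sigmab_j)$ over rank-$r$ projectors restricted to the available complement of earlier bases is a Ky Fan problem. Its maximizers span the top-$r$ eigenvectors of the compressed covariance, which is the same argument as in Example~\ref{exa:linearwarmup} and Theorem~\ref{thm:linear_residual_forgetting}. A random basis attains this maximum only with probability zero, unless the compressed spectrum is flat, i.e.\ isotropic.

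\textbf{Main obstacle.} I expect Step 1 to be hardest: justifying that the captured component is exactly annihilated, and that the nonlinear propagation uses $\|\Wb_T^{(\ell')}\|_2$ uniformly across the hybrid networks. I would first try to absorb both into Assumption~\ref{assum:multilayer_subgaussian}.
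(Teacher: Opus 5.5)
Your plan is essentially the paper's proof. Orthogonality reduces the crosstalk to $\Qb_j^{(\ell)}\Zb^{(\ell)}(\Psib_j^{(\ell)})^{1/2}$; write the residual in the assumption's form $(\Sigmab_j^{(\ell),\perp})^{1/2}\Zb^{(\ell)}(\Psib_j^{(\ell)})^{1/2}$, since your $\Zb(\Sigmab_j^{(\ell),\perp})^{1/2}$ with independent entries carries no token correlation. Hanson--Wright with the Kronecker matrix $\Psib_j^{(\ell)}\otimes(\Qb_j^{(\ell)})^\top\Qb_j^{(\ell)}$ is then inverted as a maximum of two regimes, exactly as you describe.

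Your one-layer-at-a-time telescoping is equivalent to the paper's unrolled recursion $e_\ell\le\gamma_\ell\nt{\Wb_T^{(\ell)}}e_{\ell-1}+\nf{\Deltab_j^{(\ell)}}$, provided you switch layers starting from the output side. Then each layer-$\ell$ input is the clean $\Xb_j^{(\ell)}$, all downstream layers carry $\Wb_T$, and this also settles your "main obstacle." No mean-square step is needed, because the propagation is pathwise.

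The basis claim is Ky Fan on the compressed covariance, as in the paper. The paper quantifies the random-basis gap in expectation via $\EX[\Pc_{\mathrm{rand}}]=\frac{r}{d_{\mathrm{free},j}}\Pc_{\mathrm{free},j}^{(\ell)}$, whereas you argue almost-sure suboptimality. Also insert the factor $\Tr(\Psib_j^{(\ell)})$ in your identity: it should read $\Tr(\Psib_j^{(\ell)})\Tr(\Sigmab_j^{(\ell),\perp})=\Tr(\Pc_{B_j}^{(\ell),\perp}\Sigmab_j^{(\ell)})$.

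Your union-bound caveat is well placed and applies equally to the paper, which also takes $\xi/L$ per layer. The resulting $\log(2L/\xi)$ in $t^{(\ell)}$ cannot be absorbed into an absolute constant $c$, contrary to your suggestion. So strictly, the stated bound should carry $\log(2L/\xi)$.
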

\begin{proof}
    See Appendix \ref{sec:prooftheorem2}.
\end{proof}
The forgetting bound in Theorem~\ref{thm:hanson_wright_catas} is controlled at each layer by the residual covariance
$\Sigmab_j^{(\ell),\perp}$, which measures the activation energy that \emph{leaks} outside concept $j$'s learned
subspace $\mathrm{col}(\Bb_j^{(\ell)})$. Orthogonality ensures this leaked energy is the only source of
interference.
Additionally, because $\Xb_j^{(\ell)}$ is produced by the network $\theta_j$ optimized for concept $j$, the LoRA bases across layers work in concert; features not captured by $\Bb_j^{(\ell-1)}$ can be absorbed by $\Bb_j^{(\ell)}$, a process which encourages the per-layer residual covariance
 to remain small throughout the network depth. The bound is minimized when each $\Bb_j^{(\ell)}$ aligns with the top principal components of its local activations,
extending the linear warm-up (Example~\ref{exa:linearwarmup}) to the non-linear regime. SeqLoRA does not explicitly compute this top eigenspace to minimize interference. Instead, by minimizing the end-to-end task loss, the algorithm implicitly seeks this maximal-energy subspace to reconstruct the concept, which happily coincides with the optimal projection that suppresses catastrophic forgetting. A detailed physical interpretation appears in
Appendix~\ref{subsec:interpre}.
\section{Experiments}\label{sec:experiments}
We evaluate SeqLoRA against state-of-the-art baselines: Mix-of-Show~\citep{gu2023mix}, Orthogonal Adaptation~\citep{po2024orthogonal}, LoRACLR~\citep{simsar2025loraclr}, and \texttt{Continual Alternating} (see Subsection~\ref{subsec:bilevel}) to isolate the contribution of the cross-Hessian coupling. To assess scalability,
we vary the number of concepts from 8 to 101. All methods are evaluated using prompts from the
CustomConcept101 dataset~\citep{kumari2023multi}, generating 16 images per prompt for metric computation.

We evaluate identity preservation with DINO/DINOv2/DINOv3, holistic visual similarity with DreamSim, image-text alignment with CLIP-I/CLIP-T, and visual quality with HPSv2/HPSv3. Full metric definitions appear in Appendix~\ref{app:metrics}.
\paragraph{Experimental Setup.} We use Stable Diffusion v1.5 as the base
text-to-image model, consistent with Mix-of-Show~\citep{gu2023mix}. All methods are fine-tuned on the full multi-concept stream. Quantitative evaluations probe the resulting model by generating each concept individually to measure per-concept performance; Appendix~\ref{sec:supp_regional} presents a qualitative 
compositional study where multiple concepts are rendered together under regional sketch and keypose conditioning. For SeqLoRA (Algorithm~\ref{alg:seqlora}), default hyperparameters are: bilevel iterations $K=3$, inner steps $2$,
regularization weight $\epsilon = 10^{-8}$. All experiments use $2\times$ NVIDIA A100 GPUs. We report mean $\pm$ standard error across concepts in tables and mean
performance in figures; details in
Appendix~\ref{sec:appendix_eval_methodology}.
\paragraph{Efficiency.} Parallel methods (LoRACLR, Mix-of-Show, Orthogonal Adaptation) train per-concept adapters independently (${\sim}78$s each) and fuse them post-hoc, while SeqLoRA and Continual Alternating learn sequentially in a single model with no fusion step. The accumulating orthogonality constraints raise the per-concept cost to ${\sim}454$s for Continual Alternating and ${\sim}660$s for SeqLoRA (the latter due to cross-Hessian evaluations). However, while parallel methods can leverage distributed computing to reduce wall-clock training time (at the cost of higher total compute), they require a complex and often time-consuming optimization step to fuse the independent adapters into a single multi-concept model. At $32$ concepts, post-hoc fusion itself takes ${\sim}33$ min for Mix-of-Show and ${\sim}8$ min for LoRACLR (Orthogonal Adaptation fuses in $15$s via simple summation), an overhead SeqLoRA avoids entirely while also supporting truly sequential lifelong learning without data accumulation.

In Table~\ref{tab:overall_comparison_32}, we provide a detailed comparison for the 32-concept experiment. SeqLoRA (Ours) outperforms all other methods in identity preservation and perceptual similarity metrics (CLIP-I, DINO, DINOv2, DINOv3, DreamSim), while remaining competitive on text alignment (CLIP-T) and human preference scores (HPSv2, HPSv3). This highlights the effectiveness of our bilevel optimization in preserving learned concepts without complex fusion steps.

To visualize scalability, we report the mean performance across all concepts at the end of training for different concept counts in Figure~\ref{fig:overall_comparison} (see Table~\ref{tab:supp_overall} for full quantitative). We also successfully scale SeqLoRA to 101 concepts, observing consistent trends in identity preservation and visual quality, while other methods like Mix-of-Show and LoRACLR fail due to OOM (Out of Memory) errors at this scale (detailed results are provided in the Appendix). To analyze forgetting behavior, we plot the step-based mean performance of all visible concepts at each step for the 32-concept experiment in Figure~\ref{fig:forgetting} for SeqLoRA and Continual Alternating. This step-based view reflects the overall health of the system during sequential learning.
Figure~\ref{fig:forgetting} reveals that while text alignment (CLIP-T) remains comparable, SeqLoRA shows comparable or better stability across visual fidelity metrics (CLIP-I, DINO variants, and DreamSim), highlighting the stability gained from bilevel optimization. We summarize the mean forgetting metrics for SeqLoRA across different concept counts in Table~\ref{tab:summary_forgetting_new} and visualize the evolution of generated images across steps in Figure~\ref{fig:forgetting_evolution_main}. To ensure a fair visual comparison, the same random seed and the most basic prompt template (e.g., ``photo of a $<\text{concept}>$'') are used for image generation across all training steps for a given concept.

Additional experimental details, including qualitative comparison, an ablation study on hyperparameters, and qualitative forgetting analysis, are provided in the Appendix.
\section{Limitations and Broader Impact}\label{sec:limitations}
\paragraph{Limitations.}
The orthogonality constraint limits the number of concepts per layer to $\lfloor m/r \rfloor$; however, with typical layer dimensions (e.g., $m{=}768$, $r{=}4$), this accommodates up to 192 concepts, well beyond the scales we evaluate and most practical scenarios.
The bilevel formulation introduces a cross-Hessian--vector product per iteration that increases the per-concept training cost relative to alternating minimization; however, this overhead is more than offset for sequential, modular use cases by avoiding the costly post-hoc fusion step required by parallel methods (e.g., ${\sim}33$ minutes for Mix-of-Show at 32 concepts).
Our theoretical results rely on Assumptions~\ref{assump:smoothness} and~\ref{assum:multilayer_subgaussian}, which are mild standard conditions discussed in Appendix~\ref{sec:mathfound}.
Current experiments use Stable Diffusion v1.5 for consistency with prior work; extending to newer architectures is a natural next step.
Finally, while SeqLoRA processes concepts sequentially, exploring robustness to concept ordering is an interesting direction for future investigation.
\paragraph{Broader Impact.}
SeqLoRA advances multi-concept personalization of text-to-image diffusion models, which has broad positive applications in creative design, media production, and accessibility. As with all generative modeling research, improvements in image fidelity could potentially be misused for generating misleading visual content. However, SeqLoRA does not introduce new generative capabilities beyond what existing diffusion models already provide; it improves the composability of personalized adaptations within established frameworks. We encourage the responsible use of such technology in accordance with applicable guidelines and regulations.
\section{Conclusion}\label{sec:conclusion}
This paper introduces SeqLoRA, a novel framework that resolves the expressiveness-interference trade-off in continual multi-concept generation. By formulating adaptation as a constrained bilevel optimization problem, SeqLoRA dynamically optimizes both LoRA factors while enforcing exact subspace orthogonality. We prove convergence guarantees and establish theoretical bounds on catastrophic forgetting, and theoretically prove that optimizing both LoRA bases is superior to freezing the projection subspace. Our comprehensive experiments on up to 101 concepts demonstrate that SeqLoRA outperforms existing methods in identity preservation and visual quality, while maintaining competitive text alignment. Notably, some competing methods encounter OOM errors at larger concept counts, whereas SeqLoRA scales gracefully. Furthermore, bilevel optimization shows clear superiority over standard alternating minimization for solving the constrained problem.
\begin{table}[H]
    \centering
    
    \begin{subtable}{0.95\textwidth}
        \centering
        \caption{Overall performance comparison for 32 concepts across different methods. Best results are in bold, second best are underlined.}
        \vspace{-0.2cm}
        \resizebox{\textwidth}{!}{%
            \begin{tabular}{@{}lcccccccc@{}}
            \toprule
             & \multicolumn{5}{c}{Reference Similarity Metrics} & \multicolumn{3}{c}{Prompt Following \& Aesthetics} \\
            \cmidrule(lr){2-6} \cmidrule(lr){7-9}
            Methods & DINO$\uparrow$ & DINOv2$\uparrow$ & DINOv3$\uparrow$ & DreamSim$\uparrow$ & CLIP-I$\uparrow$ & CLIP-T$\uparrow$ & HPSv2$\uparrow$ & HPSv3$\uparrow$ \\
            \midrule
            Continual Alternating  & \underline{0.463} $\pm$ 0.020  & \underline{0.412} $\pm$ 0.017  & \underline{0.406} $\pm$ 0.015  & \underline{0.470} $\pm$ 0.015  & \underline{0.688} $\pm$ 0.008  & \textbf{0.287} $\pm$ 0.004  & 0.271 $\pm$ 0.001  & \textbf{9.242} $\pm$ 0.338 \\
            Mix-of-Show  & 0.436 $\pm$ 0.023  & 0.377 $\pm$ 0.020  & 0.378 $\pm$ 0.017  & 0.448 $\pm$ 0.017  & 0.677 $\pm$ 0.010  & 0.276 $\pm$ 0.003  & \underline{0.271} $\pm$ 0.001  & 8.039 $\pm$ 0.457 \\
            Orthogonal Adaptation  & 0.428 $\pm$ 0.023  & 0.361 $\pm$ 0.020  & 0.359 $\pm$ 0.017  & 0.442 $\pm$ 0.018  & 0.671 $\pm$ 0.009  & 0.277 $\pm$ 0.003  & 0.271 $\pm$ 0.001  & 8.336 $\pm$ 0.432 \\
            LoRACLR  & 0.434 $\pm$ 0.023  & 0.372 $\pm$ 0.020  & 0.371 $\pm$ 0.017  & 0.445 $\pm$ 0.018  & 0.674 $\pm$ 0.010  & 0.277 $\pm$ 0.003  & \textbf{0.271} $\pm$ 0.001  & 8.382 $\pm$ 0.431 \\
            \textbf{SeqLoRA (Ours)}  & \textbf{0.468} $\pm$ 0.021  & \textbf{0.418} $\pm$ 0.019  & \textbf{0.418} $\pm$ 0.016  & \textbf{0.477} $\pm$ 0.016  & \textbf{0.693} $\pm$ 0.009  & \underline{0.286} $\pm$ 0.003  & 0.270 $\pm$ 0.001  & \underline{9.043} $\pm$ 0.399 \\
            \bottomrule
            \end{tabular}%
        }
        \label{tab:overall_comparison_32}
    \end{subtable}
    \begin{subtable}{0.95\textwidth}
        \centering
        \caption{Summary of forgetting metrics for SeqLoRA across different concept counts.}
        \vspace{-0.2cm}
        \resizebox{\textwidth}{!}{%
            \begin{tabular}{@{}llcccccccc@{}}
            \toprule
            \# of Concepts & Measure & CLIP-I & CLIP-T & DINO & DINOv2 & DINOv3 & DreamSim & HPSv2 & HPSv3 \\
            \midrule
            \multirow[c]{3}{*}{8 Concepts}  & Initial  & 0.709 $\pm$ 0.018 & 0.286 $\pm$ 0.005 & 0.485 $\pm$ 0.047 & 0.491 $\pm$ 0.049 & 0.468 $\pm$ 0.039 & 0.495 $\pm$ 0.040 & 0.270 $\pm$ 0.003 & 9.513 $\pm$ 0.579 \\
              & Final  & 0.695 $\pm$ 0.014 & 0.289 $\pm$ 0.005 & 0.462 $\pm$ 0.041 & 0.465 $\pm$ 0.041 & 0.442 $\pm$ 0.030 & 0.478 $\pm$ 0.035 & 0.271 $\pm$ 0.003 & 9.539 $\pm$ 0.588 \\
             & Forgetting & 0.014 $\pm$ 0.006 & -0.002 $\pm$ 0.001 & 0.022 $\pm$ 0.009 & 0.025 $\pm$ 0.009 & 0.026 $\pm$ 0.010 & 0.017 $\pm$ 0.008 & -0.001 $\pm$ 0.000 & -0.025 $\pm$ 0.020 \\
            \midrule
            \multirow[c]{3}{*}{16 Concepts}  & Initial  & 0.703 $\pm$ 0.013 & 0.281 $\pm$ 0.004 & 0.492 $\pm$ 0.030 & 0.436 $\pm$ 0.032 & 0.437 $\pm$ 0.027 & 0.493 $\pm$ 0.026 & 0.268 $\pm$ 0.002 & 9.494 $\pm$ 0.416 \\
              & Final  & 0.694 $\pm$ 0.012 & 0.284 $\pm$ 0.004 & 0.474 $\pm$ 0.029 & 0.416 $\pm$ 0.028 & 0.414 $\pm$ 0.022 & 0.478 $\pm$ 0.024 & 0.269 $\pm$ 0.002 & 9.476 $\pm$ 0.424 \\
             & Forgetting & 0.009 $\pm$ 0.004 & -0.002 $\pm$ 0.001 & 0.018 $\pm$ 0.007 & 0.020 $\pm$ 0.007 & 0.022 $\pm$ 0.008 & 0.014 $\pm$ 0.005 & -0.000 $\pm$ 0.000 & 0.019 $\pm$ 0.029 \\
            \midrule
            \multirow[c]{3}{*}{32 Concepts}  & Initial  & 0.696 $\pm$ 0.009 & 0.288 $\pm$ 0.003 & 0.465 $\pm$ 0.022 & 0.413 $\pm$ 0.021 & 0.420 $\pm$ 0.018 & 0.481 $\pm$ 0.017 & 0.270 $\pm$ 0.001 & 9.044 $\pm$ 0.396 \\
              & Final  & 0.687 $\pm$ 0.009 & 0.290 $\pm$ 0.003 & 0.452 $\pm$ 0.022 & 0.398 $\pm$ 0.019 & 0.402 $\pm$ 0.017 & 0.469 $\pm$ 0.016 & 0.270 $\pm$ 0.001 & 9.043 $\pm$ 0.399 \\
             & Forgetting & 0.008 $\pm$ 0.003 & -0.001 $\pm$ 0.000 & 0.013 $\pm$ 0.004 & 0.016 $\pm$ 0.005 & 0.017 $\pm$ 0.005 & 0.012 $\pm$ 0.004 & -0.000 $\pm$ 0.000 & 0.001 $\pm$ 0.025 \\
            \bottomrule
            \end{tabular}%
        }
        \label{tab:summary_forgetting_new}
    \end{subtable}
    \caption{Quantitative evaluation.}
    \label{tab:quantitative_tables_main}
    \vspace{-0.8cm}
\end{table}

\begin{figure}[H] 
        \centering
        \includegraphics[width=\textwidth]{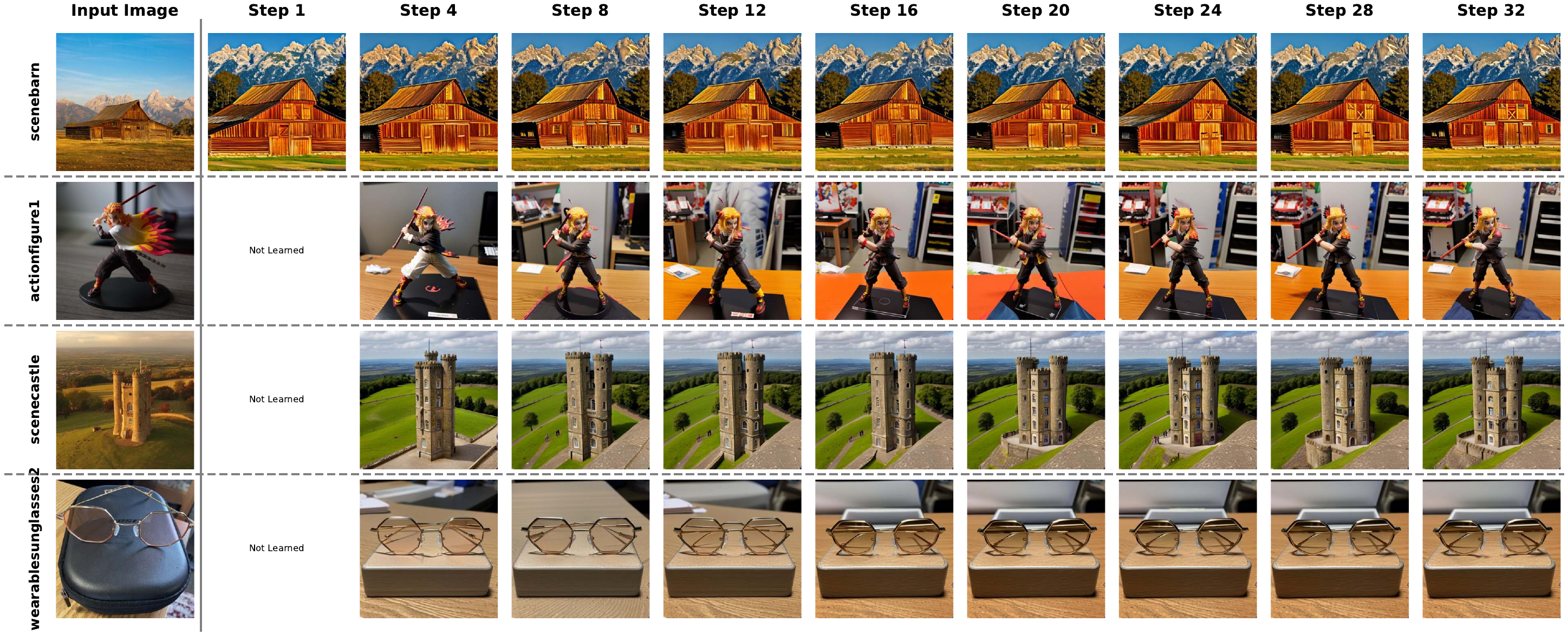}
        \caption{Evolution of generated images for selected concepts across different training steps for SeqLoRA. We show the input image followed by generated samples at steps 1, 4, 8, 12, 16, 20, 24, 28, and 32.}
        \label{fig:forgetting_evolution_main}
\end{figure}
\begin{figure}[H]
    \centering
    \begin{subfigure}{0.95\textwidth}
        \centering
        \includegraphics[width=\textwidth]{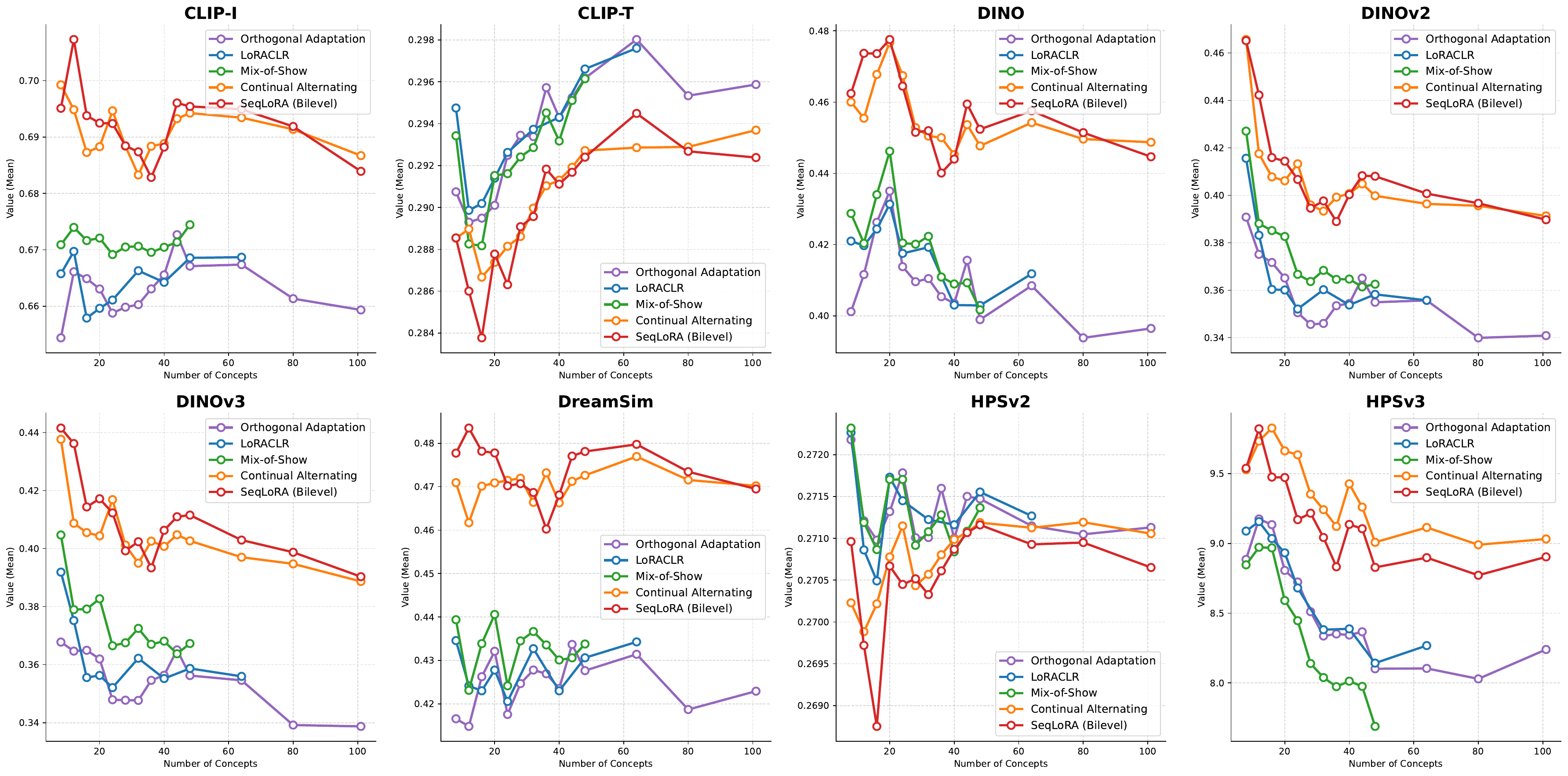}
        \caption{Overall performance comparison across different numbers of concepts (from 8 to 101). Metrics are averaged across all targets at the end of training.}
        \label{fig:overall_comparison}
    \end{subfigure}
    \begin{subfigure}{\textwidth}
        \centering
        \includegraphics[width=0.95\textwidth]{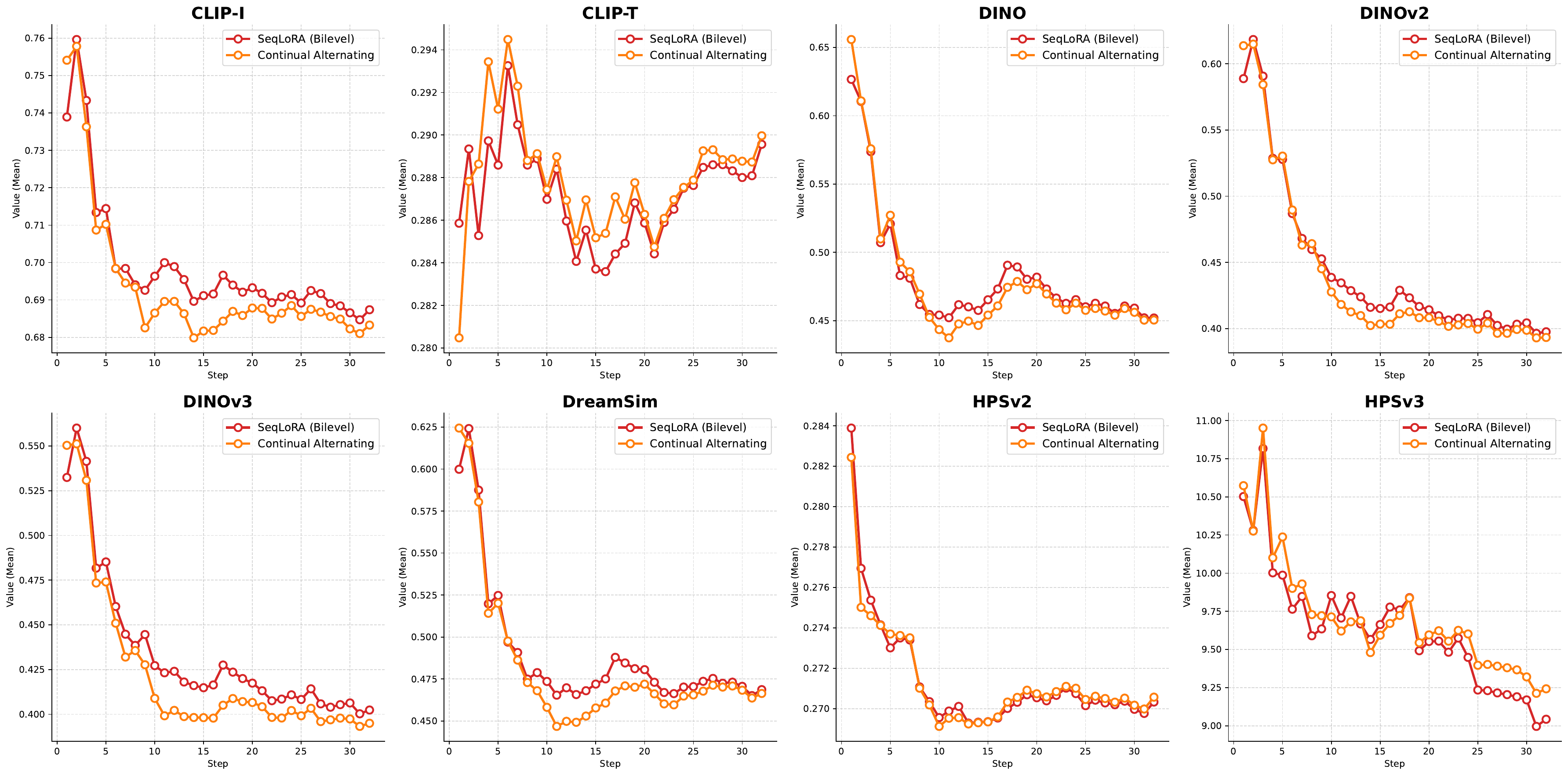}
        \caption{Forgetting trajectory for the 32-concept experiment. The plot shows the step-based average performance of all concepts available at each step, reflecting the overall health of the system during sequential learning.}
        \label{fig:forgetting}
    \end{subfigure}
    \caption{SeqLoRA's scalability and stability.}
\end{figure}
\bibliography{myref}

@inproceedings{hu2022lora,
  title     = {{LoRA}: Low-Rank Adaptation of Large Language Models},
  author    = {Hu, Edward J. and Shen, Yelong and Wallis, Phillip and Allen-Zhu, Zeyuan and Li, Yuanzhi and Wang, Shean and Wang, Lu and Chen, Weizhu},
  booktitle = {International Conference on Learning Representations (ICLR)},
  year      = {2022}
}

@article{articlesubgradient,
author = {Attouch, Hedy and Bolte, Jérôme and Svaiter, Benar},
year = {2011},
month = {01},
pages = {},
title = {Convergence of descent methods for semi-algebraic and tame problems: Proximal algorithms, forward-backward splitting, and regularized Gauss-Seidel methods},
volume = {137},
journal = {Mathematical Programming},
doi = {10.1007/s10107-011-0484-9}
}

@book{dempe2002foundations,
  title={Foundations of Bilevel Programming},
  author={Dempe, Stephan},
  year={2002},
  publisher={Springer Science \& Business Media},
  series={Nonconvex Optimization and Its Applications},
  volume={61},
  doi={10.1007/978-1-4615-1503-8}
}

@article{drusvyatskiy2019efficiency,
  title={Efficiency of the prox-linear algorithm for composite optimization},
  author={Drusvyatskiy, Dmitriy and Lewis, Adrian S and Paquette, Catherine A},
  journal={Mathematical Programming},
  volume={178},
  pages={503--558},
  year={2019},
  publisher={Springer}
}

@article{razaviyayn2013bsum,
  title={A unified convergence analysis of block successive minimization methods for nonsmooth optimization},
  author={Razaviyayn, Meisam and Hong, Mingyi and Luo, Zhi-Quan},
  journal={SIAM Journal on Optimization},
  volume={23},
  number={2},
  pages={1126--1153},
  year={2013},
  publisher={SIAM}
}

@article{bolte:hal-00916090,
  TITLE = {{Proximal alternating linearized minimization for nonconvex and nonsmooth problems}},
  AUTHOR = {Bolte, Jerome and Sabach, Shoham and Teboulle, Marc},
  URL = {https://inria.hal.science/hal-00916090},
  JOURNAL = {{Mathematical Programming}},
  PUBLISHER = {{Springer Verlag}},
  VOLUME = {146},
  NUMBER = {1-2},
  PAGES = {459-494},
  YEAR = {2014},
  DOI = {10.1007/s10107-013-0701-9},
  HAL_ID = {hal-00916090},
  HAL_VERSION = {v1},
}

@article{ho2020denoising,
  title={Denoising diffusion probabilistic models},
  author={Ho, Jonathan and Jain, Ajay and Abbeel, Pieter},
  journal={Advances in neural information processing systems},
  volume={33},
  pages={6840--6851},
  year={2020}
}

@inproceedings{kumari2023multi,
  title={Multi-concept customization of text-to-image diffusion},
  author={Kumari, Nupur and Zhang, Bingliang and Zhang, Richard and Shechtman, Eli and Zhu, Jun-Yan},
  booktitle={Proceedings of the IEEE/CVF conference on computer vision and pattern recognition},
  pages={1931--1941},
  year={2023}
}

@inproceedings{liang2024inflora,
  title={Inflora: Interference-free low-rank adaptation for continual learning},
  author={Liang, Yan-Shuo and Li, Wu-Jun},
  booktitle={Proceedings of the IEEE/CVF Conference on Computer Vision and Pattern Recognition},
  pages={23638--23647},
  year={2024}
}

@inproceedings{avrahami2023break,
  title={Break-a-scene: Extracting multiple concepts from a single image},
  author={Avrahami, Omri and Aberman, Kfir and Fried, Ohad and Cohen-Or, Daniel and Lischinski, Dani},
  booktitle={SIGGRAPH Asia 2023 Conference Papers},
  pages={1--12},
  year={2023}
}

@article{saharia2022photorealistic,
  title={Photorealistic text-to-image diffusion models with deep language understanding},
  author={Saharia, Chitwan and Chan, William and Saxena, Saurabh and Li, Lala and Whang, Jay and Denton, Emily L and Ghasemipour, Kamyar and Gontijo Lopes, Raphael and Karagol Ayan, Burcu and Salimans, Tim and others},
  journal={Advances in neural information processing systems},
  volume={35},
  pages={36479--36494},
  year={2022}
}

@article{ramesh2022hierarchical,
  title   = {Hierarchical Text-Conditional Image Generation with {CLIP} Latents},
  author  = {Ramesh, Aditya and Dhariwal, Prafulla and Nichol, Alex and Chu, Casey and Chen, Mark},
  journal = {arXiv preprint arXiv:2204.06125},
  year    = {2022}
}

@inproceedings{po2024state,
  title={State of the art on diffusion models for visual computing},
  author={Po, Ryan and Yifan, Wang and Golyanik, Vladislav and Aberman, Kfir and Barron, Jonathan T and Bermano, Amit and Chan, Eric and Dekel, Tali and Holynski, Aleksander and Kanazawa, Angjoo and others},
  booktitle={Computer graphics forum},
  volume={43},
  pages={e15063},
  year={2024},
  organization={Wiley Online Library}
}

@article{luo2025taming,
  title={Taming diffusion models for image restoration: a review},
  author={Luo, Ziwei and Gustafsson, Fredrik and Zhao, Zheng and Sj{\"o}lund, Jens and Sch{\"o}n, Thomas},
  journal={Philosophical Transactions of the Royal Society A: Mathematical, Physical and Engineering Sciences},
  volume={383},
  number={2299},
  year={2025},
  publisher={The Royal Society}
}

@inproceedings{gal2023image,
  title={An Image is Worth One Word: Personalizing Text-to-Image Generation using Textual Inversion},
  author={Gal, Rinon and Alaluf, Yuval and Atzmon, Yuval and Patashnik, Or and Bermano, Amit Haim and Chechik, Gal and Cohen-Or, Daniel},
  booktitle={The Eleventh International Conference on Learning Representations},
  year    = {2023}
}

@inproceedings{ruiz2023dreambooth,
  title={Dreambooth: Fine tuning text-to-image diffusion models for subject-driven generation},
  author={Ruiz, Nataniel and Li, Yuanzhen and Jampani, Varun and Pritch, Yael and Rubinstein, Michael and Aberman, Kfir},
  booktitle={Proceedings of the IEEE/CVF conference on computer vision and pattern recognition},
  pages={22500--22510},
  year={2023}
}

@book{Vershynin_2026,
author = {Vershynin, Roman},
title = {High-Dimensional Probability: An Introduction with Applications in Data Science},
edition = {2},
publisher = {Cambridge University Press},
address = {Cambridge},
series = {Cambridge Series in Statistical and Probabilistic Mathematics},
year = {2026}
}

@article{dawid1981some,
  title={Some matrix-variate distribution theory: notational considerations and a Bayesian application},
  author={Dawid, A Philip},
  journal={Biometrika},
  volume={68},
  number={1},
  pages={265--274},
  year={1981},
  publisher={Oxford University Press}
}

@book{gupta2018matrix,
  author    = {Gupta, A. K. and Nagar, D. K.},
  title     = {Matrix Variate Distributions},
  publisher = {Chapman \& Hall/CRC},
  year      = {2000}
}

@inproceedings{caron2021emerging,
  title={Emerging properties in self-supervised vision transformers},
  author={Caron, Mathilde and Touvron, Hugo and Misra, Ishan and J{\'e}gou, Herv{\'e} and Mairal, Julien and Bojanowski, Piotr and Joulin, Armand},
  booktitle={Proceedings of the IEEE/CVF international conference on computer vision},
  pages={9650--9660},
  year={2021}
}

@inproceedings{radford2021learning,
  title={Learning transferable visual models from natural language supervision},
  author={Radford, Alec and Kim, Jong Wook and Hallacy, Chris and Ramesh, Aditya and Goh, Gabriel and Agarwal, Sandhini and Sastry, Girish and Askell, Amanda and Mishkin, Pamela and Clark, Jack and others},
  booktitle={International conference on machine learning},
  pages={8748--8763},
  year={2021},
  organization={PmLR}
}

@inproceedings{fallah2020convergence,
  title={On the convergence theory of gradient-based model-agnostic meta-learning algorithms},
  author={Fallah, Alireza and Mokhtari, Aryan and Ozdaglar, Asuman},
  booktitle={International conference on artificial intelligence and statistics},
  pages={1082--1092},
  year={2020},
  organization={PMLR}
}

@article{oquab2023dinov2,
  title={Dinov2: Learning robust visual features without supervision},
  author={Oquab, Maxime and Darcet, Timoth{\'e}e and Moutakanni, Th{\'e}o and Vo, Huy and Szafraniec, Marc and Khalidov, Vasil and Fernandez, Pierre and Haziza, Daniel and Massa, Francisco and El-Nouby, Alaaeldin and others},
  journal={arXiv preprint arXiv:2304.07193},
  year={2023}
}

@article{simeoni2025dinov3,
  title={Dinov3},
  author={Sim{\'e}oni, Oriane and Vo, Huy V and Seitzer, Maximilian and Baldassarre, Federico and Oquab, Maxime and Jose, Cijo and Khalidov, Vasil and Szafraniec, Marc and Yi, Seungeun and Ramamonjisoa, Micha{\"e}l and others},
  journal={arXiv preprint arXiv:2508.10104},
  year={2025}
}

@article{fu2023dreamsim,
  title={Dreamsim: Learning new dimensions of human visual similarity using synthetic data},
  author={Fu, Stephanie and Tamir, Netanel and Sundaram, Shobhita and Chai, Lucy and Zhang, Richard and Dekel, Tali and Isola, Phillip},
  journal={arXiv preprint arXiv:2306.09344},
  year={2023}
}

@article{wu2023human,
  title={Human preference score v2: A solid benchmark for evaluating human preferences of text-to-image synthesis},
  author={Wu, Xiaoshi and Hao, Yiming and Sun, Keqiang and Chen, Yixiong and Zhu, Feng and Zhao, Rui and Li, Hongsheng},
  journal={arXiv preprint arXiv:2306.09341},
  year={2023}
}

@inproceedings{ma2025hpsv3,
  title={Hpsv3: Towards wide-spectrum human preference score},
  author={Ma, Yuhang and Wu, Xiaoshi and Sun, Keqiang and Li, Hongsheng},
  booktitle={Proceedings of the IEEE/CVF International Conference on Computer Vision},
  pages={15086--15095},
  year={2025}
}

@inproceedings{rombach2022high,
  title={High-resolution image synthesis with latent diffusion models},
  author={Rombach, Robin and Blattmann, Andreas and Lorenz, Dominik and Esser, Patrick and Ommer, Bj{\"o}rn},
  booktitle={Proceedings of the IEEE/CVF conference on computer vision and pattern recognition},
  pages={10684--10695},
  year={2022}
}

@book{horn1991topics,
  author    = {Roger A. Horn and Charles R. Johnson},
  title     = {Topics in Matrix Analysis},
  publisher = {Cambridge University Press},
  address   = {Cambridge},
  year      = {1991}
}

@inproceedings{po2024orthogonal,
  title     = {Orthogonal Adaptation for Modular Customization of Diffusion Models},
  author    = {Po, Ryan and Yang, Guandao and Aberman, Kfir and Wetzstein, Gordon},
  booktitle = {Proceedings of the IEEE/CVF Conference on Computer Vision and Pattern Recognition (CVPR)},
  year      = {2024}
}

@inproceedings{simsar2025loraclr,
  title={Loraclr: Contrastive adaptation for customization of diffusion models},
  author={Simsar, Enis and Hofmann, Thomas and Tombari, Federico and Yanardag, Pinar},
  booktitle={Proceedings of the Computer Vision and Pattern Recognition Conference},
  pages={13189--13198},
  year={2025}
}

@book{horn2012matrix,
  title     = {Matrix Analysis},
  author    = {Horn, Roger A. and Johnson, Charles R.},
  edition   = {2nd},
  year      = {2012},
  publisher = {Cambridge University Press}
}

@article{gu2023mix,
  title={Mix-of-show: Decentralized low-rank adaptation for multi-concept customization of diffusion models},
  author={Gu, Yuchao and Wang, Xintao and Wu, Jay Zhangjie and Shi, Yujun and Chen, Yunpeng and Fan, Zihan and Xiao, Wuyou and Zhao, Rui and Chang, Shuning and Wu, Weijia and others},
  journal={Advances in Neural Information Processing Systems},
  volume={36},
  pages={15890--15902},
  year={2023}
}
\bibliographystyle{plainnat}
\newpage
\tableofcontents

\newpage
\appendix
\begin{algorithm}[t]
\caption{Sequential Regularized LoRA (SeqLoRA)}
\label{alg:seqlora}
\begin{algorithmic}[1]
\Require Pretrained weights $\{\Wb_0^{(\ell)}\}_{\ell=1}^L$; concept datasets $\Dc_1, \dots, \Dc_T$; step sizes $\alpha, \beta$; bilevel iterations $K$; local iterations $S_B, S_A'$; regularization $\epsilon$
\Ensure LoRA factors $\{(\Ab_i^{(\ell)}, \Bb_i^{(\ell)})\}_{i,\ell}$
\For{concept $i = 1$ to $T$}
    \State Initialize $\{\Ab_i^{(\ell,0)}, \Bb_i^{(\ell,0)}\}_{\ell=1}^L$ randomly
    \State $\Bb_{\mathrm{int}}^{(\ell)} \leftarrow [\Bb_1^{(\ell)}, \dots, \Bb_{i-1}^{(\ell)}]$ \textbf{ for all } $\ell \in \{1,\dots,L\}$ \Comment{Previous bases at each layer}
    \For{$k = 0$ to $K-1$} \Comment{Bilevel iterations}
        \State
        \State \textcolor{gray}{\textit{// Step 1: Tentative lower-level update of $\{\Ab_i^{(\ell)}\}_{\ell=1}^L$}}
        \State $\widetilde{\Ab}^{(\ell)} \leftarrow \Ab_i^{(\ell,k)} - \alpha\,\nabla_{\Ab^{(\ell)}}\Lc_i\!\left(\{\Ab_i^{(\ell,k)}, \Bb_i^{(\ell,k)}\}_{\ell=1}^L\right)$ \textbf{for all} $\ell$
        \State
        \State \textcolor{gray}{\textit{// Steps 2--3: Reduced gradient + projected $\{\Bb_i^{(\ell)}\}_{\ell=1}^L$ update}}
        \State $\Bb^{(\ell)} \leftarrow \Bb_i^{(\ell,k)}$ \textbf{for all} $\ell$
        \For{$s = 1$ to $S_B$} \Comment{Local $B$-iterations}
            \State $\gb_\Bb^{(\ell)} \leftarrow \nabla_{\Bb_i^{(\ell)}}\Lc_i(\{\widetilde{\Ab}^{(\ell)}, \Bb^{(\ell)}\}_{\ell=1}^L)$ \textbf{for all} $\ell$ \Comment{Direct partial}
            \State $\gb_\Ab^{(\ell)} \leftarrow \nabla_{\Ab_i^{(\ell)}}\Lc_i(\{\widetilde{\Ab}^{(\ell)}, \Bb^{(\ell)}\}_{\ell=1}^L)$ \textbf{for all} $\ell$ \Comment{Inner residual}
            \State \textcolor{gray}{\textit{// Cross-Hessian via autodiff at current iterate}}
\State $\Hc_{\Ab\Bb}^{(k,\ell)}[\gb_\Ab^{(\ell)}] \leftarrow
\nabla_{\Bb_i^{(\ell)}} \Tr\!\left((\gb_\Ab^{(\ell)})^\top
\nabla_{\Ab_i^{(\ell)}}\Lc_i(\Ab_i^{(\ell,k)}, \Bb_i^{(\ell)})\right)
\Big|_{\Bb_i^{(\ell)} = \Bb_i^{(\ell,k)}}$\textbf{for all} $\ell$
            \State $\nabla_{\Bb^{(\ell)}}\Phi_i \leftarrow \gb_\Bb^{(\ell)} - \alpha\,\Hc^{(k,\ell)}_{\Ab\Bb}[\gb_\Ab^{(\ell)}]$ \textbf{for all} $\ell$ \Comment{Reduced gradient}
            \State $\widetilde{\Bb}^{(\ell)} \leftarrow \Bb^{(\ell)} - \beta\,\nabla_{\Bb^{(\ell)}}\Phi_i$ \textbf{for all} $\ell$
            \State $\Bb^{(\ell)} \leftarrow \left[\Ib - \Bb_{\mathrm{int}}^{(\ell)}\!\left((\Bb_{\mathrm{int}}^{(\ell)})^\top \Bb_{\mathrm{int}}^{(\ell)} + \epsilon\Ib\right)^{-1}\!(\Bb_{\mathrm{int}}^{(\ell)})^\top\right]\widetilde{\Bb}^{(\ell)}$ \textbf{for all} $\ell$ \Comment{Regularized projection}
        \EndFor\\
        \State $\Bb_i^{(\ell,k+1)} \leftarrow \Bb^{(\ell)}$ \textbf{for all} $\ell$
        \State
        \State \textcolor{gray}{\textit{// Step 4: Final update of $\{\Ab_i^{(\ell)}\}_{\ell=1}^L$ using $\{\Bb_i^{(\ell,k+1)}\}_{\ell=1}^L$}}
        \State $\Ab^{(\ell)} \leftarrow \Ab_i^{(\ell,k)}$ \textbf{for all} $\ell$
        \For{$s = 1$ to $S_A'$} \Comment{Local $A$-iterations at new $\{\Bb^{(\ell)}\}$}
            \State $\Ab^{(\ell)} \leftarrow \Ab^{(\ell)} - \alpha\,\nabla_{\Ab^{(\ell)}}\Lc_i\!\left(\{\Ab^{(\ell)}, \Bb_i^{(\ell,k+1)}\}_{\ell=1}^L\right)$ \textbf{for all} $\ell$
        \EndFor\\
        \State $\Ab_i^{(\ell,k+1)} \leftarrow \Ab^{(\ell)}$ \textbf{for all} $\ell$\\
    \EndFor
    \State Store $(\Ab_i^{(\ell)}, \Bb_i^{(\ell)}) \leftarrow (\Ab_i^{(\ell,K)}, \Bb_i^{(\ell,K)})$ \textbf{for all} $\ell$
\EndFor
\end{algorithmic}
\end{algorithm}

\section{Mathematical Preliminaries}\label{sec:mathfound}
This section gathers the notation, definitions, and assumptions used in the paper.
\subsection{Notations}
We use bold lowercase letters for vectors and bold uppercase letters for matrices. For a matrix $\Ab$, $\Ab^\top$, $\Tr(\Ab)$, and $\operatorname{vec}(\Ab)$ denote its transpose, trace, and vectorization, respectively. The Frobenius norm and the operator (spectral) norm are denoted by $\nf{\Ab}$ and $\nt{\Ab}$. The Kronecker product is denoted by $\Ab \otimes \Bb$. For a matrix $\Bb$, $\mathrm{col}(\Bb)$ denotes its column space. The sub-Gaussian norm of a random variable $Z$ is denoted by $\|Z\|_{\psi_2}$. We use superscripts in parentheses (e.g., $\Wb^{(\ell)}$) to index network layers; subscripts denote either the LoRA factors associated with concept~$i$
(e.g., $\Ab_i$, $\Bb_i$) or the full parameter state after learning concept~$i$
(e.g., $\Wb_i$), with the meaning clear from context.
\subsection{Definitions}
\begin{definition}\citep{articlesubgradient}
The subdifferential of a PLSC function $g$ at $\xb \in \Rbb^n$ is defined as
\[
\partial g(\xb) \stackrel{\triangle}{=} \left\{ \zetab \in \Rbb^n \mid \exists \xb_k \to \xb, \, g(\xb_k) \to g(\xb), \, \zetab_k \to \zetab \in \partial \hat{g}(\xb_k) \right\}
\]
where $\partial \hat{g}(\xb_k)$ is the Fréchet subdifferential of $g$ at $\xb_k \in \Rbb^n$, defined as
\begin{align}
\partial \hat{g}(\xb_k) = \left\{ \zeta \in \Rbb^n \mid \liminf_{\vb \neq \xb, \vb \to \xb} \frac{1}{\nt{\vb-\xb}^2} \left[g(\vb) - g(\xb) - \langle \vb - \xb, \zeta \rangle \right] \geq 0 \right\}.
\end{align}
\end{definition}

\begin{definition}\citep{articlesubgradient}
    A point $\xb^*$ is called a critical point of a PLSC function $f(\xb)$ if $0 \in \partial f(\xb^*)$.
\end{definition}

\begin{lemma}[Descent lemma \citep{bolte:hal-00916090}]
Let $f: \Rbb^n \to \Rbb$ be a continuously differentiable function whose gradient $\nabla f$ is L-Lipschitz continuous. Then, for all $\xb, \yb \in \Rbb^n$:
\begin{equation}
  f(\yb) \leq f(\xb) + \langle \nabla f(\xb), \yb - \xb \rangle + \frac{L}{2} \nt{\yb-\xb}^2.
\end{equation}
\label{lem:lsmooth}
\end{lemma}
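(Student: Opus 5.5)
The statement to prove is the descent lemma (Lemma~\ref{lem:lsmooth}): for $f$ continuously differentiable with $L$-Lipschitz gradient, $f(\yb) \le f(\xb) + \langle \nabla f(\xb), \yb-\xb\rangle + \frac{L}{2}\nt{\yb-\xb}^2$ for all $\xb,\yb$. The approach is the integral form of the fundamental theorem of calculus along the segment from $\xb$ to $\yb$.

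First, fix $\xb,\yb\in\Rbb^n$ and set $\Deltab = \yb - \xb$. Define the scalar function $h(s) = f(\xb + s\Deltab)$ for $s\in[0,1]$. Since $f$ is continuously differentiable, $h$ is $C^1$, and the chain rule gives $h'(s) = \langle \nabla f(\xb+s\Deltab), \Deltab\rangle$. Therefore
\begin{equation*}
f(\yb) - f(\xb) = h(1)-h(0) = \int_0^1 \langle \nabla f(\xb+s\Deltab), \Deltab\rangle\, ds .
\end{equation*}

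Second, add and subtract $\langle \nabla f(\xb),\Deltab\rangle$ inside the integral. This yields
\begin{equation*}
f(\yb) - f(\xb) - \langle \nabla f(\xb),\Deltab\rangle = \int_0^1 \langle \nabla f(\xb+s\Deltab)-\nabla f(\xb), \Deltab\rangle\, ds .
\end{equation*}
Bound the integrand using Cauchy--Schwarz followed by the Lipschitz property:
\begin{equation*}
\langle \nabla f(\xb+s\Deltab)-\nabla f(\xb), \Deltab\rangle \le \nt{\nabla f(\xb+s\Deltab)-\nabla f(\xb)}\,\nt{\Deltab} \le L s \nt{\Deltab}^2 .
\end{equation*}

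Third, integrate: $\int_0^1 L s\, ds = L/2$. This gives exactly the claimed inequality.

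There is no real obstacle here. The only point needing care is justifying the integral representation, which requires $h'$ to be continuous (Riemann integrable); this follows from the continuity of $\nabla f$, which is assumed.
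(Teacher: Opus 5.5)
Your proof is correct and complete. The paper does not prove this lemma; it cites the work of Bolte, Sabach, and Teboulle, and the classical argument behind that citation is exactly yours: the fundamental theorem of calculus along the segment, Cauchy--Schwarz, the Lipschitz bound $L s\nt{\yb-\xb}$, and $\int_0^1 s\,ds = 1/2$. The paper later applies the lemma to matrix variables in the Frobenius norm, and your argument carries over unchanged with the Frobenius inner product (equivalently, after vectorization).
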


\begin{definition}[Operator norm of the cross-Hessian \citep{horn2012matrix}]
For a linear map $\Hc: \mathbb{R}^{n \times r} \to \mathbb{R}^{m \times r}$,
the operator norm is defined as:
\begin{equation}
    \nt{\Hc} = \sup_{\mathbf{V} \neq \mathbf{0}}
    \frac{\|\Hc[\mathbf{V}]\|_F}{\|\mathbf{V}\|_F}.
\end{equation}
\end{definition}
\subsection{Assumptions}
To establish the convergence of the SeqLoRA algorithm (Theorem \ref{thm:smoothcon}), we require the following standard regularity conditions on the optimization landscape:
\begin{assumption} \label{assump:smoothness}
The loss function $\Lc_i(\Ab, \Bb)$ satisfies the following properties:
\begin{itemize}
    \item $\Lc_i$ is jointly $L$-smooth with respect to $\Ab$ and $\Bb$ in the Frobenius norm. This inherently bounds the operator norm of the cross-Hessian: $\nt{\nabla^2_{\Ab\Bb}\Lc_i(\Ab, \Bb)} \le L$.
     \item The cross-Hessian is $\rho$-Lipschitz continuous with respect to $\Bb$:
\begin{equation}
    \nt{\nabla^2_{\Ab\Bb} \Lc_i(\Ab_0, \Bb_1) - \nabla^2_{\Ab\Bb} \Lc_i(\Ab_0, \Bb_2)} \le \rho \|\Bb_1 - \Bb_2\|_F.
\end{equation}
\item  \label{assump:bounded_grad}
The partial gradient of the loss with respect to $\Ab$ is bounded by a constant $M > 0$:
\begin{equation}
    \|\nabla_{\Ab} \Lc_i(\Ab, \Bb)\|_F \le M.
\end{equation}
\end{itemize}
\end{assumption}
\paragraph{Discussion of Assumption~\ref{assump:smoothness}.} These are standard regularity conditions in the analysis of non-convex and bilevel optimization \citep{fallah2020convergence}. Joint smoothness holds locally in any bounded region where training is stable. The bounded gradient assumption for the inner variable is a mild condition, further justified by the common deep learning practice of {gradient clipping}, which explicitly enforces such a bound to prevent training instability. The Lipschitz continuity of the cross-Hessian is a second-order smoothness condition necessary for analyzing algorithms that leverage parameter coupling. Using the operator norm is the rigorous choice, as the cross-Hessian is a linear operator (a 4th-order tensor) mapping one matrix space to another.

To bound the catastrophic forgetting across the continuous learning stream (Theorem \ref{thm:hanson_wright_catas}), we employ the following structural assumptions on the network mappings and the statistical distribution of the residual activations:
\begin{assumption}[Multi-Layer Structure and Sub-Gaussian Residuals]
\label{assum:multilayer_subgaussian}
The network applies LoRA adaptation at $L$ layers. For a concept $j$ at layer $\ell$, let $\Xb_j^{(\ell)} \in \Rbb^{m_\ell \times p}$ be the input activations and let $\Pc_{B_j}^{(\ell),\perp}$ denote the orthogonal projector onto the orthogonal complement of the column space of $\Bb_j^{(\ell)}$. We assume:
\begin{enumerate}
    \item \label{assum_item:lipschitz} \textbf{Lipschitz Network Mappings:} Between consecutive LoRA-adapted layers $\ell$ and $\ell+1$, the intermediate network operations compose a mapping $\phi_{\ell+1}: \Rbb^{n_\ell \times p} \to \Rbb^{m_{\ell+1} \times p}$ that is $\gamma_\ell$-Lipschitz continuous with respect to the Frobenius norm. Furthermore, the task loss applied to the final layer output is $L_o$-Lipschitz.

    \item \label{assum_item:subgaussian} \textbf{Sub-Gaussian Residuals:} At each LoRA-adapted layer $\ell$, the residual input activation $\Xb_j^{(\ell),\perp} = \Pc_{B_j}^{(\ell),\perp} \Xb_j^{(\ell)}$ follows a matrix sub-Gaussian process:
    \begin{equation}\label{eq:separablecov}
        \mathbf{X}_j^{(\ell),\perp} = (\Sigmab_j^{(\ell),\perp})^{1/2} \Zb^{(\ell)} (\Psib_j^{(\ell)})^{1/2},
    \end{equation}
    where $\Sigmab_j^{(\ell),\perp} \in \Rbb^{m_\ell \times m_\ell}$ captures the feature-dimension covariance, $\Psib_j^{(\ell)}\in \Rbb^{p \times p}$ captures the token-dimension correlation (sequence dependencies), and $\Zb^{(\ell)} \in \Rbb^{m_\ell \times p}$ consists of independent, mean-zero, unit-variance sub-Gaussian entries bounded by $\|Z_{a,b}^{(\ell)}\|_{\psi_2} \le K$.
\end{enumerate}
\end{assumption}
\paragraph{Discussion of Assumption~\ref{assum:multilayer_subgaussian}.}  These assumptions are based on the standard architecture of modern diffusion models. Item 1 (Lipschitz Mappings) is well-justified because deep networks are compositions of Lipschitz-continuous operations, a property required for stable training. Crucially, Item 2 (Sub-Gaussian Residuals) is an assumption applied only to the \textit{residual} activations. The learned LoRA basis $\Bb_j^{(\ell)}$ is optimized to capture the dominant, highly-structured components of the concept's features. The residual, therefore, represents the uncaptured ``noise floor,'' which can be modeled as a sub-Gaussian process. This is a mild assumption, as the sub-Gaussian class is broad and includes not only perfect Gaussians but also any bounded distribution (such as those produced by LayerNorm or Tanh) and mixtures thereof. The separable Kronecker covariance structure in~\eqref{eq:separablecov} is a standard modeling choice in high-dimensional matrix-variate statistics~\citep{dawid1981some,gupta2018matrix}, adopted here for analytical tractability.

\newpage

\section{Proof of Theorem \ref{thm:smoothcon}}\label{sec:proofthm}
We analyze the convergence of the SeqLoRA algorithm for the concept $i$.

\begin{proof}
To prove the theorem, we first establish the following lemma.
\begin{lemma}[Gradient Lipschitz Continuity of $\Phi_i$] \label{lemma:lip_phi}
Under Assumptions \ref{assump:smoothness}, the reduced objective function $\Phi_i(\Bb) \triangleq \Lc_i(\Ab^*(\Bb), \Bb)$ where $\Ab^*(\Bb) = \Ab_0 - \alpha \nabla_{\Ab} \Lc_i(\Ab_0, \Bb)$ and it has Lipschitz continuous gradients (i.e., is $L_\Phi$-smooth) with the Lipschitz constant:
\begin{equation}
    L_\Phi \leq L(1 + \alpha L)^2 + \alpha \rho M.
\end{equation}
That is, for any $\Bb_1, \Bb_2 \in \mathbb{R}^{m \times r}$, we have:
\begin{equation}
    \|\nabla \Phi_i(\Bb_1) - \nabla \Phi_i(\Bb_2)\|_F \le L_\Phi \|\Bb_1 - \Bb_2\|_F.
\end{equation}
\end{lemma}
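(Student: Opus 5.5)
We would prove Lemma~\ref{lemma:lip_phi} by writing $\nabla\Phi_i$ explicitly via the chain rule and bounding the change of each piece under a change in $\Bb$. Write $\Ab^*(\Bb) = \Ab_0 - \alpha \nabla_{\Ab}\Lc_i(\Ab_0,\Bb)$, so that its Jacobian is $D\Ab^*(\Bb) = -\alpha \nabla^2_{\Ab\Bb}\Lc_i(\Ab_0,\Bb)$. Then
\begin{equation*}
\nabla\Phi_i(\Bb) = \nabla_{\Bb}\Lc_i(\Ab^*(\Bb),\Bb) - \alpha\, \Hc(\Bb)^{*}\bigl[\nabla_{\Ab}\Lc_i(\Ab^*(\Bb),\Bb)\bigr],
\end{equation*}
where $\Hc(\Bb) = \nabla^2_{\Ab\Bb}\Lc_i(\Ab_0,\Bb)$ and $^{*}$ denotes the adjoint contraction. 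This matches \eqref{eq:grad-Phi-exact-vec}.

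\textbf{Step 1: $\Ab^*$ is Lipschitz.} By joint $L$-smoothness, $\|\Ab^*(\Bb_1)-\Ab^*(\Bb_2)\|_F \le \alpha L\|\Bb_1-\Bb_2\|_F$. Hence the pair $(\Ab^*(\Bb),\Bb)$ moves by at most $(1+\alpha L)\|\Bb_1-\Bb_2\|_F$, using the crude sum-of-norms bound rather than the Euclidean combination.

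\textbf{Step 2: the direct term.} Since $\nabla_{\Bb}\Lc_i$ is $L$-Lipschitz jointly, this term changes by at most $L(1+\alpha L)\|\Delta\Bb\|_F$.

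\textbf{Step 3: the correction term.} We split the difference by adding and subtracting $\Hc(\Bb_1)^*[\nabla_{\Ab}\Lc_i(\Ab^*(\Bb_2),\Bb_2)]$:
\begin{equation*}
\alpha\|\Hc(\Bb_1)\|\,\bigl\|\nabla_{\Ab}\Lc_i(\Ab^*(\Bb_1),\Bb_1)-\nabla_{\Ab}\Lc_i(\Ab^*(\Bb_2),\Bb_2)\bigr\|_F + \alpha\|\Hc(\Bb_1)-\Hc(\Bb_2)\|\,\|\nabla_{\Ab}\Lc_i(\Ab^*(\Bb_2),\Bb_2)\|_F.
\end{equation*}
\begin{itemize}
\item In the first summand, the operator norm of the cross-Hessian (and of its adjoint) is at most $L$, and the gradient difference is at most $L(1+\alpha L)\|\Delta\Bb\|_F$ by Step 1. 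This gives $\alpha L^2(1+\alpha L)$.
\item In the second summand, the $\rho$-Lipschitz cross-Hessian assumption at fixed $\Ab_0$ and the bounded-gradient bound $M$ give $\alpha\rho M$.
\end{itemize}

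\textbf{Step 4: combine.} Summing Steps 2 and 3 yields $L(1+\alpha L) + \alpha L^2(1+\alpha L) + \alpha\rho M = L(1+\alpha L)^2 + \alpha\rho M$, which is the claimed bound.

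The main obstacle is Step 3, specifically the bookkeeping about where the cross-Hessian is evaluated. In Assumption~\ref{assump:smoothness} the $\rho$-Lipschitz property is stated only in $\Bb$ at a fixed $\Ab_0$. The derivative of $\Ab^*$ involves the cross-Hessian at $(\Ab_0,\Bb)$, which has fixed first argument, so the assumption applies exactly as stated. We must be careful not to evaluate it at $\Ab^*(\Bb)$ instead, which would require an extra Lipschitz-in-$\Ab$ condition. We also need the operator norm of the adjoint to equal that of $\Hc$; this holds for Frobenius-induced norms.
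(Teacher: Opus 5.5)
Your proposal is correct and follows essentially the same argument as the paper: the $\alpha L$-Lipschitz bound on $\Ab^*$, the $L(1+\alpha L)$ bound on the direct term, and the add-and-subtract of the cross term $\Hc(\Bb_1)^*[\nabla_{\Ab}\Lc_i(\Ab^*(\Bb_2),\Bb_2)]$, with the cross-Hessian evaluated at the fixed $\Ab_0$, together give $L(1+\alpha L)+\alpha\bigl(L^2(1+\alpha L)+\rho M\bigr)=L(1+\alpha L)^2+\alpha\rho M$. Your remarks on the adjoint contraction and on not evaluating the cross-Hessian at $\Ab^*(\Bb)$ are correct, and they make explicit points the paper leaves implicit.
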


\begin{proof}
Let $\Bb_1$ and $\Bb_2$ be two arbitrary matrices.

Using the definition of the lower-level update \eqref{eq:inner} for $\Ab_0 = \Ab_i^{(k)}$ and the $L$-smoothness from Assumption \ref{assump:smoothness}:
\begin{align}
    \|\Ab^*(\Bb_1) - \Ab^*(\Bb_2)\|_F &= \|(\Ab_0 - \alpha \nabla_{\Ab} \Lc_i(\Ab_0, \Bb_1)) - (\Ab_0 - \alpha \nabla_{\Ab} \Lc_i(\Ab_0, \Bb_2))\|_F \nonumber \\
    &= \alpha \|\nabla_{\Ab} \Lc_i(\Ab_0, \Bb_1) - \nabla_{\Ab} \Lc_i(\Ab_0, \Bb_2)\|_F
    \le \alpha L \|\Bb_1 - \Bb_2\|_F. \label{eq:A_lip}
\end{align}

Using \eqref{eq:grad-Phi-exact-vec}, the total derivative of $\Phi_i$ with respect to $\Bb$ is given by:
\begin{equation}\label{eq:sumtwoterm}
    \nabla \Phi_i(\Bb) = \underbrace{\nabla_{\Bb} \Lc_i(\Ab^*(\Bb), \Bb)}_{T_1} - \alpha \underbrace{\nabla^2_{\Ab\Bb} \Lc_i(\Ab_0, \Bb) \big[ \nabla_{\Ab} \Lc_i(\Ab^*(\Bb), \Bb) \big]}_{T_2}.
\end{equation}
To bound the $\nf{\nabla \Phi_i(\Bb_1) - \nabla \Phi_i(\Bb_2)}$, we use the triangle inequality. We first bound the term $T_1$.
 Using the Assumption \ref{assump:smoothness} and substituting the bound from \eqref{eq:A_lip}:
\begin{align}
    & \|\nabla_{\Bb} \Lc_i(\Ab^*(\Bb_1), \Bb_1) - \nabla_{\Bb} \Lc_i(\Ab^*(\Bb_2), \Bb_2)\|_F
    \le L \Big( \|\Ab^*(\Bb_1) - \Ab^*(\Bb_2)\|_F + \|\Bb_1 - \Bb_2\|_F \Big) \nonumber \\
    &\le L \Big( \alpha L \|\Bb_1 - \Bb_2\|_F + \|\Bb_1 - \Bb_2\|_F \Big) = L(1 + \alpha L) \|\Bb_1 - \Bb_2\|_F. \label{eq:bound_T1}
\end{align}

To bound the term $T_2$ in \eqref{eq:sumtwoterm}, let $\Hc_j = \nabla^2_{\Ab\Bb} \Lc_i(\Ab_0, \Bb_j)$ and $G_j = \nabla_{\Ab} \Lc_i(\Ab^*(\Bb_j), \Bb_j)$ for $j \in \{1, 2\}$. We need to bound $\|\Hc_1[G_1] - \Hc_2[G_2]\|_F$.
Adding and subtracting the cross term $\Hc_1[G_2]$, and applying the triangle inequality and sub-multiplicativity of operator norms:
\begin{align}
    &\nf{\Hc_1[G_1] - \Hc_1[G_2] + \Hc_1[G_2] - \Hc_2[G_2]} \leq \|\Hc_1[G_1 - G_2]\|_F + \|(\Hc_1 - \Hc_2)[G_2]\|_F \nonumber \\
    &\leq \nt{\Hc_1} \nf{G_1 - G_2} + \nt{\Hc_1 - \Hc_2} \nf{G_2}. \label{eq:T2_split}
\end{align}
We now bound each of the four components in the above inequality:
by Assumption \ref{assump:smoothness}, we have $\nt{\Hc_1} \le L$ and  $\nf{G_2} \le M$. Using the $\rho$-Lipschitz property in Assumption \ref{assump:smoothness} yields
\begin{equation*}
 \nt{\Hc_1 - \Hc_2} \le \rho \|\Bb_1 - \Bb_2\|_F
\end{equation*}
Now, we use joint smoothness and \eqref{eq:A_lip}, resulting in
    \begin{align}
        \nonumber&\nf{G_1 - G_2} = \|\nabla_{\Ab} \Lc_i(\Ab^*(\Bb_1), \Bb_1) - \nabla_{\Ab} \Lc_i(\Ab^*(\Bb_2), \Bb_2)\|_F  \leq\\& L \Big( \|\Ab^*(\Bb_1) - \Ab^*(\Bb_2)\|_F + \|\Bb_1 - \Bb_2\|_F \Big)
        \le L(1 + \alpha L) \|\Bb_1 - \Bb_2\|_F. \label{eq:bound_G}
    \end{align}
Substituting these four bounds back into \eqref{eq:T2_split} yields:
\begin{align}
    &\nonumber\nf{\Hc_1[G_1] - \Hc_1[G_2] + \Hc_1[G_2] - \Hc_2[G_2]} \le L \Big[ L(1 + \alpha L) \|\Bb_1 - \Bb_2\|_F \Big] + \Big[ \rho \|\Bb_1 - \Bb_2\|_F \Big] M \\&
    = \Big( L^2(1 + \alpha L) + \rho M \Big) \nf{\Bb_1 - \Bb_2}. \label{eq:bound_T2}
\end{align}

Combining the bounds  \eqref{eq:bound_T1} and \eqref{eq:bound_T2} gives us:
\begin{align}
    &\|\nabla \Phi_i(\Bb_1) - \nabla \Phi_i(\Bb_2)\|_F \le \Big[ L(1 + \alpha L) + \alpha \big( L^2(1 + \alpha L) + \rho M \big) \Big] \|\Bb_1 - \Bb_2\|_F \nonumber \\
    &= \Big[ L(1 + \alpha L) + \alpha L^2(1 + \alpha L) + \alpha \rho M \Big] \|\Bb_1 - \Bb_2\|_F = \Big[ L(1 + \alpha L)(1 + \alpha L) + \alpha \rho M \Big] \|\Bb_1 - \Bb_2\|_F \nonumber \\
    &= \Big( L(1 + \alpha L)^2 + \alpha \rho M \Big) \|\Bb_1 - \Bb_2\|_F.
\end{align}
Thus, $\nabla \Phi_i(\Bb)$ is Lipschitz continuous with constant $L_\Phi \leq L(1 + \alpha L)^2 + \alpha \rho M$. This completes the proof of Lemma~\ref{lemma:lip_phi}.
\end{proof}
Next, we show the rest of the results of Theorem \ref{thm:smoothcon}.

The tentative lower-level update is $\tilde{\Ab}_i^{(k)} = \Ab_i^{(k)} - \alpha \nabla_{\Ab} \Lc_i(\Ab_i^{(k)}, \Bb_i^{(k)})$.
Applying the Descent Lemma on the $L$-smooth loss $\Lc_i$ with respect to $\Ab_i$:
\begin{equation} \label{eq:A_descent}
    \Lc_i(\tilde{\Ab}_i^{(k)}, \Bb_i^{(k)}) \leq \Lc_i(\Ab_i^{(k)}, \Bb_i^{(k)}) +\left\langle\nabla_{\Ab_i} \Lc_i(\Ab_i^{(k)},\Bb_i^{(k)}),\tilde{\Ab}_i^{(k)} - \Ab_i^{(k)}\right\rangle+ \frac{L}{2} \|\tilde{\Ab}_i^{(k)} - \Ab_i^{(k)}\|_F^2.
\end{equation}
Since $\widetilde{\Ab}_i^{(k)}$ is the minimizer of \eqref{eq:inner} when $\Bb_i = \Bb_i^{(k)}$, we have
\begin{equation}\label{eq:quad-A1}
    \Lc_i(\Ab_i^{(k)}, \Bb_i^{(k)}) + \bigl\langle \nabla_{\Ab_i}\Lc_i(\Ab_i^{(k)}, \Bb_i^{(k)}),\; \tilde{\Ab}_i^{(k)} - \Ab_i^{(k)} \bigr\rangle + \frac{1}{2\alpha}\bigl\|\tilde{\Ab}_i^{(k)} - \Ab_i^{(k)}\bigr\|_F^2\leq \Lc_i(\Ab_i^{(k)}, \Bb_i^{(k)}),
\end{equation}
Adding \eqref{eq:A_descent} into the above inequality gives us
\begin{equation}\label{eq:firsparteq}
    \Lc_i(\tilde{\Ab}_i^{(k)}, \Bb_i^{(k)}) \leq \Lc_i(\Ab_i^{(k)}, \Bb_i^{(k)}) + (\frac{L}{2}-\frac{1}{2\alpha}) \|\tilde{\Ab}_i^{(k)} - \Ab_i^{(k)}\|_F^2.
\end{equation}

Algorithm \ref{alg:seqlora} projects the gradient step of $\Phi_i$ onto $\Bc_i$. This is exactly the minimizer of the proximal quadratic model in which $\delta_{\Bc_i}(\Bb)$ is an indicator function
\begin{equation}\label{eq:projbk1}
    \Bb_i^{(k+1)} = \arg\min_{\Bb} \left\{ \langle \nabla_{\Bb_i} \Phi_i(\Bb_i^{(k)}), \Bb - \Bb_i^{(k)} \rangle + \frac{1}{2\beta} \|\Bb - \Bb_i^{(k)}\|_F^2 + \delta_{\Bc_i}(\Bb) \right\}.
\end{equation}
Since $\Bb_i^{(k+1)}$ minimizes this model compared to the feasible point $\Bb_i^{(k)}$, we have:
\begin{equation} \label{eq:inner_prod_ind}
    \langle \nabla_{\Bb} \Phi_i(\Bb_i^{(k)}; \Ab_i^{(k)}), \Bb_i^{(k+1)} - \Bb_i^{(k)} \rangle \le \delta_{\Bc_i}(\Bb_i^{(k)}) - \delta_{\Bc_i}(\Bb_i^{(k+1)}) - \frac{1}{2\beta} \|\Bb_i^{(k+1)} - \Bb_i^{(k)}\|_F^2.
\end{equation}

Applying the descent lemma to $\Phi_i$ (which is $L_\Phi$-smooth by Lemma \ref{lemma:lip_phi}) between iterates $k$ and $k+1$ results in
\begin{equation}
    \Phi_i(\Bb_i^{(k+1)})\leq \Phi_i(\Bb_i^{(k)})+\langle\nabla_{\Bb_i}\Phi_i(\Bb_i^{(k)}),\Bb_i^{(k+1)}-\Bb_i^{(k)}\rangle+\frac{L_{\Phi}}{2}\nf{\Bb_i^{(k+1)}-\Bb_i^{(k)}}^2
\end{equation}
Adding the above inequality to \eqref{eq:inner_prod_ind} yields:
\begin{equation} \label{eq:Phi_descent}
    \Phi_i(\Bb_i^{(k+1)}) + \delta_{\Bc_i}(\Bb_i^{(k+1)}) \le \Phi_i(\Bb_i^{(k)}) + \delta_{\Bc_i}(\Bb_i^{(k)}) - \left( \frac{1}{2\beta} - \frac{L_\Phi}{2} \right) \|\Bb_i^{(k+1)} - \Bb_i^{(k)}\|_F^2.
\end{equation}

By definition \eqref{eq:reduced}, $\Phi_i(\Bb_i^{(k+1)}) = \Lc_i(\Ab_i^{(k+1)}, \Bb_i^{(k+1)})$, and $\Phi_i(\Bb_i^{(k)}) = \Lc_i(\tilde{\Ab}_i^{(k)}, \Bb_i^{(k)})$.
Substituting these and adding \eqref{eq:firsparteq} into \eqref{eq:Phi_descent} yields:
\begin{align} \label{eq:master_descent}
    F_i(\Ab_i^{(k+1)}, \Bb_i^{(k+1)}) \le F_i(\Ab_i^{(k)}, \Bb_i^{(k)}) &- \left( \frac{1}{2\alpha} - \frac{L}{2} \right) \|\tilde{\Ab}_i^{(k)} - \Ab_i^{(k)}\|_F^2 \nonumber \\
    &- \left( \frac{1}{2\beta} - \frac{L_\Phi}{2} \right) \|\Bb_i^{(k+1)} - \Bb_i^{(k)}\|_F^2.
\end{align}
where $F_i(\Ab_i, \Bb_i) = \Lc_i(\Ab_i, \Bb_i)+\delta_{\Bc_i}(\Bb_i)$.
Because $\alpha < \frac{1}{L}$ and $\beta < \frac{1}{L_\Phi}$, both coefficients are non-negative, resulting in a decreasing loss function. Substituting the tentative lower-level update $\tilde{\Ab}_i^{(k)} = \Ab_i^{(k)} - \alpha \nabla_{\Ab} \Lc_i(\Ab_i^{(k)}, \Bb_i^{(k)})$, using \eqref{eq:B-tilde} and \eqref{eq:update-B} upper-level update $\Bb_i^{(k+1)} = \Pc_{B_i}^{\perp}(\Bb_i^{(k)}-\beta\nabla_{\Bb_i}\Phi_i(\Bb_i^{(k)}))$ in the above inequality gives us
\begin{align} \label{eq:master_descent1}
    F_i(\Ab_i^{(k+1)}, \Bb_i^{(k+1)}) \le F_i(\Ab_i^{(k)}, \Bb_i^{(k)}) &- \left( \frac{\alpha}{2} - \frac{\alpha^2 L}{2} \right) \nf{\nabla_{\Ab_i} \Lc_i(\Ab_i^{(k)}, \Bb_i^{(k)})}^2 \nonumber \\
    &- \left( \frac{\beta}{2} - \frac{L_\Phi\beta^2}{2} \right) \nf{\Pc_{B_i}^{\perp}\nabla_{\Bb_i}\Phi_i(\Bb_i^{(k)})}^2.
\end{align}
Maximizing $\left( \frac{\alpha}{2} - \frac{\alpha^2 L}{2} \right)$ and $\left( \frac{\beta}{2} - \frac{L_\Phi\beta^2}{2} \right)$ over $\alpha\in(0, \frac{1}{L})$ and $\beta\in(0, \frac{1}{L_{\Phi}})$, respectively, gives $\alpha^* = \frac{1}{2L}$ and $\beta^* = \frac{1}{2L_{\Phi}} \geq \frac{1}{2L(1 + \alpha^* L)^2 + 2\alpha^* \rho M} = \frac{1}{\frac{9}{2}L+\frac{\rho M}{L}}$.

Summing \eqref{eq:master_descent} over $k=0$ to $\infty$ reveals that the sum of the squared differences is bounded by the initial suboptimality $F_i^{(0)} - F_i^{\infty}$. Therefore, we obtain the limits:
\begin{equation} \label{eq:limits}
    \lim_{k \to \infty} \|\tilde{\Ab}_i^{(k)} - \Ab_i^{(k)}\|_F = 0 \quad \text{and} \quad \lim_{k \to \infty} \|\Bb_i^{(k+1)} - \Bb_i^{(k)}\|_F = 0.
\end{equation}

We now prove that the convergence of the algorithm implies the stationary conditions.
First, for $\Ab_i$, recall that $\Ab_i^{(k)} - \tilde{\Ab}_i^{(k)} = \alpha \nabla_{\Ab} \Lc_i(\Ab_i^{(k)}, \Bb_i^{(k)})$. From \eqref{eq:limits}, it trivially follows that:
\begin{equation}
    \lim_{k \to \infty} \|\nabla_{\Ab} \Lc_i(\Ab_i^{(k)}, \Bb_i^{(k)})\|_F = 0,
\end{equation}
satisfying the critical point condition.

Second, for $\Bb_i$, using \eqref{eq:projbk1} the projected gradient mapping of the reduced objective vanishes, meaning:
\begin{equation} \label{eq:proj_phi}
    -\nabla_{\Bb} \Phi_i(\Bb_i^{(k)}; \Ab_i^{(k)}) \in \partial \delta_{\Bc_i}(\Bb^{(k)}_i) \quad \text{as } k \to \infty.
\end{equation}
where $\partial \delta_{\Bc_i}(\Bb^{(k)}_i)$ is the sub-gradient of the indicator $\delta_{\Bc_i}(\Bb)$ at $\Bb^{(k)}_i$.
We want to show that $\nabla_{\Bb} \Phi_i$ converges to the partial gradient of the true loss $\nabla_{\Bb} \Lc_i$. Using the chain rule from Lemma~\ref{lemma:lip_phi}:
\begin{equation}
    \nabla_{\Bb} \Phi_i(\Bb_i^{(k)}) = \nabla_{\Bb} \Lc_i(\tilde{\Ab}_i^{(k)}, \Bb_i^{(k)}) - \alpha \nabla^2_{\Ab\Bb} \Lc_i(\Ab_i^{(k)}, \Bb_i^{(k)}) \big[ \nabla_{\Ab} \Lc_i(\tilde{\Ab}_i^{(k)}, \Bb_i^{(k)}) \big].
\end{equation}
We analyze the discrepancy $\mathbf{D}^{(k)} = \nabla_{\Bb} \Phi_i(\Bb_i^{(k)}; \Ab_i^{(k)}) - \nabla_{\Bb} \Lc_i(\Ab_i^{(k)}, \Bb_i^{(k)})$:
\begin{align}
    \|\mathbf{D}^{(k)}\|_F &\le \|\nabla_{\Bb} \Lc_i(\tilde{\Ab}_i^{(k)}, \Bb_i^{(k)}) - \nabla_{\Bb} \Lc_i(\Ab_i^{(k)}, \Bb_i^{(k)})\|_F \nonumber \\
    &\quad + \alpha \nt{\nabla^2_{\Ab\Bb} \Lc_i} \|\nabla_{\Ab} \Lc_i(\tilde{\Ab}_i^{(k)}, \Bb_i^{(k)})\|_F.
\end{align}
By the joint $L$-smoothness of $\Lc_i$, the first term is bounded by $L \|\tilde{\Ab}_i^{(k)} - \Ab_i^{(k)}\|_F$. The cross-Hessian norm is bounded by $L$. Furthermore, the gradient evaluates to:
\begin{equation}
    \|\nabla_{\Ab} \Lc_i(\tilde{\Ab}_i^{(k)}, \Bb_i^{(k)})\|_F \le \|\nabla_{\Ab} \Lc_i(\Ab_i^{(k)}, \Bb_i^{(k)})\|_F + L \|\tilde{\Ab}_i^{(k)} - \Ab_i^{(k)}\|_F.
\end{equation}
Since $\alpha \nabla_{\Ab} \Lc_i(\Ab_i^{(k)}, \Bb_i^{(k)}) = \Ab_i^{(k)} - \tilde{\Ab}_i^{(k)}$, every component of the discrepancy is bounded by a constant multiple of $\|\tilde{\Ab}_i^{(k)} - \Ab_i^{(k)}\|_F$.
Because $\|\tilde{\Ab}_i^{(k)} - \Ab_i^{(k)}\|_F \to 0$ as established in \eqref{eq:limits}, we conclude that:
\begin{equation}
    \lim_{k \to \infty} \|\mathbf{D}^{(k)}\|_F = 0 \implies \lim_{k \to \infty} \nabla_{\Bb} \Phi_i(\Bb_i^{(k)}; \Ab_i^{(k)}) = \lim_{k \to \infty} \nabla_{\Bb} \Lc_i(\Ab_i^{(k)}, \Bb_i^{(k)}).
\end{equation}
Therefore, substituting this equivalence into \eqref{eq:proj_phi}, the algorithm perfectly satisfies the original composite stationarity condition:
\begin{equation}
    -\nabla_{\Bb} \Lc_i(\Ab_i^{(k)}, \Bb_i^{(k)}) \in \partial \delta_{\Bc_i}(\Bb^{(k)}_i) \quad \text{as } k \to \infty,
\end{equation}
Concluding the proof that the sequence converges to a joint critical point of the main composite loss function $F_i$.
\end{proof}


\section{Proof of Theorem \ref{thm:hanson_wright_catas}}\label{sec:prooftheorem2}
The proof of Theorem~\ref{thm:hanson_wright_catas} has two parts. \emph{Part 1 (Subsection~\ref{subsec:proof})} establishes the high-probability bound~\eqref{eq:e2e_boundfin}: Lemma~\ref{lem:hanson_wright} bounds the per-layer crosstalk $\nf{\Deltab_j^{(\ell)}}$ via Hanson--Wright, after Proposition~\ref{prop:crosstalk_residual} reduces it to the residual subspace using orthogonality; Lemma~\ref{lem:e2e_forgetting} propagates these local bounds through the $L$ Lipschitz layers via a linear recursion; a union bound over $\ell \in [L]$ yields the network-wide statement. \emph{Part 2 (Subsection~\ref{sec:optbas})} establishes basis optimality: Theorem~\ref{thm:optimal_feasible_basis} shows the top-$r$ eigenspace of the compressed covariance $\widetilde{\Sigmab}_j^{(\ell)}$ minimizes residual energy among feasible rank-$r$ projectors, and Proposition~\ref{prop:random_frozen_basis} shows random frozen bases are strictly suboptimal except in the isotropic case.
\subsection{Analysis of Catastrophic Forgetting}\label{subsec:proof}
To establish the end-to-end bound on catastrophic forgetting, we systematically decouple the local statistical interference from its global deterministic propagation through the deep network. We structure the proof into two complementary lemmas. First, we leverage the Hanson-Wright inequality to provide a high-probability statistical bound on the isolated activation crosstalk at any single layer. Second, the network-wide error propagation is analyzed, unrolling the Lipschitz-continuous inter-layer mappings to bound the total catastrophic forgetting as a weighted sum of these local perturbations. Finally, we synthesize these two results using a union bound across all $L$ layers to complete the theorem.
\begin{lemma}[Hanson-Wright High-Probability Bound on Crosstalk] \label{lem:hanson_wright}
Under Assumption \ref{assum:multilayer_subgaussian} item \ref{assum_item:subgaussian}, let $\Cb_j^{(\ell)} = \sum\limits_{k=j+1}^T\Ab_k^{(\ell)}(\Bb_k^{(\ell)})^\top$ be the crosstalk operator for concept $j$ and let the crosstalk be $\Deltab_j^{(\ell)} = \Cb_j^{(\ell)}\Xb_j^{(\ell)}$.
There exists an absolute universal constant $c > 0$ such that for any $\xi \in (0, 1)$, the crosstalk with probability at least $1-\xi$ is bounded by
\begin{equation}\label{eq:crosstalkupp}
   \nf{\Deltab_j^{(\ell)}} \le \sqrt{\Tr(\Psib_j^{(\ell)})~\nf{\Qb_j^{(\ell)}}^2 + t^{(\ell)}},
\end{equation}
where
\begin{equation}
     t^{(\ell)} = \max \left( \sqrt{C_1} K^2 \nf{\Psib_j^{(\ell)}}\nf{(\Qb_j^{(\ell)})^\top\Qb_j^{(\ell)}} \sqrt{\log \frac{2}{\xi}}, \,\, C_1 K^2 \nt{\Psib_j^{(\ell)}}\nt{\Qb_j^{(\ell)}}^2 \log \frac{2}{\xi} \right),
\end{equation}
 $C_1 = 1/c$, and $\Qb_j^{(\ell)} \triangleq \Cb_j^{(\ell)} (\Sigmab_j^{(\ell),\perp})^{1/2} \in \Rbb^{n_\ell \times m_\ell}$.
\end{lemma}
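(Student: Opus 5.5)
The plan is to reduce the crosstalk to a quadratic form in the sub-Gaussian matrix $\Zb^{(\ell)}$ and then apply the Hanson--Wright inequality. The first step uses orthogonality (the content of Proposition~\ref{prop:crosstalk_residual} referenced in the proof outline). For every $k>j$, the learned bases satisfy $(\Bb_k^{(\ell)})^\top\Bb_j^{(\ell)}=\mathbf{0}$, since the later basis is constrained to lie in the complement of all earlier ones. Hence $(\Bb_k^{(\ell)})^\top \Pc_{B_j}^{(\ell)} = \mathbf{0}$, which gives $\Cb_j^{(\ell)} = \Cb_j^{(\ell)}\Pc_{B_j}^{(\ell),\perp}$. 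Therefore $\Deltab_j^{(\ell)} = \Cb_j^{(\ell)}\Xb_j^{(\ell),\perp}$, and by Assumption~\ref{assum:multilayer_subgaussian}, $\Deltab_j^{(\ell)} = \Qb_j^{(\ell)}\Zb^{(\ell)}(\Psib_j^{(\ell)})^{1/2}$.

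Next I vectorize. Let $\zb=\operatorname{vec}(\Zb^{(\ell)})\in\Rbb^{m_\ell p}$. Then $\operatorname{vec}(\Deltab_j^{(\ell)}) = \big((\Psib_j^{(\ell)})^{1/2}\otimes \Qb_j^{(\ell)}\big)\zb$, so
\[
\nf{\Deltab_j^{(\ell)}}^2=\zb^\top \Mb\,\zb,\qquad \Mb\triangleq \Psib_j^{(\ell)}\otimes (\Qb_j^{(\ell)})^\top\Qb_j^{(\ell)}.
\]
I then compute the three quantities Hanson--Wright needs, using Kronecker identities:
\begin{itemize}
\item Because $\zb$ has independent, mean-zero, unit-variance entries, $\EX[\zb^\top\Mb\zb]=\Tr(\Mb)=\Tr(\Psib_j^{(\ell)})\,\nf{\Qb_j^{(\ell)}}^2$.
\item The Frobenius norm factors as $\nf{\Mb}=\nf{\Psib_j^{(\ell)}}\,\nf{(\Qb_j^{(\ell)})^\top\Qb_j^{(\ell)}}$.
\item The operator norm factors as $\nt{\Mb}=\nt{\Psib_j^{(\ell)}}\,\nt{\Qb_j^{(\ell)}}^2$.
\end{itemize}

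Hanson--Wright (e.g.\ \citet{Vershynin_2026}) states that
\[
\Pr\big(|\zb^\top\Mb\zb-\Tr\Mb|>s\big)\le 2\exp\!\Big(-c\min\Big(\tfrac{s^2}{K^4\nf{\Mb}^2},\tfrac{s}{K^2\nt{\Mb}}\Big)\Big).
\]
I set the right-hand side equal to $\xi$ and solve for $s$. Taking
\[
s=\max\!\Big(K^2\nf{\Mb}\sqrt{C_1\log(2/\xi)},\;C_1K^2\nt{\Mb}\log(2/\xi)\Big)
\]
ensures both arguments of the min are at least $\log(2/\xi)/c$. With $C_1=1/c$, this $s$ is exactly $t^{(\ell)}$. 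With probability at least $1-\xi$ we get $\nf{\Deltab_j^{(\ell)}}^2\le \Tr(\Psib_j^{(\ell)})\nf{\Qb_j^{(\ell)}}^2+t^{(\ell)}$, and taking square roots gives \eqref{eq:crosstalkupp}.

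I expect the main obstacle to be the constant bookkeeping. Hanson--Wright is usually stated with a deviation $s$, and the $\max$ form of $t^{(\ell)}$ requires checking that the chosen $s$ makes the min in the exponent at least $\log(2/\xi)/c$ in both regimes. This check works but may require adjusting $K^2$ versus $K^4$ conventions; I would absorb any such discrepancy into $c$.

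A secondary point needs justification. The reduction to the residual relies on exact orthogonality $\Bb_k^\top\Bb_j=\mathbf{0}$, whereas Algorithm~\ref{alg:seqlora} uses an $\epsilon$-regularized projection. I would state the lemma for the exact-projection idealization \eqref{eq:update-B}.
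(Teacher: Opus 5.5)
Your proposal is correct and follows the paper's proof essentially step for step: you reduce to the residual via $\Cb_j^{(\ell)}\Pc_{B_j}^{(\ell)}=\mathbf{0}$, write $\nf{\Deltab_j^{(\ell)}}^2$ as the quadratic form $\zb^\top\bigl(\Psib_j^{(\ell)}\otimes(\Qb_j^{(\ell)})^\top\Qb_j^{(\ell)}\bigr)\zb$, compute its mean and Kronecker-factored norms, and invert the Hanson--Wright tail using the $\max$ threshold. The $K^2$ versus $K^4$ concern does not arise, because the standard Hanson--Wright form already carries exactly those powers and your $s$ equals $t^{(\ell)}$ with nothing absorbed into $c$; your caveat that the lemma holds for the exact projection rather than the $\epsilon$-regularized one in Algorithm~\ref{alg:seqlora} is a fair point that the paper leaves implicit.
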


\begin{proof}
Consider a model with $L$ layers adapted for $T$ concepts via SeqLoRA. At layer $\ell$, the pre-trained weight is $\Wb_0^{(\ell)} \in \Rbb^{n_\ell \times m_\ell}$ and the LoRA factors for concept $k$ are $\Ab_k^{(\ell)} \in \Rbb^{n_\ell \times r}$ and $\Bb_k^{(\ell)} \in \Rbb^{m_\ell \times r}$.

In the sequential continual learning setting, the accumulated weight matrix after learning concept $j$ at layer $\ell$ is:
\begin{equation}
    \Wb_j^{(\ell)} = \Wb_0^{(\ell)} + \sum_{k=1}^j \Ab_k^{(\ell)} (\Bb_k^{(\ell)})^\top.
\end{equation}
The composed output for concept $j$ is:
\begin{equation}\label{eq:single_concept}
    O_j(\mathbf{X}_j^{(\ell)}) = \Wb_j^{(\ell)} \mathbf{X}_j^{(\ell)}
\end{equation}
while the composed output (with all $T$ concepts) is:
\begin{equation}\label{eq:composed_output}
    \tilde{O}_T(\Xb_j^{(\ell)}) = \Wb_T^{(\ell)}\Xb_j^{(\ell)} = O_j(\Xb_j^{(\ell)}) + \sum_{k=j+1}^T \Ab_k^{(\ell)}(\Bb_k^{(\ell)})^\top \Xb_j^{(\ell)}.
\end{equation}
The total crosstalk at layer $\ell$ for concept $j$ is:
\begin{equation}\label{eq:crosstalk_def}
    \Deltab_j^{(\ell)} \triangleq \tilde{O}_T(\Xb_j^{(\ell)}) - O_j(\Xb_j^{(\ell)}) = \Cb_j^{(\ell)}\Xb_j^{(\ell)} \in \Rbb^{n_\ell \times p},
\end{equation}
where the crosstalk operator is:
\begin{equation}\label{eq:Cj_def}
    \Cb_j^{(\ell)} \triangleq \sum_{k=j+1}^T \Ab_k^{(\ell)}(\Bb_k^{(\ell)})^\top \;\in\; \Rbb^{n_\ell \times m_\ell}.
\end{equation}
This captures the interference of the parameters added after concept $j$ acting on concept $j$'s activations.
We decompose the activation into its component within concept $j$'s basis and its residual:
\begin{equation}\label{eq:decompose_X}
    \Xb_j^{(\ell)} = \Pc_{B_j}^{(\ell)}\Xb_j^{(\ell)} + \Pc_{B_j}^{(\ell),\perp} \Xb_j^{(\ell)}.
\end{equation}
where $\Pc_{B_j}^{(\ell)} = \Bb_j^{(\ell)}\left((\Bb_j^{(\ell)})^\top\Bb_j^{(\ell)}\right)^{-1}(\Bb_j^{(\ell)})^\top$ is the projector onto $\mathrm{col}(\Bb_j^{(\ell)})$ and its complement is $\Pc_{B_j}^{(\ell),\perp} = \Ib_{m^{(\ell)}}-\Pc_{B_j}^{(\ell)}$.
Since $\Bb_j^{(\ell)}$ is the learned LoRA basis for concept $j$, it captures the relevant features of $\Xb_j^{(\ell)}$. The first term carries most of the activation energy; the second term is the leaked energy outside the learned subspace.

\begin{proposition}\label{prop:crosstalk_residual}
    The crosstalk depends only on the residual component:
    \begin{equation}\label{eq:crosstalk_residual}
        \Deltab_j^{(\ell)} = \Cb_j^{(\ell)}\Xb_j^{(\ell)} = \Cb_j^{(\ell)}\Pc_{B_j}^{(\ell),\perp} \Xb_j^{(\ell)} = \Cb_j^{(\ell)} \Xb_j^{(\ell),\perp}.
    \end{equation}
    That is, the signal component $\Pc_{B_j}^{(\ell)}\Xb_j^{(\ell)}$ is invisible to the crosstalk.
\end{proposition}

\begin{proof}
    Substituting the decomposition~\eqref{eq:decompose_X}:
    \begin{equation}
        \Cb_j^{(\ell)}\Xb_j^{(\ell)} = \Cb_j^{(\ell)}\Pc_{B_j}^{(\ell)}\Xb_j^{(\ell)} + \Cb_j^{(\ell)}\Pc_{B_j}^{(\ell),\perp} \Xb_j^{(\ell)}.
    \end{equation}
    We show the first term vanishes. Expanding:
    \begin{equation}
        \Cb_j^{(\ell)}\Pc_{B_j}^{(\ell)} = \sum_{k=j+1}^T\Ab_k^{(\ell)}(\Bb_k^{(\ell)})^\top \Pc_{B_j}^{(\ell)}\,.
    \end{equation}
    Now $(\Bb_k^{(\ell)})^\top\Pc_{B_j}^{(\ell)} = (\Bb_k^{(\ell)})^\top \Bb_j^{(\ell)}(\Gb_j^{(\ell)})^{-1}(\Bb_j^{(\ell)})^\top$, where $\Gb_k^{(\ell)} = (\Bb_k^{(\ell)})^\top \Bb_k^{(\ell)}$. Since $(\Bb_k^{(\ell)})^\top \Bb_j^{(\ell)} = 0$ for $k \neq j$:
    \begin{equation}\label{eq:Cj_Pj_zero}
        \Cb_j^{(\ell)}\Pc_{B_j}^{(\ell)} = 0.
    \end{equation}
    Therefore $\Deltab_j^{(\ell)} = \Cb_j^{(\ell)}\Pc_{B_j}^{(\ell),\perp} \Xb_j^{(\ell)} = \Cb_j^{(\ell)} \Xb_j^{(\ell),\perp}$.
\end{proof}

This result implies that the entire activation energy within $\text{col}(\Bb_j^{(\ell)})$, regardless of its magnitude, does not contribute to the crosstalk, and the crosstalk sees only the residual

The residual activation covariance $\xb_j^{(\ell),\perp} = \text{vec}\left(\Xb_j^{(\ell),\perp}\right)$ is defined as follows
\begin{equation}\label{eq:Sigma_j}
    \text{Cov}(\xb_j^{(\ell),\perp}) \triangleq \EX\left\{\xb_j^{(\ell),\perp}(\xb_j^{(\ell),\perp})^\top\right\} = \Psib_j^{(\ell)}\otimes\Sigmab_j^{(\ell),\perp} \in \Rbb^{m_\ell p}.
\end{equation}
It is important to note that since $\Bb_j^{(\ell)}$ captures most of the activation energy of concept $j$, the residual covariance $\Sigmab_j^{(\ell),\perp}$  may have a small energy (small eigenvalues).

Substituting the matrix sub-Gaussian process into the crosstalk \eqref{eq:crosstalk_residual}, we obtain:
\begin{equation}
    \Deltab_j^{(\ell)} = \Cb_j^{(\ell)} \mathbf{X}_j^{(\ell),\perp} = \Cb_j^{(\ell)} (\Sigmab_j^{(\ell),\perp})^{1/2} \Zb^{(\ell)} (\Psib_j^{(\ell)})^{1/2} = \Qb_j^{(\ell)} \Zb^{(\ell)} (\mathbf{\Psi}_j)^{1/2},
\end{equation}
where $\Qb_j^{(\ell)} \triangleq \Cb_j^{(\ell)} (\Sigmab_j^{(\ell),\perp})^{1/2} \in \Rbb^{n_\ell \times m_\ell}$. The squared Frobenius norm of the crosstalk is a quadratic form:
\begin{align}
    z \triangleq \|\Deltab_j^{(\ell)}\|_F^2 &= \Tr\left( (\Qb_j^{(\ell)} \Zb^{(\ell)} (\Psib_j^{(\ell)})^{1/2})^\top (\Qb_j^{(\ell)} \Zb^{(\ell)} (\Psib_j^{(\ell)})^{1/2}) \right) \nonumber \\
    &= \Tr\left( ({\Zb^{(\ell)}})^\top \left((\Qb_j^{(\ell)})^\top \Qb_j^{(\ell)}\right) \Zb^{(\ell)} \Psib_j^{(\ell)} \right).
\end{align}
By using the standard matrix trace identity $\Tr(\Ab^\top \Bb \Ab \Cb) = \operatorname{vec}(\Ab)^\top (\Cb \otimes \Bb) \operatorname{vec}(\Ab)$, we vectorize the independent sub-Gaussian matrix $\Zb^{(\ell)}$ into $\zb^{(\ell)} \in \Rbb^{m_\ell p}$. The trace perfectly rewrites as a standard vector quadratic form:
\begin{equation}
    z = ({\zb^{(\ell)}})^\top \Hb \zb^{(\ell)},
\end{equation}
where $\Hb = \Psib_j^{(\ell)} \otimes \left((\Qb_j^{(\ell)})^\top \Qb_j^{(\ell)}\right)$.
Taking the expectation of the variable $z$ and using $\Tr(\Ab\otimes\Bb) = \Tr(\Ab)\Tr(\Bb)$ \citep{horn1991topics}
\begin{align}\label{eq:muzmain}
    \nonumber&\mu_z = \EX\{{(\zb^{(\ell)})}^\top \Hb \zb^{(\ell)}\} = \Tr(\EX\{\zb^{(\ell)}({\zb^{(\ell)}})^\top\}\Hb) = \Tr(\Hb) =\\& \Tr(\Psib_j^{(\ell)}) \Tr\left({\left((\Qb_j)^{(\ell)}\right)}^\top \Qb_j^{(\ell)}\right) = \Tr(\Psib_j^{(\ell)}) \nf{\Qb_j^{(\ell)}}^2,
\end{align}
where, based on the unit variance sub-Gaussian assumption, we have $\EX\{\zb^{(\ell)}({\zb^{(\ell)}})^\top\} = \Ib$.
By the Hanson-Wright inequality for sub-Gaussian random vectors \citep[Theorem 6.2.2]{Vershynin_2026}, for any $t > 0$:
\begin{equation} \label{eq:hw_ineq}
    \mathbb{P}(|z - \mu_z| > t) \le 2 \exp\left[ -c \min\left( \frac{t^2}{K^4 \nf{\Hb}^2}, \frac{t}{K^2 \nt{\Hb}} \right) \right],
\end{equation}
where $c > 0$ is a universal constant. We now bound the norms of $\Hb$.

For the $\ell_2$ norm and using $\nt{\Ab \otimes \Bb} = \nt{\Ab} \nt{\Bb}$ \citep{horn1991topics}, we have:
\begin{equation}\label{eq:l2norm}
    \nt{\Hb} = \nt{\Psib_j^{(\ell)} \otimes \left((\Qb_j^{(\ell)})^\top \Qb_j^{(\ell)}\right)} = \nt{\Psib_j^{(\ell)}} \nt{(\Qb_j^{(\ell)})^\top \Qb_j^{(\ell)}} = \nt{\Psib_j^{(\ell)}} \nt{\Qb_j^{(\ell)}}^2.
\end{equation}
To compute the squared Frobenius norm $\nf{\Hb}^2$, we use the property $\nf{\Ab\otimes\Bb} = \nf{\Ab}\nf{\Bb}$ \citep{horn1991topics}.
Consequently,
\begin{equation}\label{eq:nfnorm}
    \nf{\Hb} = \nf{\Psib_j^{(\ell)} \otimes \left((\Qb_j^{(\ell)})^\top \Qb_j^{(\ell)}\right)}= \nf{\Psib_j^{(\ell)}} \nf{(\Qb_j^{(\ell)})^\top \Qb_j^{(\ell)}}.
\end{equation}
To derive the exact minimal $t$ that guarantees a failure probability of at most $\xi$, we set the right-hand side equal to $\xi$:
\begin{equation}
    \min\left( \frac{t^2}{K^4 \nf{\Hb}^2}, \frac{t}{K^2 \nt{\Hb}} \right) = \frac{1}{c} \log \frac{2}{\xi}.
\end{equation}
For the minimum of two non-negative terms to equal a value $x$, both terms must be at least $x$. This yields two lower bounds for $t$:
\begin{align}
    \frac{t^2}{K^4 \nf{\Hb}^2} \ge \frac{1}{c} \log \frac{2}{\xi} \quad &\implies \quad t \ge \frac{K^2 \|\Hb\|_F}{\sqrt{c}} \sqrt{\log \frac{2}{\xi}}, \\
    \frac{t}{K^2 \nt{\Hb}} \ge \frac{1}{c} \log \frac{2}{\xi} \quad &\implies \quad t \ge \frac{K^2 \nt{\Hb}}{c} \log \frac{2}{\xi}.
\end{align}
To tightly satisfy both conditions, $t$ must be exactly the maximum of the two bounds. Using \eqref{eq:l2norm} and \eqref{eq:nfnorm} in the above inequalities, we have
\begin{equation}
    t^{(\ell)} = \max \left( \sqrt{C_1} K^2 \nf{\Psib_j^{(\ell)}}\nf{(\Qb_j^{(\ell)})^\top\Qb_j^{(\ell)}} \sqrt{\log \frac{2}{\xi}}, \,\, C_1 K^2 \nt{\Psib_j^{(\ell)}}\nt{\Qb_j^{(\ell)}}^2 \log \frac{2}{\xi} \right),
\end{equation}
where $C_1 = 1/c$. Using \eqref{eq:muzmain} and the above equation in the Hanson-Wright inequality results in the following crosstalk upper bound with probability $1-\xi$
\begin{equation}
\label{eq:finalbound}
    \nf{\Deltab_j^{(\ell)}} \le \sqrt{\Tr(\Psib_j^{(\ell)})~\nf{\Qb_j^{(\ell)}}^2 + t^{(\ell)}}.
\end{equation}
This concludes the proof of Lemma \ref{lem:hanson_wright}.
\end{proof}
\begin{lemma}[End-to-End Catastrophic Forgetting]
\label{lem:e2e_forgetting}
Under Assumptions~\ref{assum:multilayer_subgaussian} item \ref{assum_item:lipschitz}, let $\Gamma_\ell \triangleq \prod_{\ell'=\ell+1}^{L} \gamma_{\ell'} \nt{\Wb_T^{(\ell')}}$ be the downstream amplification factor at layer $\ell$. For any $\xi \in (0,1)$, the end-to-end catastrophic forgetting satisfies, with probability at least $1-\xi$:
\begin{equation}\label{eq:e2e_bound}
    \Lc_j(\theta_T) - \Lc_j(\theta_j) \le L_o \sum_{\ell=1}^{L} \Gamma_\ell \nf{\Deltab_j^{(\ell)}},
\end{equation}
where $\theta_T=\{\Wb_T^{(\ell)}\}_{\ell=1}^L$ and $\theta_j=\{\Wb_j^{(\ell)}\}_{\ell=1}^L$ are the full model after learning concept $j$ and $T$, respectively.
\end{lemma}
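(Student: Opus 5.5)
The plan is a deterministic perturbation argument that propagates activation differences layer by layer. The randomness enters only through the per-layer crosstalks $\Deltab_j^{(\ell)}$; these are bounded separately in Lemma~\ref{lem:hanson_wright}. On the event where those bounds hold, the present lemma is a purely Lipschitz statement. Its probability qualifier is inherited from whatever event is used downstream when it is combined with Lemma~\ref{lem:hanson_wright}.

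We feed concept $j$'s input to two networks. Write $\Yb_j^{(\ell)}$ for the layer-$\ell$ outputs of the network with parameters $\theta_j$, and $\widehat{\Yb}^{(\ell)}$ for those of the network with parameters $\theta_T$. Let $\Xb_j^{(\ell)}$ and $\widehat{\Xb}^{(\ell)}$ be the corresponding layer inputs, related by $\Xb_j^{(\ell+1)}=\phi_{\ell+1}(\Yb_j^{(\ell)})$ and $\widehat{\Xb}^{(\ell+1)}=\phi_{\ell+1}(\widehat{\Yb}^{(\ell)})$. The first input is shared, so $\widehat{\Xb}^{(1)}=\Xb_j^{(1)}$. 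Define $e_\ell \triangleq \nf{\widehat{\Yb}^{(\ell)}-\Yb_j^{(\ell)}}$. At each layer I will add and subtract $\Wb_T^{(\ell)}\Xb_j^{(\ell)}$:
\begin{equation*}
\widehat{\Yb}^{(\ell)}-\Yb_j^{(\ell)} = \Wb_T^{(\ell)}\bigl(\widehat{\Xb}^{(\ell)}-\Xb_j^{(\ell)}\bigr) + \bigl(\Wb_T^{(\ell)}-\Wb_j^{(\ell)}\bigr)\Xb_j^{(\ell)}.
\end{equation*}
The second term is exactly $\Cb_j^{(\ell)}\Xb_j^{(\ell)}=\Deltab_j^{(\ell)}$ by \eqref{eq:crosstalk_def}. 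By Proposition~\ref{prop:crosstalk_residual}, it depends only on the residual $\Xb_j^{(\ell),\perp}$.

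For the first term, I use $\nf{\Wb\Xb}\le\nt{\Wb}\nf{\Xb}$ and the $\gamma$-Lipschitzness of $\phi_\ell$ from Assumption~\ref{assum:multilayer_subgaussian}, item~\ref{assum_item:lipschitz}. This gives the linear recursion
\begin{equation*}
e_\ell \le \gamma_{\ell}\nt{\Wb_T^{(\ell)}}\, e_{\ell-1} + \nf{\Deltab_j^{(\ell)}}, \qquad e_1=\nf{\Deltab_j^{(1)}}.
\end{equation*}
The indexing convention must match the definition of $\Gamma_\ell$. I take the map feeding layer $\ell$ to carry the factor $\gamma_\ell$, so that unrolling multiplies $\nf{\Deltab_j^{(\ell)}}$ by $\prod_{\ell'=\ell+1}^L\gamma_{\ell'}\nt{\Wb_T^{(\ell')}}=\Gamma_\ell$. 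A straightforward induction then gives $e_L\le\sum_{\ell=1}^L\Gamma_\ell\nf{\Deltab_j^{(\ell)}}$.

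Finally, the task loss is $L_o$-Lipschitz in the final output. Hence
\begin{equation*}
\Lc_j(\theta_T)-\Lc_j(\theta_j)\le L_o\, e_L,
\end{equation*}
which is \eqref{eq:e2e_bound}.

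I expect the main obstacle to be bookkeeping rather than analysis. First, the Lipschitz index convention ($\phi_{\ell+1}$ being $\gamma_\ell$- versus $\gamma_{\ell+1}$-Lipschitz) must be aligned with the product in $\Gamma_\ell$; I would fix it so the telescoped products match. Second, the residual model of Assumption~\ref{assum:multilayer_subgaussian} describes $\Xb_j^{(\ell)}$, the activations of the $\theta_j$ network. This is why the decomposition must evaluate the crosstalk at $\Xb_j^{(\ell)}$, not $\widehat{\Xb}^{(\ell)}$, and why the perturbed inputs are multiplied by $\Wb_T^{(\ell)}$. The split written above does exactly this.
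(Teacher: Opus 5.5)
Your proposal is correct and follows the paper's proof essentially step for step. It uses the same add-and-subtract of $\Wb_T^{(\ell)}\Xb_j^{(\ell)}$, the same linear recursion $e_\ell \le \gamma_\ell\nt{\Wb_T^{(\ell)}}\,e_{\ell-1} + \nf{\Deltab_j^{(\ell)}}$ unrolled into the $\Gamma_\ell$-weighted sum, and the same final $L_o$-Lipschitz step; your explicit fixing of the Lipschitz index convention and your remark that the lemma is deterministic (its probability qualifier only inherited from Lemma~\ref{lem:hanson_wright}) are if anything cleaner than the paper, which leaves both points implicit.
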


\begin{proof}
Let $O_j(\Xb_j^{(\ell)}) = \Wb_j^{(\ell)}\Xb_j^{(\ell)}$ and $\tilde{O}_T(\tilde{\Xb}_j^{(\ell)}) = \Wb_T^{(\ell)}\tilde{\Xb}_j^{(\ell)}$ denote the layer-$\ell$ inputs under model's parameters $\Wb_j^{(\ell)}$ and the composed model's parameters $T$ concepts ($\Wb_T^{(\ell)}$), respectively.
Defining the layer-wise perturbation as $e_\ell \triangleq \nf{O_j(\Xb_j^{(\ell)})  - \tilde{O}_T(\tilde{\Xb}_j^{(\ell)})}$, we have $e_0 = 0$ at the network input.

For $\ell \ge 1$, the inputs to the LoRA layer are $\Xb_j^{(\ell+1)} = \phi_\ell(O_j(\Xb_j^{(\ell)}))$ and $\hat{\Xb}_j^{(\ell+1)} = \phi_\ell(\tilde{O}_T(\tilde{\Xb}_j^{(\ell)}))$. The output difference expands as:
\begin{align}
    \tilde{O}_T(\tilde{\Xb}_j^{(\ell)}) - O_j(\Xb_j^{(\ell)}) = \Wb_T^{(\ell)}\tilde{\Xb}_j^{(\ell)} - \Wb_j^{(\ell)}\Xb_j^{(\ell)} = \Wb_T^{(\ell)}\big(\tilde{\Xb}_j^{(\ell)} - \Xb_j^{(\ell)}\big) + \Cb_j^{(\ell)}{\Xb}_j^{(\ell)}. \label{eq:output_perturb}
\end{align}
Taking the Frobenius norm, applying the triangle inequality, and utilizing the $\gamma_\ell$-Lipschitz property of the inter-layer mapping $\phi_\ell$ yields the linear recursion:
\begin{equation}
    e_\ell \le \nt{\Wb_T^{(\ell)}} \nf{\tilde{\Xb}_j^{(\ell)} - \Xb_j^{(\ell)}} + \nf{\Cb_j^{(\ell)}{\Xb}_j^{(\ell)}} \le \alpha_\ell e_{\ell-1} + \nf{{\Deltab}_j^{(\ell)}}, \label{eq:e_recursion_triangle}
\end{equation}
where we define $\alpha_\ell \triangleq \gamma_\ell \nt{\Wb_T^{(\ell)}}$ as the effective per-layer amplification and $\Deltab_j^{(\ell)} \triangleq \Cb_j^{(\ell)}{\Xb}_j^{(\ell)}$ as the injected crosstalk.

Unrolling the linear recursion \eqref{eq:e_recursion_triangle} entirely from $\ell = 1$ to $L$, and recognizing that the cumulative amplification from layer $\ell$ to $L$ is exactly $\Gamma_\ell = \prod_{\ell'=\ell+1}^{L} \alpha_{\ell'}$, we have the following bound
\begin{equation} \label{eq:l1_unroll}
    e_L \le \sum_{\ell=1}^L \Gamma_\ell \nf{\Deltab_j^{(\ell)}}.
\end{equation}
Finally, by the $L_o$-Lipschitz continuity of the downstream mapping $f_j$, the end-to-end catastrophic forgetting is bounded directly by the final-layer perturbation:
\begin{equation}
    \Lc_j(\theta_T) - \Lc_j(\theta_j) = f_j(\tilde{O}_T(\hat{\Xb}_j^{(L)})) - f_j(O_j(\hat{\Xb}_j^{(L)})) \le L_o \nf{\tilde{O}_T(\hat{\Xb}_j^{(L)}) - O_j(\hat{\Xb}_j^{(L)})} = L_o e_L.
\end{equation}
Using \eqref{eq:l1_unroll} in the above inequality results in
\begin{equation}
    \Lc_j(\theta_T) - \Lc_j(\theta_j) \leq L_o \sum_{\ell=1}^L \Gamma_\ell \nf{\Deltab_j^{(\ell)}}.
\end{equation}
\end{proof}
We now synthesize the results of Lemma \ref{lem:hanson_wright} and Lemma \ref{lem:e2e_forgetting} to complete the proof of the bound \eqref{eq:e2e_boundfin} in Theorem \ref{thm:hanson_wright_catas}. To ensure a global probabilistic guarantee across the entire network, we apply the statistical bound from Lemma \ref{lem:hanson_wright} with an adjusted layer-wise failure probability of $\xi/L$. Substituting this high-probability upper bound \eqref{eq:crosstalkupp} for the crosstalk norm $\nf{\Deltab_j^{(\ell)}}$ into the end-to-end deterministic bound \eqref{eq:e2e_bound}, we apply Boole's inequality (the union bound) over all $\ell \in [L]$. This yields to the inequality \eqref{eq:e2e_boundfin}. This concludes the proof of catastrophic forgetting bound \eqref{eq:e2e_boundfin} in Theorem \ref{thm:hanson_wright_catas}.

In Theorem \ref{thm:hanson_wright_catas}, due to the residual model $\Xb_j^{(\ell),\perp} = (\Sigmab_j^{(\ell),\perp})^{1/2} \Zb^{(\ell)} (\Psib_j^{(\ell)})^{1/2}$ in Assumption \ref{assum_item:subgaussian}, the trace of the covariance matrix of residual is given by
\begin{align*}
\Tr\Big(\EX\{ (\mathbf{X}_j^{(\ell),\perp})^\top \mathbf{X}_j^{(\ell),\perp} \}\Big) &= \Tr\Big(\EX\{ (\boldsymbol{\Psi}_j^{(\ell)})^{1/2} (\mathbf{Z}^{(\ell)})^\top \boldsymbol{\Sigma}_j^{(\ell),\perp} \mathbf{Z}^{(\ell)} (\boldsymbol{\Psi}_j^{(\ell)})^{1/2}  \}\Big) \\
&= \Tr\left( \boldsymbol{\Psi}_j^{(\ell)} \EX\{ (\mathbf{Z}^{(\ell)})^\top \boldsymbol{\Sigma}_j^{(\ell),\perp} \mathbf{Z}^{(\ell)} \} \right) \\
&= \Tr\left( \boldsymbol{\Psi}_j^{(\ell)} \cdot \Tr(\boldsymbol{\Sigma}_j^{(\ell),\perp}) \Ib_p \right) \\
&= \Tr(\boldsymbol{\Sigma}_j^{(\ell),\perp}) \Tr(\boldsymbol{\Psi}_j^{(\ell)}).
\end{align*}
Also, we know $\Xb_j^{(\ell),\perp} = \Pc_{B_j}^{(\ell),\perp}\Xb_j^{(\ell)}$, $\Sigmab_j^{(\ell)} = \EX\{\Xb_j^{(\ell)}(\Xb_j^{(\ell)})^\top\}$ and using the above equation results in
\begin{equation*}
    \Tr(\EX\{\Xb_j^{(\ell),\perp}(\Xb_j^{(\ell),\perp})^\top\}) = \Tr(\Pc_{B_j}^{(\ell),\perp}\Sigmab_j^{(\ell)}) = \Tr(\Sigmab_j^{(\ell),\perp}) \Tr(\Psib_j^{(\ell)}).
\end{equation*}
The term $\Tr(\Sigmab_j^{(\ell),\perp}) \Tr(\Psib_j^{(\ell)})$ governs the expected crosstalk energy within the end-to-end bound in \eqref{eq:e2e_boundfin}, identifying the residual covariance mass
 as the primary driver of catastrophic forgetting. This leads to a fundamental optimization question: among all feasible orthogonal rank-$r$ subspaces, which one minimizes this residual energy?
In the next subsection, we address this question, which proves the second part of Theorem \ref{thm:hanson_wright_catas} regarding minimizing the energy of interference.

\subsection{Optimal basis selection inside the free complement}\label{sec:optbas}
Let
\begin{equation}
    \Bb^{(\ell)}_{\mathrm{int}}
    \triangleq
    [\Bb_1^{(\ell)},\dots,\Bb_{j-1}^{(\ell)}],
    \qquad
    \Pc_{<j}^{(\ell)}
    \triangleq
    \Bb^{(\ell)}_{\mathrm{int}}
    \left((\Bb^{(\ell)}_{\mathrm{int}})^\top \Bb^{(\ell)}_{\mathrm{int}}\right)^{-1}
    (\Bb^{(\ell)}_{\mathrm{int}})^\top,
\end{equation}
and define the currently available orthogonal complement
\begin{equation}
    \Pc_{\mathrm{free},j}^{(\ell)}
    \triangleq
    \Ib-\Pc_{<j}^{(\ell)}.
\end{equation}
A rank-\(r\) projector \(\Pc^{(\ell)}\) is \emph{feasible} for concept \(j\) if
\begin{equation}
    \Pc^{(\ell)}=(\Pc^{(\ell)})^\top=(\Pc^{(\ell)})^2,
    \qquad
    \operatorname{rank}(\Pc^{(\ell)})=r,
    \qquad
    \operatorname{range}(\Pc^{(\ell)})\subseteq \operatorname{range}(\Pc_{\mathrm{free},j}^{(\ell)}).
\end{equation}
Equivalently, \(\Pc^{(\ell)}\) is any rank-\(r\) subspace that obeys the orthogonality constraints with all previously learned concepts. For any projector \(\Pc^{(\ell)}\), write \(\Pc^{(\ell),\perp} \triangleq \Ib-\Pc^{(\ell)}\). Define the compressed covariance inside the currently free subspace by
\begin{equation}
    \widetilde{\Sigmab}_j^{(\ell)}
    \triangleq
    \Pc_{\mathrm{free},j}^{(\ell)}\Sigmab_j^{(\ell)} \Pc_{\mathrm{free},j}^{(\ell)},
\end{equation}
and let \(d_{\mathrm{free},j} \triangleq \operatorname{rank}(\Pc_{\mathrm{free},j}^{(\ell)})\). Denote the eigenvalues of \(\widetilde{\Sigmab}_j^{(\ell)}\) by
\begin{equation}
    \lambda_1(\widetilde{\Sigmab}_j^{(\ell)})\ge \lambda_2(\widetilde{\Sigmab}_j^{(\ell)})\ge \cdots \ge \lambda_{d_{\mathrm{free},j}}(\widetilde{\Sigmab}_j^{(\ell)})\ge 0.
\end{equation}

\begin{theorem}[Optimal feasible basis captures maximal task energy]
\label{thm:optimal_feasible_basis}
Assume \(r \le d_{\mathrm{free},j}\). Then for any feasible rank-\(r\) projector \(\Pc^{(\ell)}\),
\begin{equation}
    \Tr(\Pc^{(\ell),\perp} \Sigmab_j^{(\ell)})
    =
    \Tr\!\left((\Ib-\Pc_{\mathrm{free},j}^{(\ell)})\Sigmab_j^{(\ell)}\right)
    +
    \Tr\!\left((\Pc_{\mathrm{free},j}^{(\ell)}-\Pc^{(\ell)})\widetilde{\Sigmab}_j^{(\ell)}\right).
\end{equation}
Hence minimizing the residual energy \(\Tr(\Pc^{(\ell),\perp} \Sigmab_j^{(\ell)})\) over all feasible rank-\(r\) projectors is equivalent to maximizing the captured energy \(\Tr(\Pc^{(\ell)} \widetilde{\Sigmab}_j^{(\ell)})\). The minimizer is the projector \({\Pc_j^{{(\ell)}}}^\star\) onto the top-\(r\) eigenspace of \(\widetilde{\Sigmab}_j^{(\ell)}\), and the minimum residual energy is
\begin{equation}
    \Tr\!\left(({\Pc_j^{{(\ell)}}}^\star)^\perp \Sigmab_j^{(\ell)}\right)
    =
    \Tr\!\left((\Ib-\Pc_{\mathrm{free},j}^{(\ell)})\Sigmab_j^{(\ell)}\right)
    +
    \sum_{q=r+1}^{d_{\mathrm{free},j}} \lambda_q(\widetilde{\Sigmab}_j^{(\ell)}).
\end{equation}
\end{theorem}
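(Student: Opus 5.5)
The plan is to split the identity $\Ib = \Pc^{(\ell)} + (\Pc_{\mathrm{free},j}^{(\ell)}-\Pc^{(\ell)}) + (\Ib-\Pc_{\mathrm{free},j}^{(\ell)})$. First I will record the algebraic consequences of feasibility. Since $\operatorname{range}(\Pc^{(\ell)})\subseteq\operatorname{range}(\Pc_{\mathrm{free},j}^{(\ell)})$ and both are orthogonal projectors, we have $\Pc_{\mathrm{free},j}^{(\ell)}\Pc^{(\ell)}=\Pc^{(\ell)}$. Taking transposes gives $\Pc^{(\ell)}\Pc_{\mathrm{free},j}^{(\ell)}=\Pc^{(\ell)}$. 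Hence $\Pc_{\mathrm{free},j}^{(\ell)}-\Pc^{(\ell)}$ is itself an orthogonal projector, of rank $d_{\mathrm{free},j}-r$, and it satisfies $(\Pc_{\mathrm{free},j}^{(\ell)}-\Pc^{(\ell)})=\Pc_{\mathrm{free},j}^{(\ell)}(\Pc_{\mathrm{free},j}^{(\ell)}-\Pc^{(\ell)})\Pc_{\mathrm{free},j}^{(\ell)}$.

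Next I will write $\Pc^{(\ell),\perp}=(\Ib-\Pc_{\mathrm{free},j}^{(\ell)})+(\Pc_{\mathrm{free},j}^{(\ell)}-\Pc^{(\ell)})$ and take the trace against $\Sigmab_j^{(\ell)}$. The first term is already in the claimed form. For the second term, inserting $\Pc_{\mathrm{free},j}^{(\ell)}$ on both sides and using cyclicity of the trace gives
\[
\Tr\bigl((\Pc_{\mathrm{free},j}^{(\ell)}-\Pc^{(\ell)})\Sigmab_j^{(\ell)}\bigr)=\Tr\bigl((\Pc_{\mathrm{free},j}^{(\ell)}-\Pc^{(\ell)})\widetilde{\Sigmab}_j^{(\ell)}\bigr).
\]
This establishes the decomposition identity.

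The first summand does not depend on $\Pc^{(\ell)}$. The quantity $\Tr(\Pc_{\mathrm{free},j}^{(\ell)}\widetilde{\Sigmab}_j^{(\ell)})=\Tr(\widetilde{\Sigmab}_j^{(\ell)})$ is also constant. So minimizing the residual energy is equivalent to maximizing $\Tr(\Pc^{(\ell)}\widetilde{\Sigmab}_j^{(\ell)})$ over feasible projectors.

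For the maximization step I will invoke Ky Fan's maximum principle. Over all rank-$r$ orthogonal projectors, $\Tr(\Pc\widetilde{\Sigmab}_j^{(\ell)})\le\sum_{q=1}^r\lambda_q(\widetilde{\Sigmab}_j^{(\ell)})$, with equality for the projector onto a top-$r$ eigenspace. The main subtlety is feasibility of the maximizer. The matrix $\widetilde{\Sigmab}_j^{(\ell)}$ is PSD and vanishes on $\operatorname{range}(\Ib-\Pc_{\mathrm{free},j}^{(\ell)})$, which is invariant under it. Therefore its eigenvectors can be chosen inside $\operatorname{range}(\Pc_{\mathrm{free},j}^{(\ell)})$ and the complement separately. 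Its nonzero spectrum is carried by the free range, and the $d_{\mathrm{free},j}$ eigenvalues there are the listed ones. Since $r\le d_{\mathrm{free},j}$, a top-$r$ eigenspace can be chosen inside the free range. Ties at zero are resolved by picking free-range eigenvectors. That choice is feasible, so the unconstrained Ky Fan bound is attained within the feasible set.

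Finally, substituting into the identity gives a minimum residual of $\Tr((\Ib-\Pc_{\mathrm{free},j}^{(\ell)})\Sigmab_j^{(\ell)})+\Tr(\widetilde{\Sigmab}_j^{(\ell)})-\sum_{q\le r}\lambda_q$. The last two terms equal $\sum_{q=r+1}^{d_{\mathrm{free},j}}\lambda_q(\widetilde{\Sigmab}_j^{(\ell)})$, because the trace of $\widetilde{\Sigmab}_j^{(\ell)}$ is the sum of its eigenvalues on the free range. I expect the eigenspace-feasibility argument to be the only step requiring care.
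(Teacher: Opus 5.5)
Your proposal is correct and follows essentially the same route as the paper: you split $\Ib-\Pc^{(\ell)}=(\Ib-\Pc_{\mathrm{free},j}^{(\ell)})+(\Pc_{\mathrm{free},j}^{(\ell)}-\Pc^{(\ell)})$, sandwich with $\Pc_{\mathrm{free},j}^{(\ell)}$ to pass to $\widetilde{\Sigmab}_j^{(\ell)}$, and then apply the variational (Ky Fan) characterization of the top-$r$ eigenvalues. The paper leaves implicit that a top-$r$ eigenspace of $\widetilde{\Sigmab}_j^{(\ell)}$ can be chosen inside the free range, so that the unconstrained bound is attained by a feasible projector; your explicit check of this, including the tie-breaking at eigenvalue zero, is handled correctly.
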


\begin{proof}
Because \(\operatorname{range}(\Pc^{(\ell)})\subseteq \operatorname{range}(\Pc_{\mathrm{free},j}^{(\ell)})\), we have
\begin{equation}
    \Pc_{\mathrm{free},j}^{(\ell)}\Pc^{(\ell)} = \Pc^{(\ell)}\Pc_{\mathrm{free},j}^{(\ell)} = \Pc^{(\ell)}.
\end{equation}
Therefore
\begin{equation}
    \Ib-\Pc^{(\ell)}
    =
    (\Ib-\Pc_{\mathrm{free},j}^{(\ell)}) + (\Pc_{\mathrm{free},j}^{(\ell)}-\Pc^{(\ell)}),
\end{equation}
and thus
\begin{align}
    \Tr(\Pc^{(\ell),\perp} \Sigmab_j^{(\ell)})
    &=
    \Tr\!\left((\Ib-\Pc_{\mathrm{free},j}^{(\ell)})\Sigmab_j^{(\ell)}\right)
    +
    \Tr\!\left((\Pc_{\mathrm{free},j}^{(\ell)}-\Pc)\Sigmab_j^{(\ell)}\right).
\end{align}
Since \(\Pc_{\mathrm{free},j}^{(\ell)}-\Pc\) acts entirely inside the free complement,
\begin{equation}
    \Tr\!\left((\Pc_{\mathrm{free},j}^{(\ell)}-\Pc)\Sigmab_j^{(\ell)}\right)
    =
    \Tr\!\left((\Pc_{\mathrm{free},j}^{(\ell)}-\Pc^{(\ell)})\Pc_{\mathrm{free},j}^{(\ell)}\Sigmab_j^{(\ell)}\Pc_{\mathrm{free},j}^{(\ell)}\right)
    =
    \Tr\!\left((\Pc_{\mathrm{free},j}^{(\ell)}-\Pc)\widetilde{\Sigmab}_j^{(\ell)}\right).
\end{equation}
This proves the decomposition.

Next, observe that
\begin{equation}
    \Tr\!\left((\Pc_{\mathrm{free},j}^{(\ell)}-\Pc^{(\ell)})\widetilde{\Sigmab}_j^{(\ell)}\right)
    =
    \Tr(\widetilde{\Sigmab}_j^{(\ell)}) - \Tr(\Pc^{(\ell)} \widetilde{\Sigmab}_j^{(\ell)}).
\end{equation}
Therefore minimizing \(\Tr(\Pc^{(\ell),\perp} \Sigmab_j^{(\ell)})\) is equivalent to maximizing \(\Tr(\Pc^{(\ell)} \widetilde{\Sigmab}_j^{(\ell)})\) over all feasible rank-\(r\) projectors. By the variational characterization of eigenvalues, the maximum is
\begin{equation}
    \max_{\Pc^{(\ell)}}\Tr(\Pc^{(\ell)} \widetilde{\Sigmab}_j^{(\ell)})
    =
    \sum_{q=1}^{r}\lambda_q(\widetilde{\Sigmab}_j^{(\ell)}),
\end{equation}
attained when \(\Pc^{(\ell)}={\Pc_j^{{(\ell)}}}^\star\) is the projector onto the top-\(r\) eigenspace of \(\widetilde{\Sigmab}_j^{(\ell)}\). Substituting back yields
\begin{align}
    \Tr\!\left(({\Pc_j^{{(\ell)}}}^\star)^\perp \Sigmab_j^{(\ell)}\right)
    &=
    \Tr\!\left((\Ib-\Pc_{\mathrm{free},j}^{(\ell)})\Sigmab_j^{(\ell)}\right)
    +
    \Tr(\widetilde{\Sigmab}_j^{(\ell)})
    -
    \sum_{q=1}^{r}\lambda_q(\widetilde{\Sigmab}_j^{(\ell)}) \\
    &=
    \Tr\!\left((\Ib-\Pc_{\mathrm{free},j}^{(\ell)})\Sigmab_j^{(\ell)}\right)
    +
    \sum_{q=r+1}^{d_{\mathrm{free},j}}\lambda_q(\widetilde{\Sigmab}_j^{(\ell)}).
\end{align}
\end{proof}

Theorem~\ref{thm:optimal_feasible_basis} identifies the best possible basis for continual-learning: among all subspaces compatible with the orthogonality constraints, the optimal one is the dominant task covariance subspace inside the currently available complement. This immediately explains why learning \(\Bb_j^{(\ell)}\) is beneficial: a frozen basis cannot adapt to the spectral structure of \(\widetilde{\Sigmab}_j^{(\ell)}\).

The next result makes this comparison explicit for random frozen bases, which are a natural abstraction of methods that choose an orthogonal basis a priori and then keep it fixed.

\begin{proposition}[Random frozen bases are optimal only in isotropic free complements]
\label{prop:random_frozen_basis}
Let \(\Pc_{\mathrm{rand}}\) be a Haar-random rank-\(r\) projector inside \(\operatorname{range}(\Pc_{\mathrm{free},j}^{(\ell)})\). Then
\begin{equation}
    \EX[\Pc_{\mathrm{rand}}]
    =
    \frac{r}{d_{\mathrm{free},j}}\Pc_{\mathrm{free},j}^{(\ell)},
    \qquad
    \EX\!\{\Tr(\Pc_{\mathrm{rand}}\widetilde{\Sigmab}_j^{(\ell)})\}
    =
    \frac{r}{d_{\mathrm{free},j}}\Tr(\widetilde{\Sigmab}_j^{(\ell)}),
\end{equation}
and therefore
\begin{equation}
    \EX\!\{\Tr(\Pc_{\mathrm{rand}}^\perp \Sigmab_j^{(\ell)})\}
    =
    \Tr\!\left((\Ib-\Pc_{\mathrm{free},j}^{(\ell)})\Sigmab_j^{(\ell)}\right)
    +
    \left(1-\frac{r}{d_{\mathrm{free},j}}\right)\Tr(\widetilde{\Sigmab}_j^{(\ell)}).
\end{equation}
Moreover,
\begin{equation}
    \EX\!\{\Tr(\Pc_{\mathrm{rand}}^\perp \Sigmab_j^{(\ell)})\}
    -
    \Tr\!\left(({\Pc_j^{{(\ell)}}}^\star)^\perp \Sigmab_j^{(\ell)}\right)
    =
    \sum_{q=1}^{r}\lambda_q(\widetilde{\Sigmab}_j^{(\ell)})
    -
    \frac{r}{d_{\mathrm{free},j}}
    \sum_{q=1}^{d_{\mathrm{free},j}}\lambda_q(\widetilde{\Sigmab}_j^{(\ell)})
    \ge 0.
\end{equation}
Thus, except in degenerate isotropic or capacity-saturated cases, a learned feasible basis strictly improves upon a frozen random feasible basis in expectation.
\end{proposition}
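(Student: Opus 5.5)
The plan is to first compute the mean of the Haar-random projector by invariance, then combine it with the decomposition of Theorem~\ref{thm:optimal_feasible_basis}.

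\emph{Step 1: the mean projector.} Write $V \triangleq \operatorname{range}(\Pc_{\mathrm{free},j}^{(\ell)})$, so $\dim V = d_{\mathrm{free},j}$. The law of $\Pc_{\mathrm{rand}}$ is invariant under conjugation $\Pc \mapsto \Ub\Pc\Ub^\top$ by every orthogonal $\Ub$ that fixes $V$ setwise and acts as the identity on $V^\perp$. Hence $\Mb \triangleq \EX[\Pc_{\mathrm{rand}}]$ commutes with every such $\Ub$. Since $\operatorname{range}(\Pc_{\mathrm{rand}}) \subseteq V$, $\Mb$ vanishes on $V^\perp$ and maps into $V$. Restricted to $V$, it commutes with the full orthogonal group $O(V)$, which acts irreducibly, so by Schur's lemma $\Mb = c\,\Pc_{\mathrm{free},j}^{(\ell)}$. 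Taking traces gives $r = \Tr(\Pc_{\mathrm{rand}}) = c\, d_{\mathrm{free},j}$, so $c = r/d_{\mathrm{free},j}$. Linearity of trace and expectation then yields
\[
\EX\{\Tr(\Pc_{\mathrm{rand}}\widetilde{\Sigmab}_j^{(\ell)})\} = \tfrac{r}{d_{\mathrm{free},j}}\Tr\bigl(\Pc_{\mathrm{free},j}^{(\ell)}\widetilde{\Sigmab}_j^{(\ell)}\bigr) = \tfrac{r}{d_{\mathrm{free},j}}\Tr(\widetilde{\Sigmab}_j^{(\ell)}),
\]
where the last equality uses $\Pc_{\mathrm{free},j}^{(\ell)}\widetilde{\Sigmab}_j^{(\ell)} = \widetilde{\Sigmab}_j^{(\ell)}$.

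\emph{Step 2: the residual energy and the gap.} $\Pc_{\mathrm{rand}}$ is almost surely a feasible rank-$r$ projector, so the decomposition in Theorem~\ref{thm:optimal_feasible_basis} applies pathwise:
\[
\Tr(\Pc_{\mathrm{rand}}^\perp\Sigmab_j^{(\ell)}) = \Tr\bigl((\Ib-\Pc_{\mathrm{free},j}^{(\ell)})\Sigmab_j^{(\ell)}\bigr) + \Tr(\widetilde{\Sigmab}_j^{(\ell)}) - \Tr(\Pc_{\mathrm{rand}}\widetilde{\Sigmab}_j^{(\ell)}).
\]
Taking expectations and inserting Step 1 gives the stated formula with factor $1 - r/d_{\mathrm{free},j}$. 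Subtracting the minimum value from Theorem~\ref{thm:optimal_feasible_basis}, and writing $\Tr(\widetilde{\Sigmab}_j^{(\ell)}) = \sum_{q=1}^{d_{\mathrm{free},j}}\lambda_q$ (the rank of $\widetilde{\Sigmab}_j^{(\ell)}$ is at most $d_{\mathrm{free},j}$, with zeros counted), the gap is $\sum_{q\le r}\lambda_q - \frac{r}{d}\sum_{q\le d}\lambda_q$, where $d \triangleq d_{\mathrm{free},j}$.

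\emph{Step 3: nonnegativity.} For eigenvalues sorted in nonincreasing order, the average of the top $r$ is at least the overall average: $\frac1r\sum_{q\le r}\lambda_q \ge \frac1d\sum_{q\le d}\lambda_q$. This follows since the top-$r$ mean is at least $\lambda_r$, which is at least the mean of the rest, and a convex combination of the two means equals the overall mean.

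\emph{Equality cases.} Equality holds iff $r = d$ (capacity saturated) or $\lambda_1 = \cdots = \lambda_d$, i.e.\ $\widetilde{\Sigmab}_j^{(\ell)}$ is isotropic on $V$. Both are the degenerate cases named in the statement.

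The only step needing care is the Schur-lemma argument in Step 1. One must set up the invariance group so that it fixes $V$ and acts irreducibly on it, and must check that Haar measure on the Grassmannian of $V$ is invariant under that group. Everything else is linear algebra inherited from Theorem~\ref{thm:optimal_feasible_basis}.
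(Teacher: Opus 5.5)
Your proposal is correct and follows the paper's proof essentially step for step. Both use the invariance argument to force $\EX[\Pc_{\mathrm{rand}}]$ to be a multiple of $\Pc_{\mathrm{free},j}^{(\ell)}$ (constant fixed by taking traces), apply the decomposition of Theorem~\ref{thm:optimal_feasible_basis}, and conclude with the inequality that the top-$r$ eigenvalue average is at least the overall average. Your explicit Schur-lemma justification, your proof of that averaging inequality, and your equality characterization ($r=d_{\mathrm{free},j}$ or $\lambda_1=\cdots=\lambda_{d_{\mathrm{free},j}}$) are more careful than the paper, which only asserts these points.
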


\begin{proof}
By rotational invariance of the Haar distribution on the \(d_{\mathrm{free},j}\)-dimensional free subspace, the expectation \(\EX[\Pc_{\mathrm{rand}}]\) must commute with every orthogonal transformation acting on \(\operatorname{range}(\Pc_{\mathrm{free},j}^{(\ell)})\). Hence it must be a scalar multiple of \(\Pc_{\mathrm{free},j}^{(\ell)}\), say
\begin{equation}
    \EX[\Pc_{\mathrm{rand}}] = \alpha \Pc_{\mathrm{free},j}^{(\ell)}.
\end{equation}
Taking traces gives
\begin{equation}
    r = \Tr(\EX[\Pc_{\mathrm{rand}}]) = \alpha\, d_{\mathrm{free},j},
\end{equation}
so \(\alpha = r/d_{\mathrm{free},j}\), proving the first identity.

Using \(\widetilde{\Sigmab}_j^{(\ell)}=\Pc_{\mathrm{free},j}^{(\ell)}\widetilde{\Sigmab}_j^{(\ell)}\),
\begin{equation}
    \EX\!\{\Tr(\Pc_{\mathrm{rand}}\widetilde{\Sigmab}_j^{(\ell)})\}
    =
    \Tr\!\left(\EX[\Pc_{\mathrm{rand}}]\widetilde{\Sigmab}_j^{(\ell)}\right)
    =
    \frac{r}{d_{\mathrm{free},j}}\Tr(\widetilde{\Sigmab}_j^{(\ell)}).
\end{equation}
The formula for \(\EX[\Tr(\Pc_{\mathrm{rand}}^\perp \Sigmab_j^{(\ell)})]\) then follows directly from Theorem~\ref{thm:optimal_feasible_basis}.

Finally, since the average of the top \(r\) eigenvalues is at least the average over all \(d_{\mathrm{free},j}\) eigenvalues,
\begin{equation}
    \frac{1}{r}\sum_{q=1}^{r}\lambda_q(\widetilde{\Sigmab}_j^{(\ell)})
    \ge
    \frac{1}{d_{\mathrm{free},j}}
    \sum_{q=1}^{d_{\mathrm{free},j}}\lambda_q(\widetilde{\Sigmab}_j^{(\ell)}),
\end{equation}
which implies
\begin{equation}
    \sum_{q=1}^{r}\lambda_q(\widetilde{\Sigmab}_j^{(\ell)})
    -
    \frac{r}{d_{\mathrm{free},j}}
    \sum_{q=1}^{d_{\mathrm{free},j}}\lambda_q(\widetilde{\Sigmab}_j^{(\ell)})
    \ge 0.
\end{equation}
This proves the claim.
\end{proof}
Combining the results in Section~\ref{subsec:proof} and Section~\ref{sec:optbas} proves Theorem \ref{thm:hanson_wright_catas}.
\begin{remark}[Relation to SeqLoRA]\label{rem:relseqlora}
Theorem~\ref{thm:optimal_feasible_basis} should not be interpreted as saying that SeqLoRA explicitly computes the top eigenspace of \(\widetilde{\Sigmab}_j^{(\ell)}\). Rather, it identifies the population quantity that matters for continual-learning stability under the orthogonality constraint. In the full nonlinear diffusion model, SeqLoRA seeks a favorable subspace implicitly by optimizing the end-to-end denoising loss over both \(\Ab_j^{(\ell)}\) and \(\Bb_j^{(\ell)}\), instead of freezing \(\Bb_j^{(\ell)}\) a priori.
\end{remark}

Theorem~\ref{thm:optimal_feasible_basis} provides a simple learning-theoretic justification for SeqLoRA. Orthogonality ensures that future concepts can only interfere with concept \(j\) through the residual covariance mass outside \(\operatorname{col}(\Bb_j)\), while learning \(\Bb_j^{(\ell)}\) from data minimizes exactly this exposed residual energy within the feasible complement. Proposition~\ref{prop:random_frozen_basis} further shows that frozen random bases are statistically optimal only when the task covariance is essentially isotropic inside the free complement. In this precise sense, optimizing both LoRA factors yields a strictly better continual-learning bias than fixing the basis in advance.

\section{Decomposition of Catastrophic Forgetting: Mean Interference vs. Stochastic Deviation}\label{subsec:interpre}
The end-to-end catastrophic forgetting bound derived in Theorem~\ref{thm:hanson_wright_catas} reveals a multi-scale structure. At each of the $L$ adapted layers, the locally injected crosstalk $\|\Deltab_j^{(\ell)}\|_F^2$ decomposes into two distinct sources: the expected systematic interference $\mu_z^{(\ell)}$ and the stochastic tail deviation $t^{(\ell)}$. These local statistical fluctuations are then propagated and aggregated through the downstream network layers, controlled by the amplification factors $\Gamma_\ell$, to produce the final global forgetting. We now analyze the physical meaning of each component and its role in the total network-wide interference.

\paragraph{Systematic Interference ($\mu_z^{(\ell)}$).}
The mean crosstalk $\mu_z = \Tr(\Psib_j^{(\ell)}) \nf{\Qb_j^{(\ell)}}^2$ factorizes into two physical quantities. The first, $\nf{\Qb_j^{(\ell)}}^2 = \Tr(\Cb_j^{(\ell)} \Sigmab_j^{(\ell),\perp})$, is the {feature-level interference energy}. Crucially, it depends exclusively on the residual covariance $\Sigmab_j^{(\ell),\perp}$, not the full covariance $\Sigmab_j^{(\ell)}$. This is a direct consequence of SeqLoRA's orthogonality constraint, which perfectly annihilates the primary signal component ($\Cb_j^{(\ell)} \Pc_{B_j}^{(\ell)} = \mathbf{0}$). The better the LoRA basis $\Bb_j^{(\ell)}$ captures concept $j$'s features, the smaller the residual $\Sigmab_j^{(\ell),\perp}$ becomes, driving feature interference toward small values. The second term, $\Tr(\Psib_j^{(\ell)})$, represents the total token sequence energy. Together, orthogonality guarantees that the expected forgetting is confined entirely to the small, uncaptured residual subspace. Additionally, because the activation $\Xb_j^{(\ell)}$ is itself the output of a network optimized for concept $j$, the LoRA bases across all layers work in concert to capture the concept's energy; any features not absorbed by $\Bb_j^{(\ell-1)}$ can be captured by $\Bb_j^{(\ell)}$, ensuring the per-layer residual covariance $\Sigmab_j^{(\ell),\perp}$ remains uniformly small throughout the network depth.

\paragraph{Stochastic Deviation ($t^{(\ell)}$).}
The deviation $t^{(\ell)}$ captures the random fluctuation of the crosstalk around its mean due to the realization of the sub-Gaussian noise. It is bounded by $t^{(\ell)} = \max(T_1, T_2)$, where:
\begin{align*}
    T_1 &= K^2 \sqrt{C_1} \nf{\Psib_j^{(\ell)}} \nf{(\Qb_j^{(\ell)})^\top \Qb_j^{(\ell)}} \sqrt{\log \frac{2}{\xi}} & &\text{(Sub-Gaussian Term)} \\
    T_2 &= K^2 C_1 \nt{\Psib_j^{(\ell)}} \nt{(\Qb_j^{(\ell)})^\top \Qb_j^{(\ell)}} \log \frac{2}{\xi} & &\text{(Sub-Exponential Term)}.
\end{align*}
Defining the effective rank as $\mathrm{r}_{\mathrm{eff}}(\Ab) \triangleq \nf{\Ab} / \nt{\Ab}$, we determine the dominant regime by setting $T_1 \ge T_2$. Solving this inequality yields the threshold condition for the sub-Gaussian term to dominate:
\begin{equation} \label{eq:eff_rank_condition}
    \mathrm{r}_{\mathrm{eff}}(\Psib_j^{(\ell)}) \cdot \mathrm{r}_{\mathrm{eff}}\big((\Qb_j^{(\ell)})^\top \Qb_j^{(\ell)}\big) \ge \sqrt{C_1 \log \frac{2}{\xi}}.
\end{equation}
The heavy-tailed sub-exponential regime only triggers if this strict inequality is reversed.

\paragraph{Regime Analysis: Diffuse vs. Spiky Interference.}
The threshold condition in Equation~\eqref{eq:eff_rank_condition} reveals that catastrophic forgetting behaves differently depending on the magnitude of the joint effective rank. This transition is governed by whether the residual geometry is diffuse or spiky:

\begin{itemize}
    \item \textbf{The Sub-Gaussian Regime (Diffuse Interference):} When the joint effective rank is large ($T_1 \ge T_2$), the sub-Gaussian term dominates. Because the learned LoRA basis $\Bb_j^{(\ell)}$ successfully absorbs the dominant principal components of the activations, the residual covariance $\Sigmab_j^{(\ell),\perp}$ retains only a flat, unstructured spectral tail. The interference is \emph{diffuse}, spread evenly across many dimensions, acting as benign noise that scales gracefully as $\sqrt{\log(2/\xi)}$.

    \item \textbf{The Sub-Exponential Regime (Spiky Interference):} Conversely, the heavier-tailed sub-exponential regime ($T_2 > T_1$) only triggers if the joint effective rank collapses ($\approx 1$). This occurs if the residual covariance still contains a massive, uncaptured principal direction, making the interference highly \textit{spiky} and concentrated. In this state, forgetting is dictated by worst-case fluctuations along that single dominant axis, scaling poorly as $\log(2/\xi)$.
\end{itemize}

\textbf{The Dual Role of Orthogonality.}
SeqLoRA's inter-concept orthogonality ($\Bb_i^\top \Bb_j = \mathbf{0}$) serves a dual mathematical purpose to protect the network. First, it shrinks the systematic mean $\mu_z$ by restricting interference exclusively to the residual subspace. Second, by projecting out the dominant principal components captured by
$\Bb_j^{(\ell)}$, it exposes the naturally diffuse structure of the residual
spectrum, ensuring a large joint effective rank that keeps the crosstalk
in the well-concentrated sub-Gaussian regime.

\section{Population Risk Analysis in a Stylized Linear Setting}
\label{sec:population_theory}
Theorem~\ref{thm:smoothcon} establishes the optimization dynamics of SeqLoRA,
and Theorem~\ref{thm:hanson_wright_catas} provides a high-probability bound
on end-to-end catastrophic forgetting in deep, non-linear networks. To
complement this result, we now derive an \emph{exact}, closed-form
decomposition of catastrophic forgetting in a stylized linear setting with
a single adapted layer. This is not intended to model the full diffusion
network; rather, by removing non-linearities, it isolates the precise
mechanism through which orthogonality and learned-basis adaptation interact,
and makes explicit the residual-energy quantity that controls forgetting.
The two analyses agree in their conclusion: the high-probability bound in
Theorem~\ref{thm:hanson_wright_catas} and the exact decomposition below
both identify $\mathrm{Tr}(\Pc^\perp_{B_j} \Sigmab_j)$ as the driver of
forgetting, providing convergent evidence from two distinct analytical
viewpoints.

\paragraph{Stylized population model.}
Consider a single linear map with fixed input features \(\xb \in \Rbb^m\) and output \(\yb \in \Rbb^n\). For concept \(j\), let \(\mathcal{D}_j\) be a population distribution over \((\xb,\yb)\), and define the squared population risk
\begin{equation}
    \Lc_j(\Wb)
    \triangleq
    \EX_{(\xb,\yb)\sim \mathcal{D}_j}\!\{\|\yb-\Wb \xb\|_2^2\},
    \qquad
    \Sigmab_j
    \triangleq
    \EX_{(\xb,\yb)\sim \mathcal{D}_j}\!\{\xb\xb^\top\}.
\end{equation}
As in LoRA, after learning \(t\) concepts the adapted matrix is
\begin{equation}
    \Wb_t = \Wb_0 + \sum_{s=1}^{t} \Ab_s \Bb_s^\top.
\end{equation}
Fix a concept \(j\). After learning concept \(j\), the model is
\begin{equation}
    \Wb_j = \Wb_0 + \sum_{s=1}^{j} \Ab_s \Bb_s^\top,
\end{equation}
while after all later concepts have been learned,
\begin{equation}
    \Wb_T = \Wb_j + \Cb_j,
    \qquad
    \Cb_j \triangleq \sum_{s=j+1}^{T} \Ab_s \Bb_s^\top.
\end{equation}
Let
\begin{equation}
    \Pc_{B_j}
    \triangleq
    \Bb_j(\Bb_j^\top \Bb_j)^{-1} \Bb_j^\top,
    \qquad
    \Pc_{B_j}^{\perp}
    \triangleq
    \Ib-\Pc_{B_j},
\end{equation}
denote the orthogonal projector onto the learned right-subspace of concept \(j\) and its orthogonal complement. In this simplified view, the role of the right LoRA factor \(\Bb_j\) is summarized by \(\Pc_{B_j}\), while the cumulative effect of all future concepts is summarized by the crosstalk operator \(\Cb_j\).

The first result shows that, under orthogonality, forgetting is driven only by the part of the concept covariance that lies outside the learned basis.

\begin{theorem}[Residual-subspace decomposition of forgetting]
\label{thm:linear_residual_forgetting}
Assume that the later bases satisfy
\begin{equation}
    \Bb_s^\top \Bb_j = \mathbf{0},
    \qquad \forall s>j.
\end{equation}
Then the population forgetting on concept \(j\) admits the exact decomposition
\begin{equation}
    \Lc_j(\Wb_T)-\Lc_j(\Wb_j)
    =
    \Tr\!\left(
        \Cb_j \Pc_{B_j}^{\perp}\Sigmab_j \Pc_{B_j}^{\perp} \Cb_j^\top
    \right)
    +
    \left\langle \nabla \Lc_j(\Wb_j),\, \Cb_j \right\rangle.
\end{equation}
Consequently, if \(\|\nabla \Lc_j(\Wb_j)\|_F \le \varepsilon_j\), then
\begin{equation}
    \Lc_j(\Wb_T)-\Lc_j(\Wb_j)
    \le
    \nt{\Cb_j}^2 \Tr\!\left(\Pc_{B_j}^{\perp}\Sigmab_j\right)
    +
    \varepsilon_j \|\Cb_j\|_F.
\end{equation}
In particular, if \(\Wb_j\) is a stationary point of \(\Lc_j\), then
\begin{equation}
    \Lc_j(\Wb_T)-\Lc_j(\Wb_j)
    =
    \Tr\!\left(
        \Cb_j \Pc_{B_j}^{\perp}\Sigmab_j \Pc_{B_j}^{\perp} \Cb_j^\top
    \right)
    \le
    \nt{\Cb_j}^2 \Tr\!\left(\Pc_{B_j}^{\perp}\Sigmab_j\right).
\end{equation}
\end{theorem}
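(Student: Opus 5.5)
The plan is to expand the squared population risk exactly around $\Wb_j$ in the direction $\Cb_j$, and then use orthogonality to replace $\Cb_j$ by $\Cb_j\Pc_{B_j}^{\perp}$ inside the quadratic term.

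\textbf{Step 1: exact expansion.} Since $\Lc_j$ is a quadratic function of $\Wb$, writing $\Wb_T=\Wb_j+\Cb_j$ and expanding $\|\yb-\Wb_j\xb-\Cb_j\xb\|_2^2$ gives the exact identity
\begin{equation*}
\Lc_j(\Wb_T)=\Lc_j(\Wb_j)-2\,\EX\{(\yb-\Wb_j\xb)^\top\Cb_j\xb\}+\EX\{\|\Cb_j\xb\|_2^2\}.
\end{equation*}
The middle term equals $\langle\nabla\Lc_j(\Wb_j),\Cb_j\rangle$, because $\nabla\Lc_j(\Wb)=-2\,\EX\{(\yb-\Wb\xb)\xb^\top\}$ and we use the Frobenius inner product. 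The last term equals $\EX\{\Tr(\Cb_j\xb\xb^\top\Cb_j^\top)\}=\Tr(\Cb_j\Sigmab_j\Cb_j^\top)$. There is no remainder, since the Hessian of $\Lc_j$ is constant.

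\textbf{Step 2: orthogonality.} As in Proposition~\ref{prop:crosstalk_residual}, the assumption $\Bb_s^\top\Bb_j=\mathbf 0$ for $s>j$ gives $\Bb_s^\top\Pc_{B_j}=\mathbf 0$. Hence $\Cb_j\Pc_{B_j}=\mathbf 0$, and therefore $\Cb_j=\Cb_j\Pc_{B_j}^{\perp}$. Substituting this into the quadratic term yields $\Tr(\Cb_j\Pc_{B_j}^\perp\Sigmab_j\Pc_{B_j}^\perp\Cb_j^\top)$, which proves the exact decomposition.

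\textbf{Step 3: inequalities.} Set $\mathbf S=\Pc_{B_j}^\perp\Sigmab_j\Pc_{B_j}^\perp$. This matrix is positive semidefinite, so
\begin{equation*}
\Tr(\Cb_j\mathbf S\Cb_j^\top)=\Tr(\mathbf S^{1/2}\Cb_j^\top\Cb_j\mathbf S^{1/2})\le \nt{\Cb_j^\top\Cb_j}\Tr(\mathbf S)=\nt{\Cb_j}^2\,\Tr(\Pc_{B_j}^\perp\Sigmab_j).
\end{equation*}
The last equality uses cyclicity of the trace together with $(\Pc_{B_j}^\perp)^2=\Pc_{B_j}^\perp$. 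For the gradient term, Cauchy--Schwarz gives $|\langle\nabla\Lc_j(\Wb_j),\Cb_j\rangle|\le\varepsilon_j\nf{\Cb_j}$. At a stationary point the gradient term vanishes, so the displayed equality and bound follow directly.

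The only delicate step is the trace inequality $\Tr(\mathbf S^{1/2}\mathbf M\mathbf S^{1/2})\le\nt{\mathbf M}\Tr(\mathbf S)$ for positive semidefinite $\mathbf S$ and $\mathbf M=\Cb_j^\top\Cb_j$. It follows from $\mathbf M\preceq\nt{\mathbf M}\Ib$ and the monotonicity of the trace under congruence. Everything else is direct algebra, so I expect no real obstacle beyond keeping the gradient sign convention consistent.
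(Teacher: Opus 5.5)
Your proposal is correct and follows the paper's proof essentially step for step: the exact quadratic expansion of the population risk around $\Wb_j$, the identity $\Cb_j\Pc_{B_j}=\mathbf{0}$ obtained from orthogonality, the positive-semidefinite trace bound, and Cauchy--Schwarz for the gradient term. The only difference is that you justify $\Tr(\Cb_j\mathbf{S}\Cb_j^\top)\le\nt{\Cb_j}^2\Tr(\mathbf{S})$ explicitly via $\Cb_j^\top\Cb_j\preceq\nt{\Cb_j}^2\Ib$, a step the paper simply asserts.
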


\begin{proof}
Because \(\Bb_s^\top \Bb_j=\mathbf{0}\) for every \(s>j\), we also have
\begin{equation}
    \Bb_s^\top \Pc_{B_j} = \mathbf{0},
    \qquad \forall s>j,
\end{equation}
since \(\operatorname{range}(\Pc_{B_j}) = \operatorname{col}(\Bb_j)\). Therefore
\begin{equation}
    \Cb_j \Pc_{B_j}
    =
    \sum_{s=j+1}^{T}\Ab_s \Bb_s^\top \Pc_{B_j}
    =
    \mathbf{0},
\end{equation}
which implies \(\Cb_j = \Cb_j \Pc_{B_j}^{\perp}\).

Now expand the population risk difference:
\begin{align}
    \Lc_j(\Wb_T)-\Lc_j(\Wb_j)
    &=
    \EX\!\{
        \|\yb-(\Wb_j+\Cb_j)\xb\|_2^2 - \|\yb-\Wb_j\xb\|_2^2
    \} \\
    &=
    \EX\!\{\|\Cb_j \xb\|_2^2\}
    +
    2\,\EX\!\{(\Wb_j\xb-\yb)^\top \Cb_j \xb\}.
\end{align}
Since
\begin{equation}
    \nabla \Lc_j(\Wb)
    =
    2\,\EX\!\{(\Wb\xb-\yb)\xb^\top\},
\end{equation}
the second term is exactly \(\langle \nabla \Lc_j(\Wb_j), \Cb_j\rangle\). For the first term, using \(\Cb_j=\Cb_j\Pc_{B_j}^{\perp}\),
\begin{align}
    \EX\!\{\|\Cb_j \xb\|_2^2\}
    &=
    \EX\!\{\xb^\top \Cb_j^\top \Cb_j \xb\} \\
    &=
    \Tr\!\left(\Cb_j \Sigmab_j \Cb_j^\top\right) \\
    &=
    \Tr\!\left(
        \Cb_j \Pc_{B_j}^{\perp}\Sigmab_j \Pc_{B_j}^{\perp} \Cb_j^\top
    \right).
\end{align}
This proves the identity.

For the upper bound, note that for any positive semidefinite matrix \(\Mb\),
\begin{equation}
    \Tr(\Cb_j \Mb \Cb_j^\top)
    \le
    \nt{\Cb_j}^2 \Tr(\Mb).
\end{equation}
Applying this with \(\Mb=\Pc_{B_j}^{\perp}\Sigmab_j\Pc_{B_j}^{\perp}\) gives
\begin{equation}
    \Tr\!\left(
        \Cb_j \Pc_{B_j}^{\perp}\Sigmab_j \Pc_{B_j}^{\perp} \Cb_j^\top
    \right)
    \le
    \nt{\Cb_j}^2 \Tr\!\left(\Pc_{B_j}^{\perp}\Sigmab_j\right).
\end{equation}
Finally, by Cauchy--Schwarz,
\begin{equation}
    \left|
    \left\langle \nabla \Lc_j(\Wb_j),\, \Cb_j \right\rangle
    \right|
    \le
    \|\nabla \Lc_j(\Wb_j)\|_F \|\Cb_j\|_F
    \le
    \varepsilon_j \|\Cb_j\|_F.
\end{equation}
The stationary case follows by setting \(\varepsilon_j=0\).
\end{proof}
This result is perfectly consistent with Theorem~\ref{thm:hanson_wright_catas}, which derives the catastrophic forgetting by  $\Tr(\Pc_{B_j}^{(\ell),\perp}\Sigmab_j^{(\ell)}) = \Tr(\Sigmab_j^{(\ell),\perp}) \Tr(\Psib_j^{(\ell)})$. The alignment between our deep-network concentration bounds and this population-risk analysis underscores that minimizing this residual energy is the key to sequential stability.

\newpage
\section{Evaluation Metrics}\label{app:metrics}
We use a comprehensive set of metrics to evaluate different aspects of the generated images:
\begin{itemize}[leftmargin=*]
    \item \textbf{CLIP-I / CLIP-T} \citep{radford2021learning, ruiz2023dreambooth}: Measure image-to-image and text-to-image alignment, ensuring the generated images match the target concepts and prompts.
    \item \textbf{DINO / DINOv2 / DINOv3}~\citep{caron2021emerging, oquab2023dinov2, simeoni2025dinov3}: Self-supervised features used to assess identity preservation of the customized concepts.
    \item \textbf{HPSv2/HPSv3}~\citep{wu2023human, ma2025hpsv3}: Human Preference Scores to evaluate the overall visual quality and appeal of the generated images.
    \item \textbf{DreamSim}~\citep{fu2023dreamsim}: Measures holistic visual similarity (layout, pose, and semantic content) to reference images.
\end{itemize}
\section{Detailed Evaluation Methodology}
\label{sec:appendix_eval_methodology}

This section details the aggregation protocol used in our quantitative
evaluations. Each per-concept score is first computed by averaging the
relevant metric over $16$ generated images obtained with different random
seeds. Subsequent aggregation differs between figures (which report average
performance only) and tables (which report mean $\pm$ standard error).
We describe each in turn.

\subsection{Mean Performance in Figures~\ref{fig:overall_comparison} and \ref{fig:forgetting}}
\label{subsec:figures_methodology}

\paragraph{Calculation of Overall Performance (Figure \ref{fig:overall_comparison}).}
Figure \ref{fig:overall_comparison} measures the scalability of the methods by reporting the final model performance across different total concept counts $T \in \{8, 16, \dots, 101\}$. For a given data point at $x = T$, we evaluate the model only after it has finished training on all $T$ concepts. We query this final composed model to generate 16 images for every single one of the $T$ concepts, yielding a total of $16 \times T$ images. The metric (e.g., DINO similarity) is computed individually for all $16 \times T$ images against their respective reference images. The value plotted on the y-axis is the unweighted arithmetic mean of these $16 \times T$ scores. For example, the data point at $T=101$ represents the grand mean of 1,616 distinct generations, providing a highly robust measure of the final system's aggregate identity preservation.

\paragraph{Calculation of the Forgetting Trajectory (Figure \ref{fig:forgetting}).}
Figure \ref{fig:forgetting} tracks the dynamic ``health'' of the model during a single 32-concept sequential training run. A data point at step $k$ on the x-axis (where $k \in \{1, \dots, 32\}$) represents the state of the model immediately after learning the $k$-th concept. At this specific step, we evaluate the model's retention of all concepts it has seen so far (concepts $1$ through $k$). The model generates 16 images for each of these $k$ concepts, yielding $16 \times k$ images. The value plotted on the y-axis is the arithmetic mean of the scores across all $16 \times k$ images. By continually expanding the evaluation pool to include all previously learned concepts at each step, a downward slope in this trajectory rigorously quantifies the average rate of catastrophic forgetting as the parameter space becomes increasingly constrained.

\subsection{Error Bars in Tables~\ref{tab:quantitative_tables_main}-\ref{tab:ablation_32}}
For every (method, concept count, metric) cell in our quantitative tables we report mean $\pm$ standard error of the mean (SEM). For a given configuration with $N$ concepts, we first compute one per-concept score by averaging the metric over the 16 generated images for that concept. The reported mean is then the arithmetic average of these $N$ per-concept scores, and the SEM is computed as $\mathrm{SEM} = \sigma/\sqrt{N}$, where $\sigma$ is the unbiased standard deviation across the $N$ per-concept scores. SEMs therefore capture concept-level variability, decreasing naturally as more concepts are learned, and provide an interpretable measure of the precision of each reported mean.

\section{Supplementary Qualitative Results}\label{sec:supp_qual}
To provide a comprehensive evaluation, we present qualitative comparisons for all 32 concepts used in our experiments. The concepts are ordered according to their training sequence. Figures~\ref{fig:supp_qual1}, \ref{fig:supp_qual2}, and \ref{fig:supp_qual3} show the results for the first 12 concepts, the next 12 concepts, and the final 8 concepts, respectively. Consistent with the main paper, we show the input images and generated samples for SeqLoRA (Bilevel), Continual Alternating, LoRACLR, Mix-of-Show, and Orthogonal Adaptation. All methods use the same set of seeds for fair comparison.

\begin{figure}[p]
\centering
\includegraphics[width=\textwidth,height=0.95\textheight,keepaspectratio]{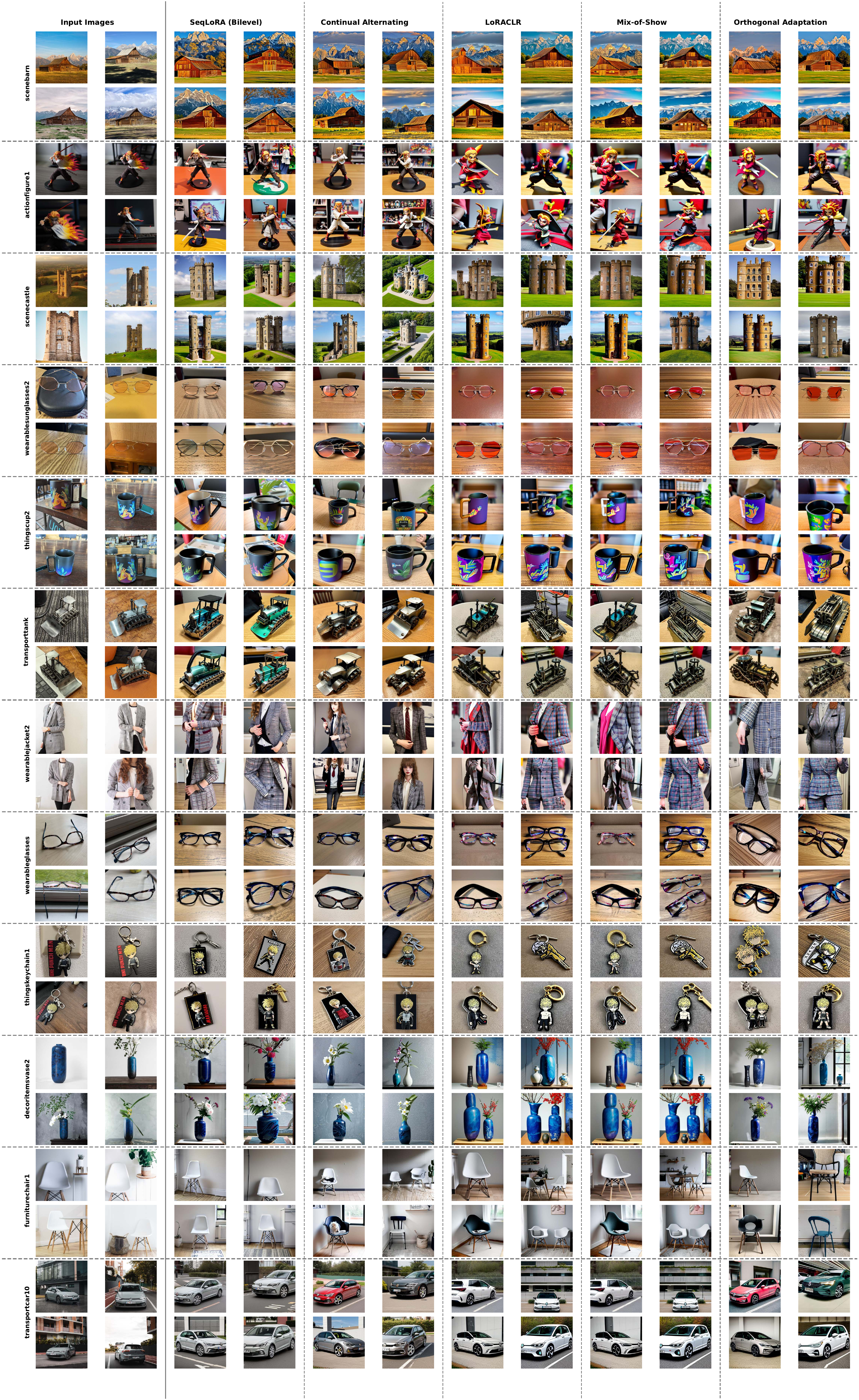}
\caption{Supplementary qualitative comparison for concepts 1-12 (ordered by training sequence).}
\label{fig:supp_qual1}
\end{figure}

\begin{figure}[p]
\centering
\includegraphics[width=\textwidth,height=0.95\textheight,keepaspectratio]{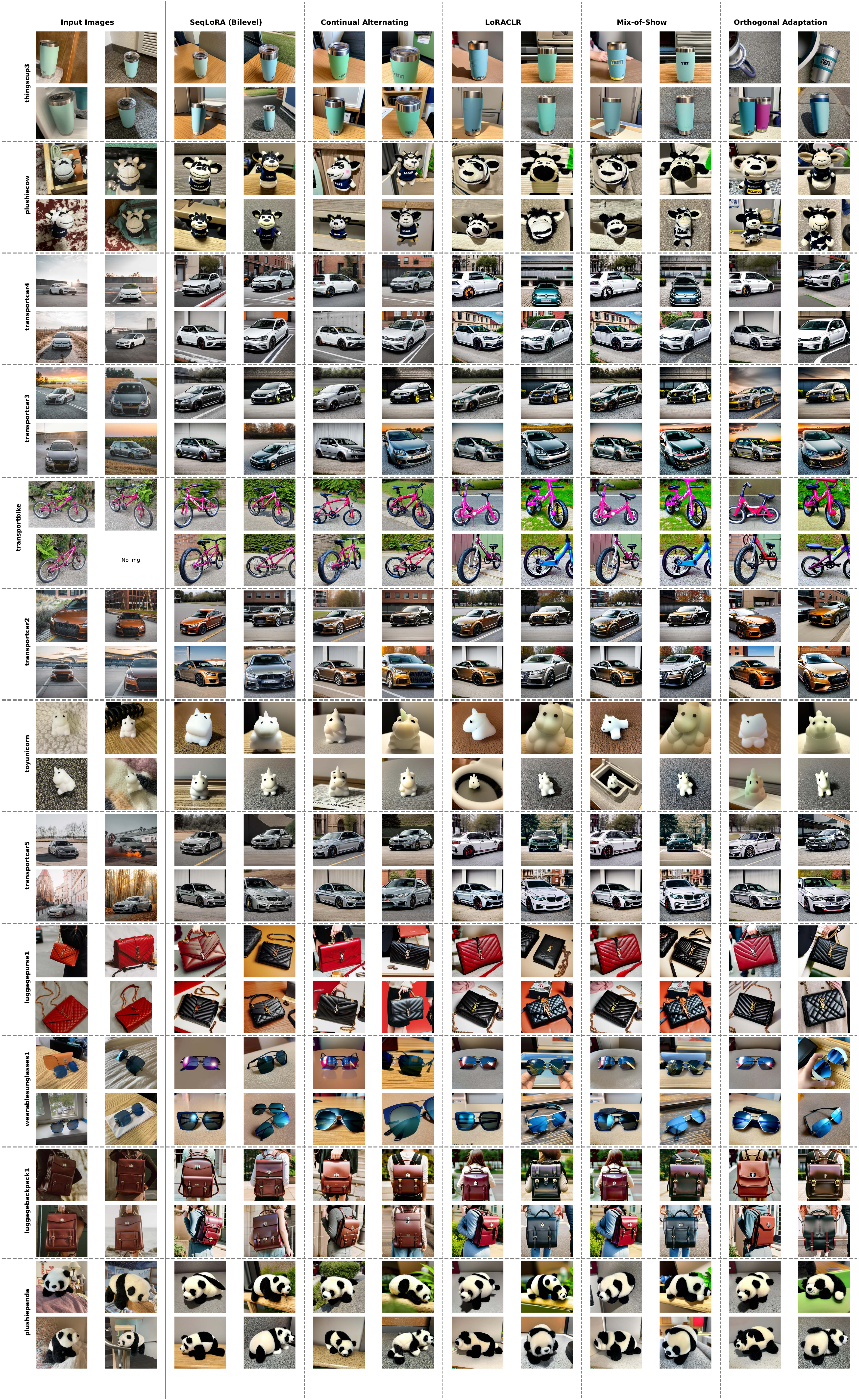}
\caption{Supplementary qualitative comparison for concepts 13-24 (ordered by training sequence).}
\label{fig:supp_qual2}
\end{figure}

\begin{figure}[p]
\centering
\includegraphics[width=\textwidth,height=0.95\textheight,keepaspectratio]{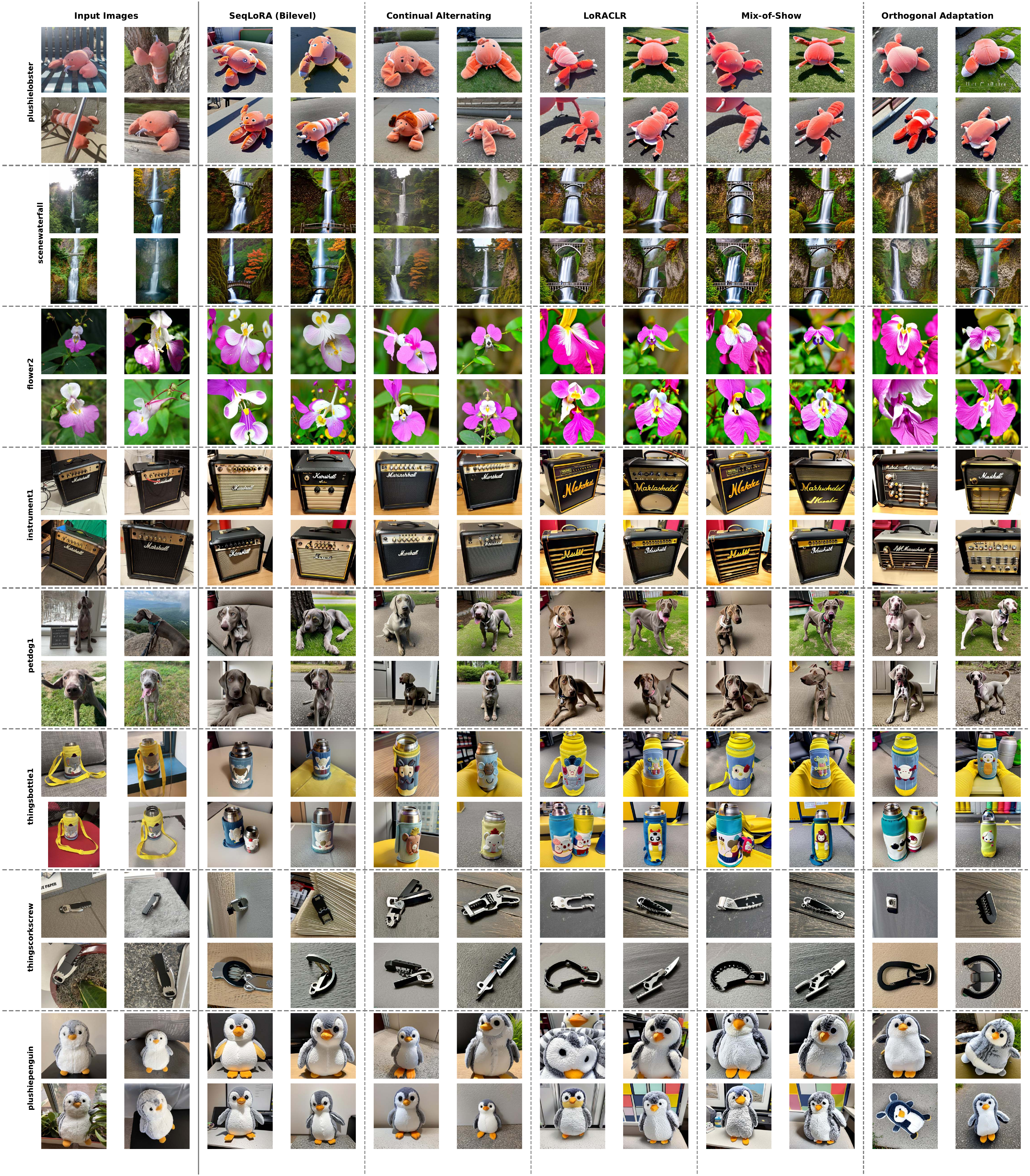}
\caption{Supplementary qualitative comparison for concepts 25-32 (ordered by training sequence).}
\label{fig:supp_qual3}
\end{figure}

\section{Multi-Concept Regional Generation}\label{sec:supp_regional}
Beyond the per-concept comparisons in Section~\ref{sec:supp_qual}, we evaluate how each method composes \emph{multiple} learned concepts into a single image under regional sketch and keypose conditioning, following the regional generation protocol of \citet{gu2023mix}. For every method we use the \emph{same} random seed and the same regional control inputs (sketch, keypose, and per-region bounding boxes), so any visual differences are attributable to the multi-concept fusion mechanism rather than to randomness or spatial conditioning. Figure~\ref{fig:regional_comparison} shows two representative compositions: a three-concept scene combining two chairs, a table, and a vase, and a four-concept scene that adds a dog to the same setting. Across both compositions, SeqLoRA preserves per-region concept identity (chair upholstery, vase silhouette, dog appearance) while the parallel-fusion baselines (Mix-of-Show, LoRACLR, Orthogonal Adaptation) and the alternating-minimization variant (Continual Alternating) exhibit visible attribute leakage and identity collapse across regions.


\newcommand{\rcOut}[1]{\includegraphics[width=0.30\textwidth]{#1}}
\newcommand{\rcRef}[1]{\includegraphics[width=0.092\textwidth]{#1}}
\newcommand{\rcLab}[1]{{\scriptsize #1}}
\newcommand{\rcOursLab}[1]{{\scriptsize \textbf{#1}}}

\begin{figure}[H]
\centering
\setlength{\tabcolsep}{2pt}
\renewcommand{\arraystretch}{1.0}

\begin{subfigure}{\textwidth}
\centering
\begin{tabular}{@{}ccc@{}}
\adjustbox{valign=c}{\begin{tabular}{@{}ccc@{}}
\rcRef{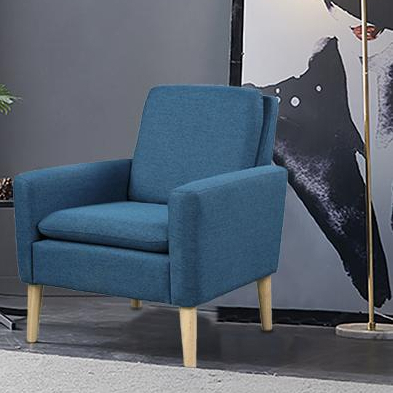} &
\rcRef{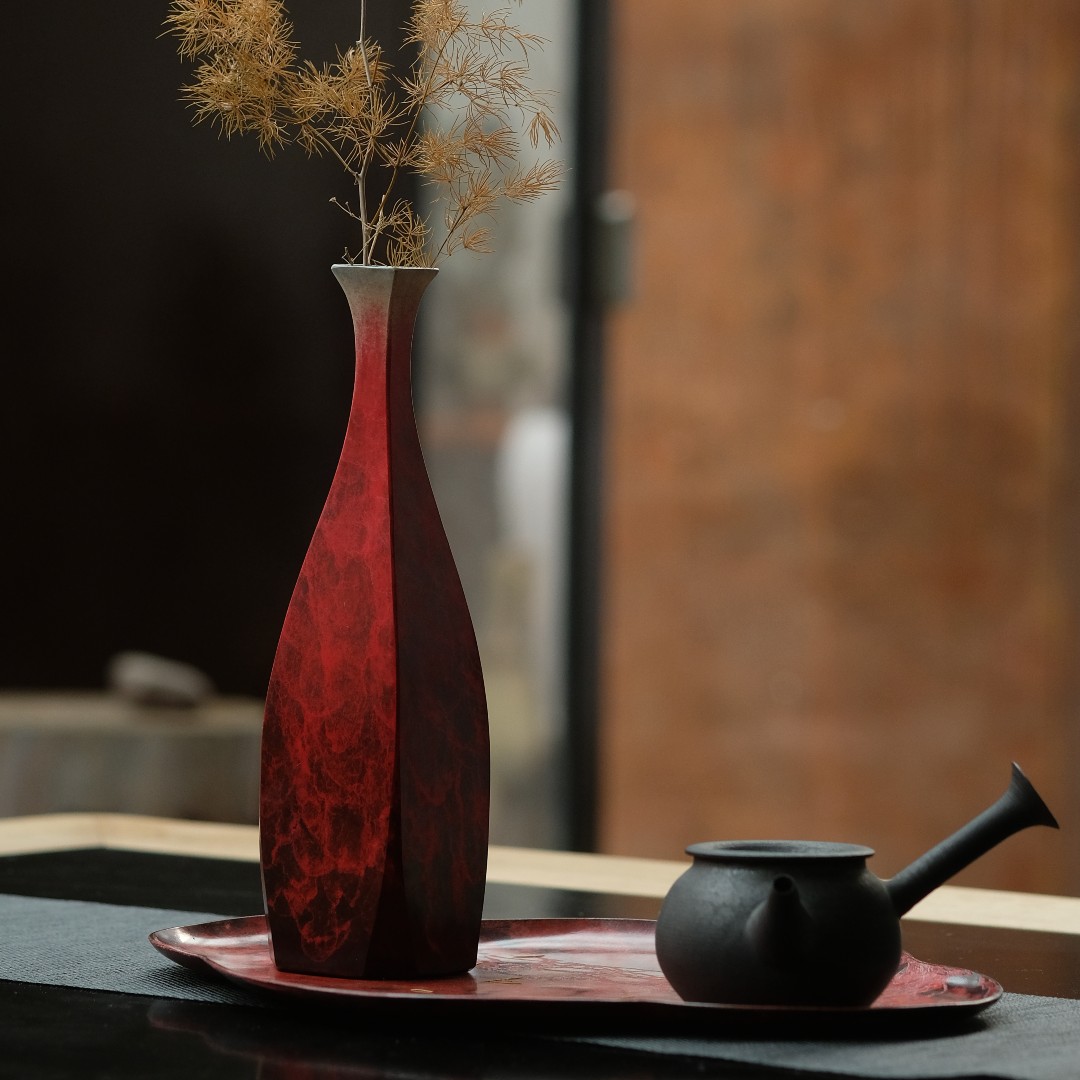}  &
\rcRef{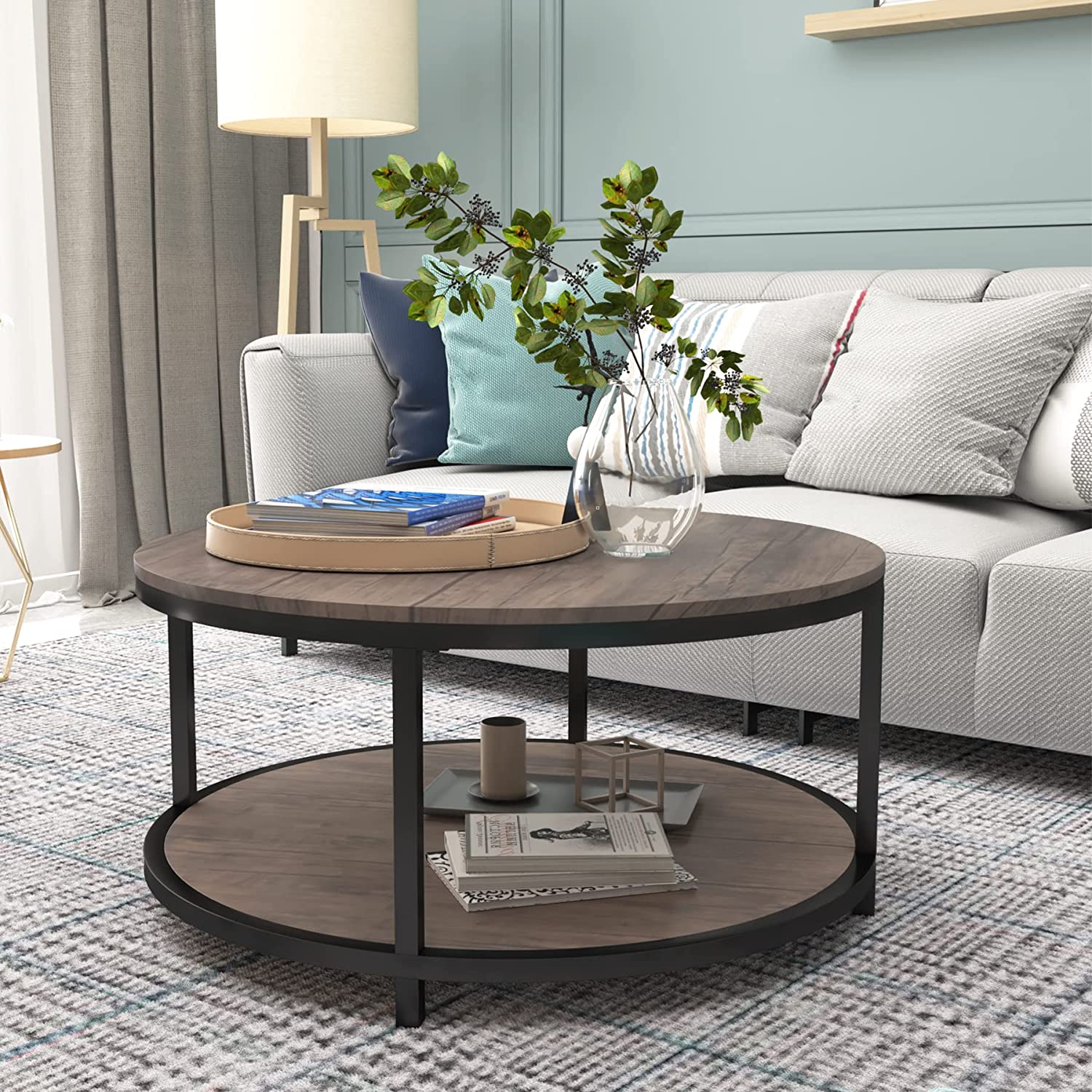}
\end{tabular}} &
\adjustbox{valign=c}{\rcOut{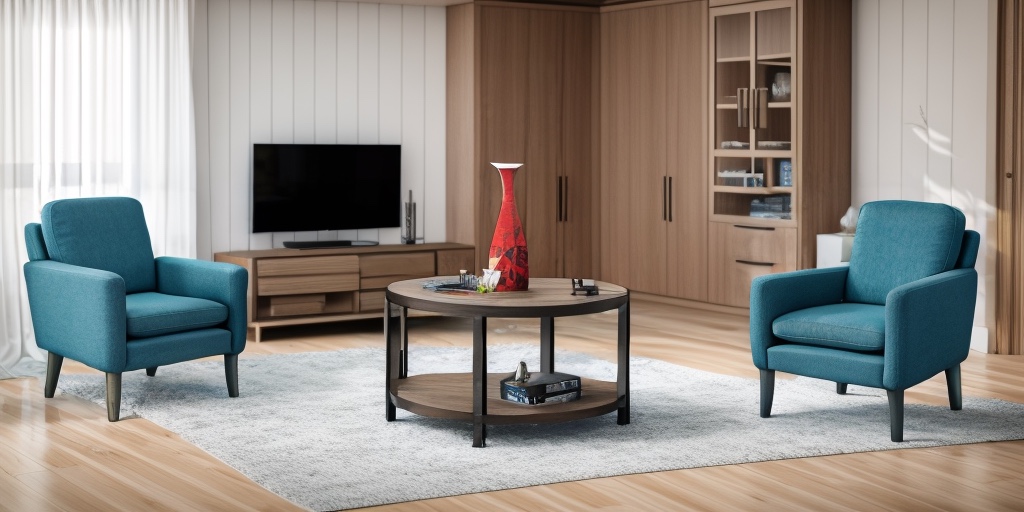}} &
\adjustbox{valign=c}{\rcOut{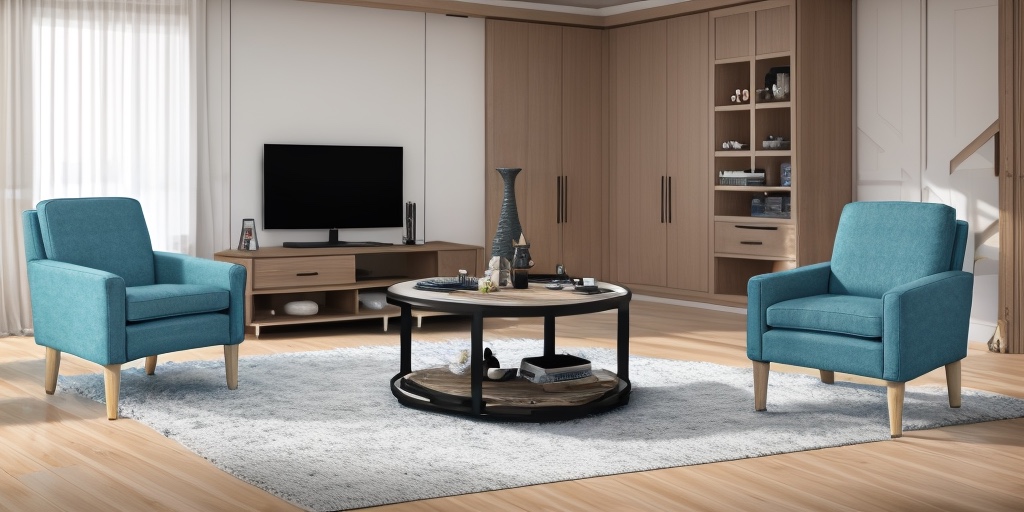}} \\[-1pt]
\rcLab{Concepts: \texttt{<chair>, <vase>, <table>}} &
\rcOursLab{SeqLoRA (Ours)} &
\rcLab{Continual Alt.} \\[4pt]
\rcOut{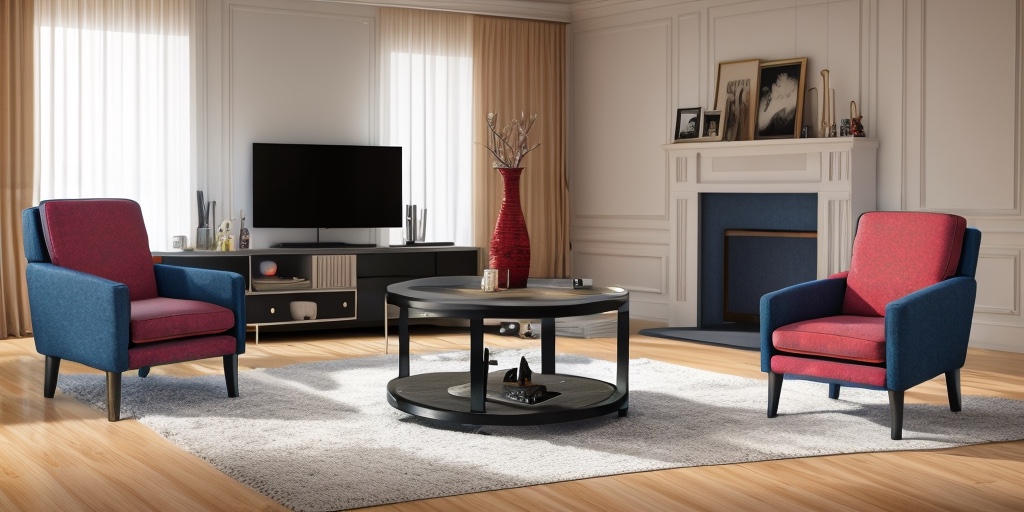} &
\rcOut{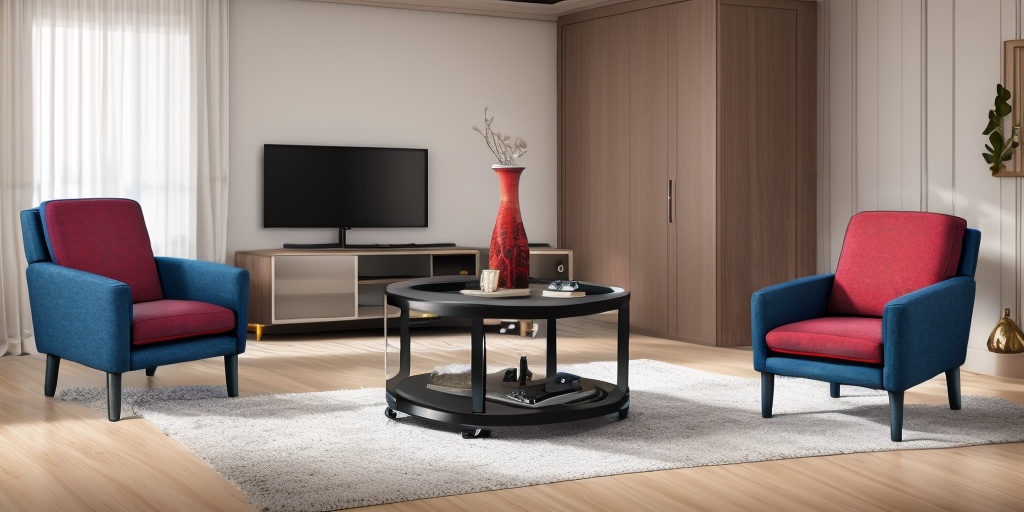}     &
\rcOut{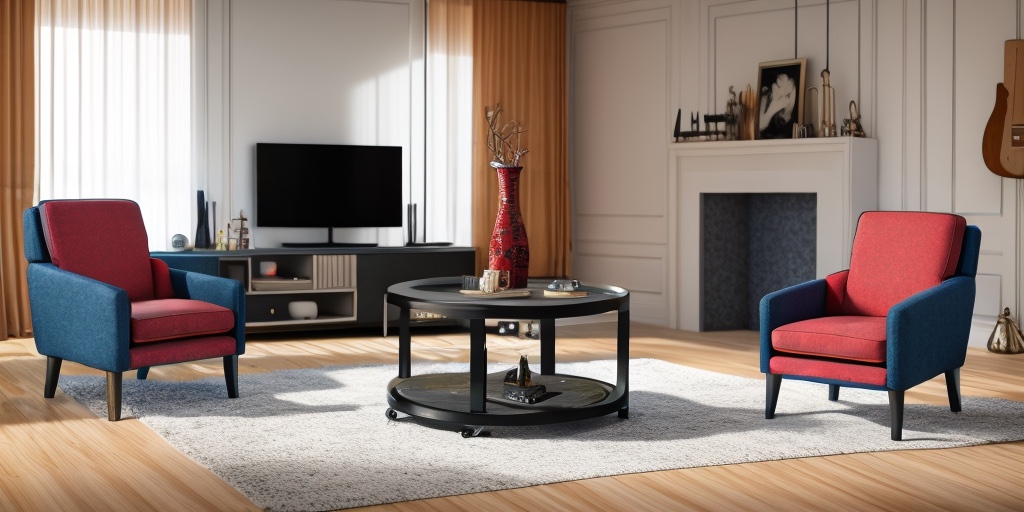}    \\[-1pt]
\rcLab{LoRACLR} &
\rcLab{Mix-of-Show} &
\rcLab{Orthogonal Adapt.}
\end{tabular}
\caption{\textbf{Three concepts.} Prompt: \textit{``two chairs, a table, and a vase, in a living room''}.}
\label{fig:regional_comp_a}
\end{subfigure}

\vspace{8pt}

\begin{subfigure}{\textwidth}
\centering
\begin{tabular}{@{}ccc@{}}
\adjustbox{valign=c}{\begin{tabular}{@{}cccc@{}}
\includegraphics[width=0.069\textwidth]{figures/regional_comparison/refs/chair.jpg} &
\includegraphics[width=0.069\textwidth]{figures/regional_comparison/refs/vase.jpg}  &
\includegraphics[width=0.069\textwidth]{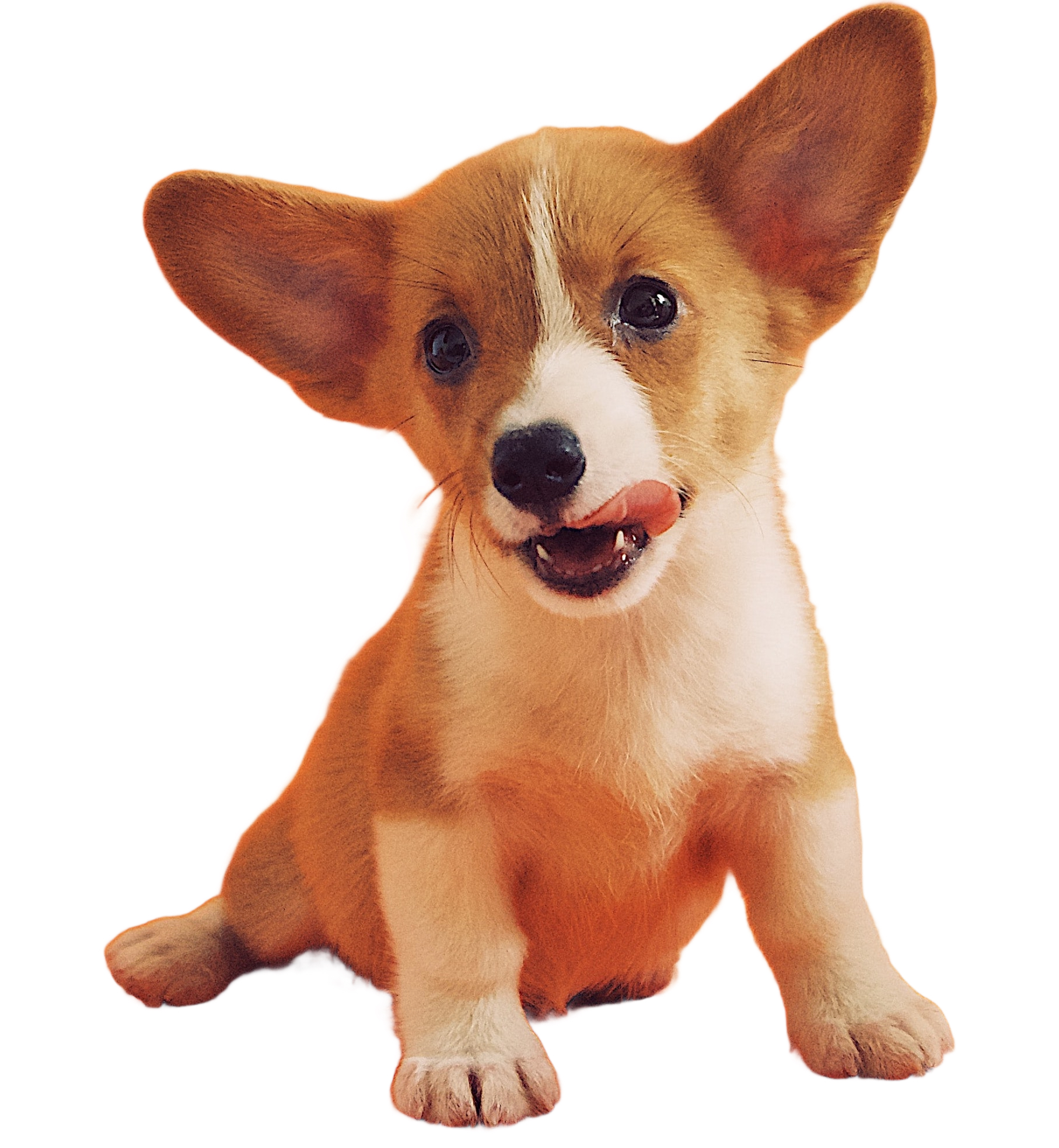}  &
\includegraphics[width=0.069\textwidth]{figures/regional_comparison/refs/table.jpg}
\end{tabular}} &
\adjustbox{valign=c}{\rcOut{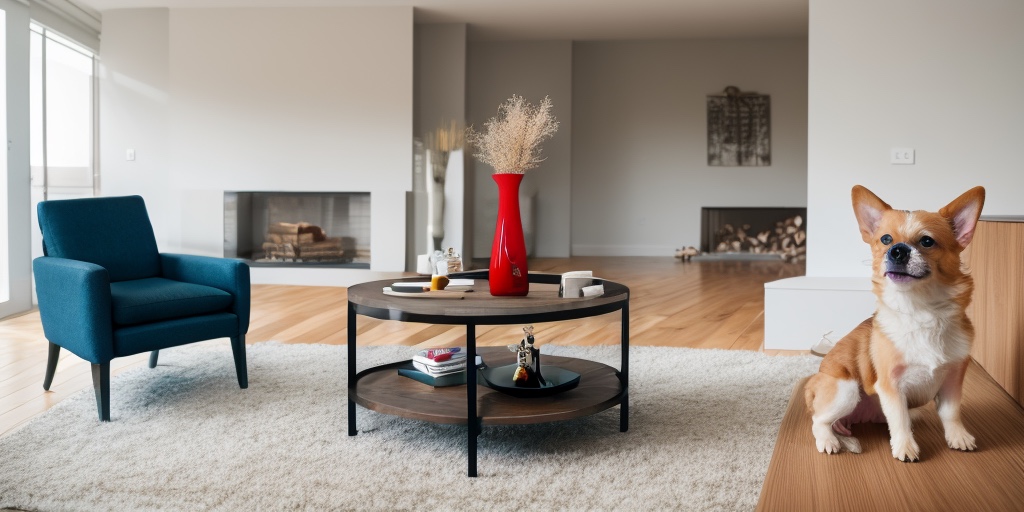}} &
\adjustbox{valign=c}{\rcOut{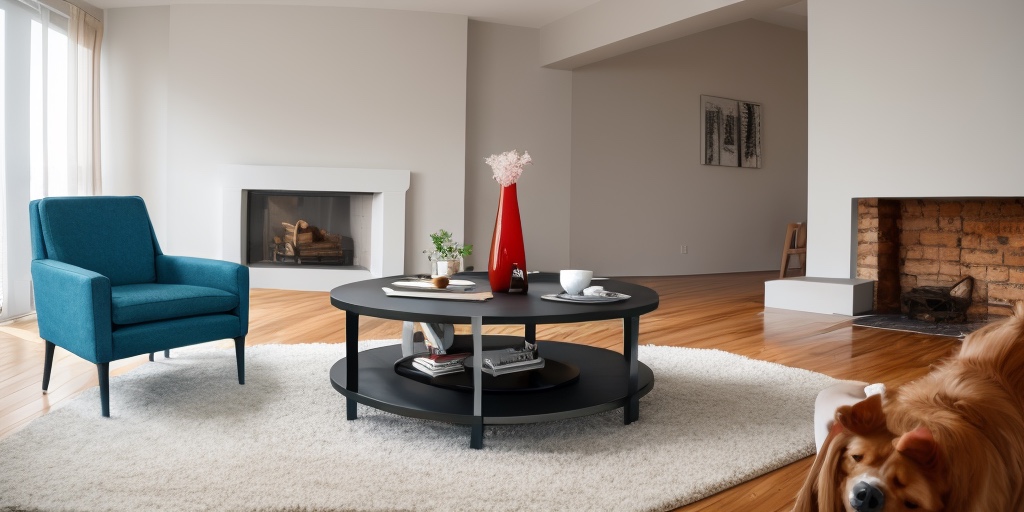}} \\[-1pt]
\rcLab{Concepts: \texttt{<chair>, <vase>, <dogA>, <table>}} &
\rcOursLab{SeqLoRA (Ours)} &
\rcLab{Continual Alt.} \\[4pt]
\rcOut{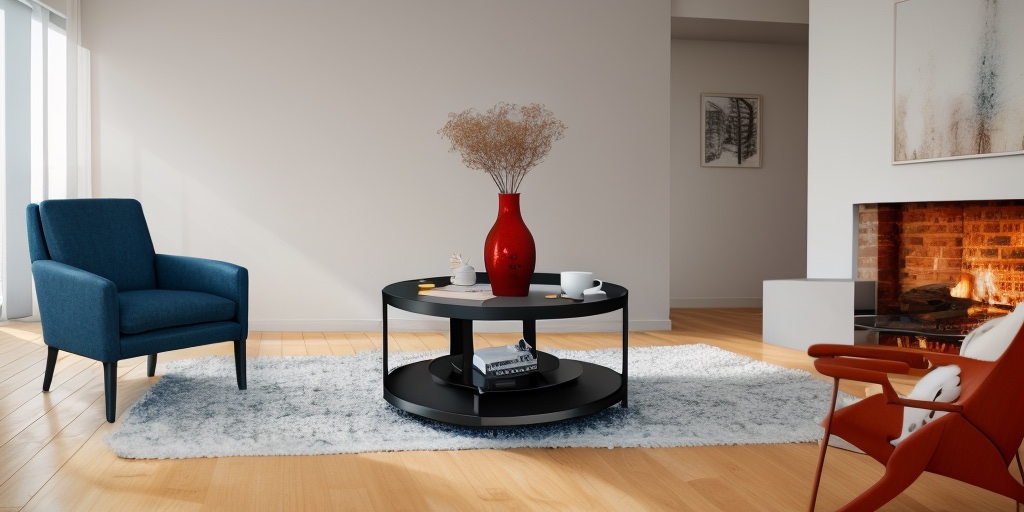} &
\rcOut{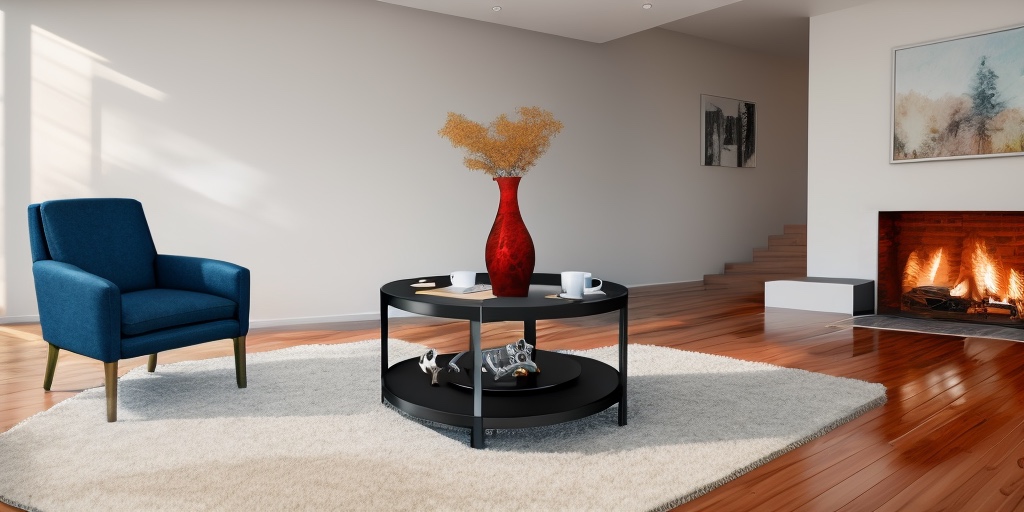}     &
\rcOut{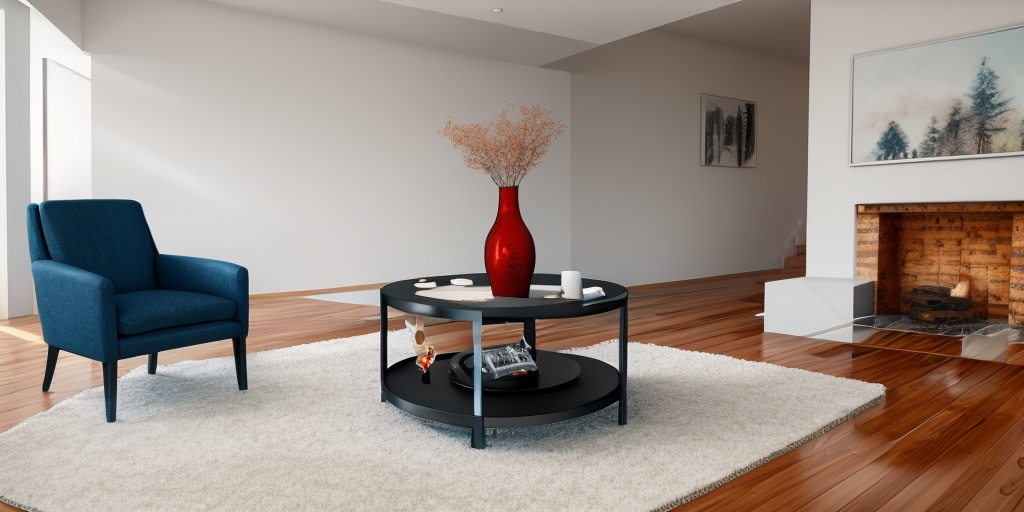}    \\[-1pt]
\rcLab{LoRACLR} &
\rcLab{Mix-of-Show} &
\rcLab{Orthogonal Adapt.}
\end{tabular}
\caption{\textbf{Four concepts.} Prompt: \textit{``a chair, a vase, a dog and a table, in a living room''}.}
\label{fig:regional_comp_b}
\end{subfigure}

\caption{\textbf{Multi-concept regional generation: qualitative comparison.}
Each subfigure shows, in the top-left cell, the input concept reference images,
followed by five generated outputs (one per method) arranged in a $3\times 2$ grid.
All methods use the \emph{same} random seed and the same regional sketch/keypose
conditioning, so visual differences reflect the underlying multi-concept fusion
mechanism rather than randomness or spatial conditioning. SeqLoRA preserves
per-region concept identity (e.g., the chair upholstery, vase silhouette, and
dog appearance) more faithfully than the baselines, which exhibit attribute
leakage and identity collapse across regions.}
\label{fig:regional_comparison}
\end{figure}

{\footnotesize
\begin{figure}[H]
\centering
\includegraphics[width=0.95\textwidth]{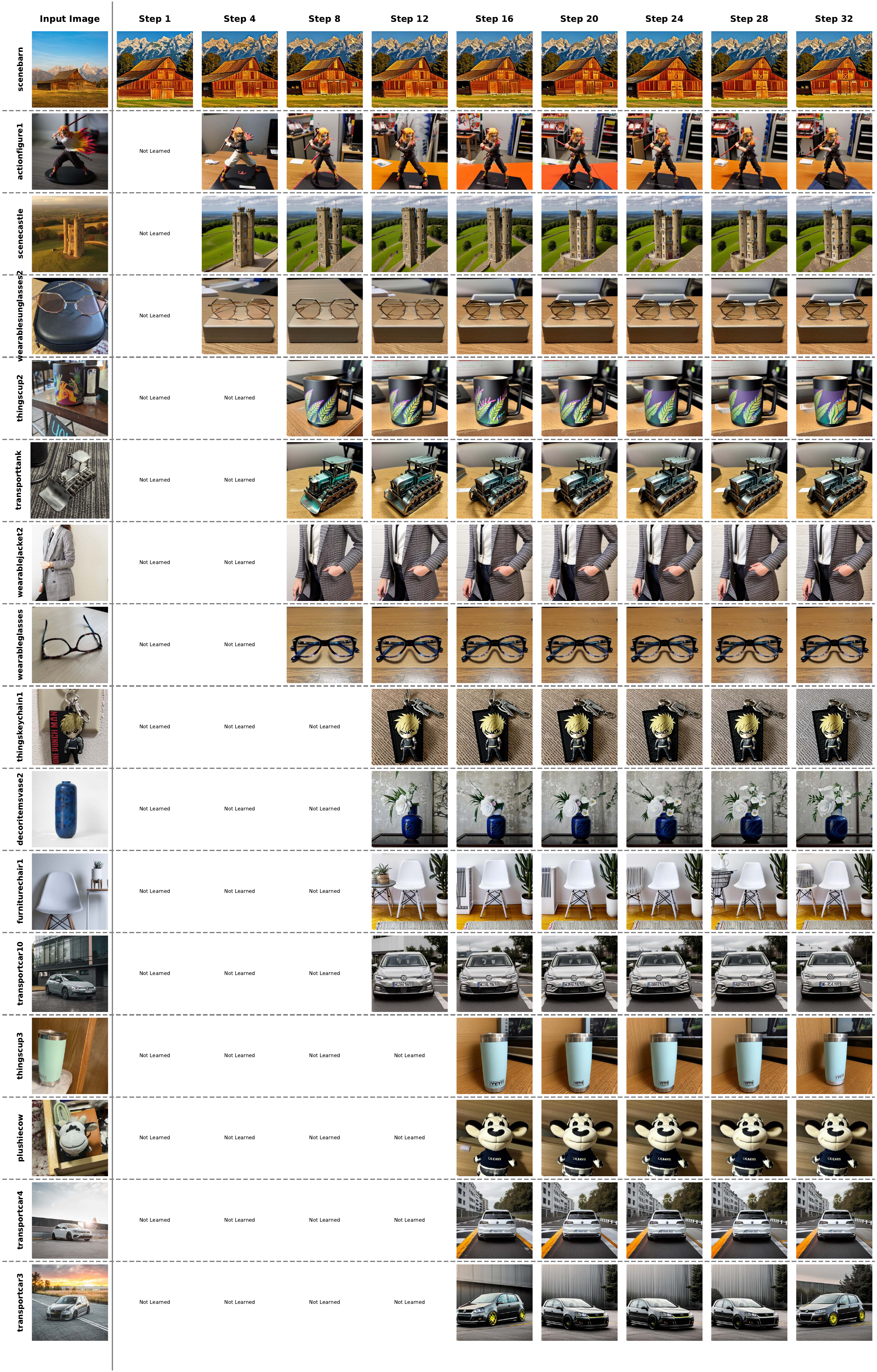}
\caption{Evolution of generated images for concepts 1--16 across different training steps for SeqLoRA. We show the input image followed by generated samples at steps 1, 4, 8, 12, 16, 20, 24, 28, and 32. The same random seed and basic prompt template are used across steps for each concept to ensure visual consistency.}
\label{fig:forgetting_evolution_supp1}
\end{figure}

\begin{figure}[H]
\centering
\includegraphics[width=0.95\textwidth]{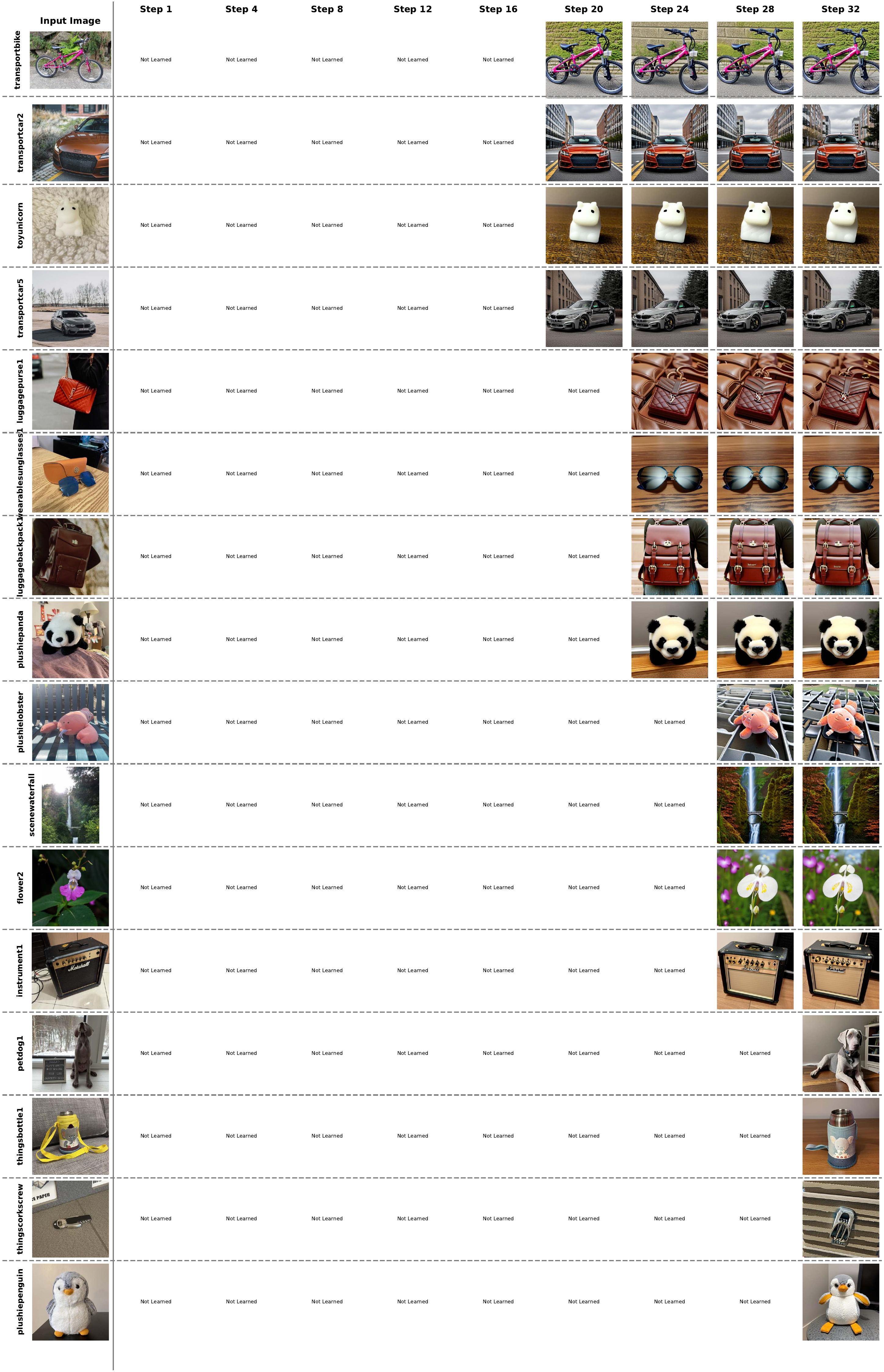}
\caption{Evolution of generated images for concepts 17--32 across different training steps for SeqLoRA. We show the input image followed by generated samples at steps 1, 4, 8, 12, 16, 20, 24, 28, and 32. The same random seed and basic prompt template are used across steps for each concept to ensure visual consistency.}
\label{fig:forgetting_evolution_supp2}
\end{figure}

\setlength{\tabcolsep}{3pt}
\begin{table}[t]
\centering
\caption[Summary of forgetting results for 32 concepts]{Summary of forgetting results for 32 concepts across selected steps. We show the average performance across all concepts for SeqLoRA (SeqL) and Continual Alternating (Alt).}\label{tab:supp_forgetting}
\resizebox{\textwidth}{!}{%
\begin{tabular}{@{}lcccccccccc@{}}
\toprule
\multirow{2}{*}{Metric} & \multicolumn{2}{c}{Step 1} & \multicolumn{2}{c}{Step 8} & \multicolumn{2}{c}{Step 16} & \multicolumn{2}{c}{Step 24} & \multicolumn{2}{c}{Step 32} \\
\cmidrule(lr){2-3} \cmidrule(lr){4-5} \cmidrule(lr){6-7} \cmidrule(lr){8-9} \cmidrule(lr){10-11} \\
 & SeqL & Alt & SeqL & Alt & SeqL & Alt & SeqL & Alt & SeqL & Alt \\
\midrule
\textbf{CLIP-I} & 0.739 & 0.754 & 0.694 $\pm$ 0.014 & 0.693 $\pm$ 0.012 & 0.692 $\pm$ 0.011 & 0.682 $\pm$ 0.012 & 0.691 $\pm$ 0.009 & 0.688 $\pm$ 0.009 & 0.687 $\pm$ 0.009 & 0.683 $\pm$ 0.008 \\
\textbf{CLIP-T} & 0.286 & 0.280 & 0.289 $\pm$ 0.005 & 0.289 $\pm$ 0.006 & 0.284 $\pm$ 0.004 & 0.285 $\pm$ 0.004 & 0.288 $\pm$ 0.004 & 0.288 $\pm$ 0.004 & 0.290 $\pm$ 0.003 & 0.290 $\pm$ 0.004 \\
\textbf{DINO} & 0.627 & 0.656 & 0.462 $\pm$ 0.043 & 0.469 $\pm$ 0.041 & 0.473 $\pm$ 0.026 & 0.461 $\pm$ 0.028 & 0.465 $\pm$ 0.025 & 0.463 $\pm$ 0.025 & 0.452 $\pm$ 0.022 & 0.450 $\pm$ 0.021 \\
\textbf{DreamSim} & 0.600 & 0.624 & 0.475 $\pm$ 0.035 & 0.473 $\pm$ 0.034 & 0.475 $\pm$ 0.020 & 0.461 $\pm$ 0.021 & 0.470 $\pm$ 0.019 & 0.465 $\pm$ 0.018 & 0.469 $\pm$ 0.016 & 0.466 $\pm$ 0.015 \\
\bottomrule
\end{tabular}%
}
\end{table}

}

\section{Supplementary Forgetting Results}\label{sec:supp_forgetting}
To further analyze the forgetting behavior, we provide the detailed numerical results for all 32 concepts across selected steps (1, 8, 16, 24, 32) for SeqLoRA. Table~\ref{tab:supp_forgetting} presents the performance for four key metrics: CLIP-I, CLIP-T, DINO, and DreamSim. The concepts are ordered by their training sequence. We also provide extended visualizations of forgetting evolution for all 32 concepts in Figures~\ref{fig:forgetting_evolution_supp1} and \ref{fig:forgetting_evolution_supp2}.

\section{Supplementary Overall Results}\label{sec:supp_overall}
We also provide the detailed numerical results for the overall comparison across all metrics and different numbers of concepts (8 to 101). Table~\ref{tab:supp_overall} presents the results for SeqLoRA, Continual Alternating, Mix-of-Show, Orthogonal Adaptation, and LoRACLR.

{\footnotesize
\setlength{\tabcolsep}{3pt}

\begin{table}[htbp]
\caption{Supplementary overall comparison results for all metrics across different number of concepts.}
\label{tab:supp_overall}
\centering
\resizebox{\textwidth}{!}{%
\begin{tabular}{@{}llccccccccc@{}}
\toprule
Metric & Method & 8 & 16 & 24 & 32 & 40 & 48 & 64 & 80 & 101 \\
\midrule
\multirow{5}{*}{\rotatebox{90}{CLIP-I}} & SeqLoRA (Bilevel) & 0.710 $\pm$ 0.015 & 0.704 $\pm$ 0.012 & 0.695 $\pm$ 0.010 & 0.693 $\pm$ 0.009 & 0.707 $\pm$ 0.007 & 0.714 $\pm$ 0.007 & 0.715 $\pm$ 0.006 & 0.711 $\pm$ 0.006 & 0.705 $\pm$ 0.005 \\
  & Continual Alternating & 0.717 $\pm$ 0.015 & 0.696 $\pm$ 0.012 & 0.698 $\pm$ 0.009 & 0.688 $\pm$ 0.008 & 0.705 $\pm$ 0.007 & 0.711 $\pm$ 0.007 & 0.711 $\pm$ 0.006 & 0.708 $\pm$ 0.006 & 0.706 $\pm$ 0.005 \\
  & Mix-of-Show & 0.677 $\pm$ 0.018 & 0.677 $\pm$ 0.011 & 0.677 $\pm$ 0.009 & 0.677 $\pm$ 0.010 & 0.678 $\pm$ 0.008 & 0.681 $\pm$ 0.007 & - & - & - \\
  & Orthogonal Adaptation & 0.664 $\pm$ 0.016 & 0.673 $\pm$ 0.011 & 0.668 $\pm$ 0.009 & 0.671 $\pm$ 0.009 & 0.675 $\pm$ 0.008 & 0.677 $\pm$ 0.008 & 0.677 $\pm$ 0.007 & 0.670 $\pm$ 0.007 & 0.669 $\pm$ 0.006 \\
  & LoRACLR & 0.674 $\pm$ 0.017 & 0.664 $\pm$ 0.012 & 0.669 $\pm$ 0.009 & 0.674 $\pm$ 0.010 & 0.674 $\pm$ 0.008 & 0.678 $\pm$ 0.008 & 0.677 $\pm$ 0.007 & - & - \\
\midrule
\multirow{5}{*}{\rotatebox{90}{CLIP-T}} & SeqLoRA (Bilevel) & 0.287 $\pm$ 0.005 & 0.282 $\pm$ 0.004 & 0.284 $\pm$ 0.004 & 0.286 $\pm$ 0.003 & 0.278 $\pm$ 0.003 & 0.279 $\pm$ 0.003 & 0.280 $\pm$ 0.002 & 0.279 $\pm$ 0.002 & 0.277 $\pm$ 0.002 \\
  & Continual Alternating & 0.287 $\pm$ 0.006 & 0.288 $\pm$ 0.004 & 0.286 $\pm$ 0.004 & 0.287 $\pm$ 0.004 & 0.277 $\pm$ 0.003 & 0.279 $\pm$ 0.003 & 0.278 $\pm$ 0.002 & 0.279 $\pm$ 0.002 & 0.278 $\pm$ 0.002 \\
  & Mix-of-Show & 0.278 $\pm$ 0.004 & 0.273 $\pm$ 0.003 & 0.274 $\pm$ 0.003 & 0.276 $\pm$ 0.003 & 0.277 $\pm$ 0.003 & 0.279 $\pm$ 0.002 & - & - & - \\
  & Orthogonal Adaptation & 0.274 $\pm$ 0.004 & 0.274 $\pm$ 0.003 & 0.275 $\pm$ 0.003 & 0.277 $\pm$ 0.003 & 0.278 $\pm$ 0.003 & 0.279 $\pm$ 0.002 & 0.280 $\pm$ 0.002 & 0.277 $\pm$ 0.002 & 0.278 $\pm$ 0.002 \\
  & LoRACLR & 0.278 $\pm$ 0.004 & 0.273 $\pm$ 0.003 & 0.275 $\pm$ 0.003 & 0.277 $\pm$ 0.003 & 0.278 $\pm$ 0.003 & 0.280 $\pm$ 0.002 & 0.280 $\pm$ 0.002 & - & - \\
\midrule
\multirow{5}{*}{\rotatebox{90}{DINO}} & SeqLoRA (Bilevel) & 0.502 $\pm$ 0.041 & 0.484 $\pm$ 0.029 & 0.474 $\pm$ 0.026 & 0.468 $\pm$ 0.021 & 0.478 $\pm$ 0.017 & 0.488 $\pm$ 0.017 & 0.494 $\pm$ 0.014 & 0.487 $\pm$ 0.013 & 0.484 $\pm$ 0.011 \\
  & Continual Alternating & 0.504 $\pm$ 0.040 & 0.472 $\pm$ 0.026 & 0.477 $\pm$ 0.025 & 0.463 $\pm$ 0.020 & 0.475 $\pm$ 0.017 & 0.482 $\pm$ 0.016 & 0.486 $\pm$ 0.015 & 0.482 $\pm$ 0.013 & 0.485 $\pm$ 0.011 \\
  & Mix-of-Show & 0.440 $\pm$ 0.053 & 0.446 $\pm$ 0.028 & 0.435 $\pm$ 0.026 & 0.436 $\pm$ 0.023 & 0.424 $\pm$ 0.018 & 0.416 $\pm$ 0.017 & - & - & - \\
  & Orthogonal Adaptation & 0.416 $\pm$ 0.048 & 0.440 $\pm$ 0.029 & 0.429 $\pm$ 0.026 & 0.428 $\pm$ 0.023 & 0.420 $\pm$ 0.018 & 0.417 $\pm$ 0.016 & 0.427 $\pm$ 0.014 & 0.410 $\pm$ 0.013 & 0.416 $\pm$ 0.011 \\
  & LoRACLR & 0.435 $\pm$ 0.052 & 0.434 $\pm$ 0.032 & 0.431 $\pm$ 0.028 & 0.434 $\pm$ 0.023 & 0.419 $\pm$ 0.018 & 0.420 $\pm$ 0.017 & 0.429 $\pm$ 0.015 & - & - \\
\midrule
\multirow{5}{*}{\rotatebox{90}{DINOv2}} & SeqLoRA (Bilevel) & 0.509 $\pm$ 0.041 & 0.452 $\pm$ 0.028 & 0.427 $\pm$ 0.021 & 0.418 $\pm$ 0.019 & 0.437 $\pm$ 0.015 & 0.445 $\pm$ 0.014 & 0.439 $\pm$ 0.013 & 0.435 $\pm$ 0.012 & 0.431 $\pm$ 0.012 \\
  & Continual Alternating & 0.515 $\pm$ 0.045 & 0.439 $\pm$ 0.022 & 0.433 $\pm$ 0.020 & 0.412 $\pm$ 0.017 & 0.432 $\pm$ 0.014 & 0.434 $\pm$ 0.014 & 0.429 $\pm$ 0.013 & 0.429 $\pm$ 0.012 & 0.429 $\pm$ 0.011 \\
  & Mix-of-Show & 0.439 $\pm$ 0.049 & 0.394 $\pm$ 0.026 & 0.375 $\pm$ 0.019 & 0.377 $\pm$ 0.020 & 0.375 $\pm$ 0.016 & 0.372 $\pm$ 0.015 & - & - & - \\
  & Orthogonal Adaptation & 0.405 $\pm$ 0.040 & 0.384 $\pm$ 0.025 & 0.361 $\pm$ 0.020 & 0.361 $\pm$ 0.020 & 0.368 $\pm$ 0.016 & 0.369 $\pm$ 0.015 & 0.371 $\pm$ 0.013 & 0.354 $\pm$ 0.013 & 0.358 $\pm$ 0.012 \\
  & LoRACLR & 0.431 $\pm$ 0.046 & 0.367 $\pm$ 0.028 & 0.361 $\pm$ 0.022 & 0.372 $\pm$ 0.020 & 0.367 $\pm$ 0.016 & 0.372 $\pm$ 0.015 & 0.370 $\pm$ 0.014 & - & - \\
\midrule
\multirow{5}{*}{\rotatebox{90}{DINOv3}} & SeqLoRA (Bilevel) & 0.475 $\pm$ 0.031 & 0.439 $\pm$ 0.023 & 0.425 $\pm$ 0.018 & 0.418 $\pm$ 0.016 & 0.440 $\pm$ 0.013 & 0.446 $\pm$ 0.013 & 0.439 $\pm$ 0.011 & 0.434 $\pm$ 0.010 & 0.428 $\pm$ 0.011 \\
  & Continual Alternating & 0.475 $\pm$ 0.030 & 0.427 $\pm$ 0.016 & 0.429 $\pm$ 0.018 & 0.406 $\pm$ 0.015 & 0.431 $\pm$ 0.012 & 0.435 $\pm$ 0.013 & 0.428 $\pm$ 0.012 & 0.426 $\pm$ 0.010 & 0.424 $\pm$ 0.010 \\
  & Mix-of-Show & 0.411 $\pm$ 0.035 & 0.385 $\pm$ 0.020 & 0.371 $\pm$ 0.016 & 0.378 $\pm$ 0.017 & 0.374 $\pm$ 0.013 & 0.372 $\pm$ 0.013 & - & - & - \\
  & Orthogonal Adaptation & 0.379 $\pm$ 0.027 & 0.375 $\pm$ 0.020 & 0.356 $\pm$ 0.017 & 0.359 $\pm$ 0.017 & 0.365 $\pm$ 0.013 & 0.367 $\pm$ 0.014 & 0.366 $\pm$ 0.011 & 0.350 $\pm$ 0.011 & 0.351 $\pm$ 0.010 \\
  & LoRACLR & 0.403 $\pm$ 0.032 & 0.361 $\pm$ 0.021 & 0.359 $\pm$ 0.019 & 0.371 $\pm$ 0.017 & 0.364 $\pm$ 0.013 & 0.369 $\pm$ 0.014 & 0.367 $\pm$ 0.012 & - & - \\
\midrule
\multirow{5}{*}{\rotatebox{90}{DreamSim}} & SeqLoRA (Bilevel) & 0.513 $\pm$ 0.035 & 0.494 $\pm$ 0.024 & 0.478 $\pm$ 0.020 & 0.477 $\pm$ 0.016 & 0.501 $\pm$ 0.014 & 0.512 $\pm$ 0.012 & 0.515 $\pm$ 0.010 & 0.508 $\pm$ 0.010 & 0.507 $\pm$ 0.008 \\
  & Continual Alternating & 0.510 $\pm$ 0.033 & 0.481 $\pm$ 0.020 & 0.480 $\pm$ 0.020 & 0.470 $\pm$ 0.015 & 0.495 $\pm$ 0.013 & 0.504 $\pm$ 0.012 & 0.508 $\pm$ 0.011 & 0.503 $\pm$ 0.009 & 0.504 $\pm$ 0.008 \\
  & Mix-of-Show & 0.447 $\pm$ 0.041 & 0.443 $\pm$ 0.022 & 0.436 $\pm$ 0.020 & 0.448 $\pm$ 0.017 & 0.442 $\pm$ 0.014 & 0.445 $\pm$ 0.013 & - & - & - \\
  & Orthogonal Adaptation & 0.429 $\pm$ 0.037 & 0.438 $\pm$ 0.021 & 0.430 $\pm$ 0.019 & 0.442 $\pm$ 0.018 & 0.437 $\pm$ 0.014 & 0.442 $\pm$ 0.013 & 0.446 $\pm$ 0.010 & 0.432 $\pm$ 0.010 & 0.438 $\pm$ 0.008 \\
  & LoRACLR & 0.445 $\pm$ 0.041 & 0.431 $\pm$ 0.024 & 0.431 $\pm$ 0.021 & 0.445 $\pm$ 0.018 & 0.437 $\pm$ 0.014 & 0.444 $\pm$ 0.013 & 0.447 $\pm$ 0.011 & - & - \\
\midrule
\multirow{5}{*}{\rotatebox{90}{HPSv2}} & SeqLoRA (Bilevel) & 0.271 $\pm$ 0.003 & 0.269 $\pm$ 0.002 & 0.270 $\pm$ 0.001 & 0.270 $\pm$ 0.001 & 0.271 $\pm$ 0.001 & 0.271 $\pm$ 0.001 & 0.271 $\pm$ 0.001 & 0.271 $\pm$ 0.001 & 0.271 $\pm$ 0.001 \\
  & Continual Alternating & 0.270 $\pm$ 0.003 & 0.270 $\pm$ 0.002 & 0.271 $\pm$ 0.001 & 0.271 $\pm$ 0.001 & 0.271 $\pm$ 0.001 & 0.271 $\pm$ 0.001 & 0.271 $\pm$ 0.001 & 0.271 $\pm$ 0.001 & 0.271 $\pm$ 0.001 \\
  & Mix-of-Show & 0.272 $\pm$ 0.002 & 0.271 $\pm$ 0.002 & 0.272 $\pm$ 0.001 & 0.271 $\pm$ 0.001 & 0.271 $\pm$ 0.001 & 0.271 $\pm$ 0.001 & - & - & - \\
  & Orthogonal Adaptation & 0.272 $\pm$ 0.003 & 0.271 $\pm$ 0.001 & 0.272 $\pm$ 0.001 & 0.271 $\pm$ 0.001 & 0.271 $\pm$ 0.001 & 0.271 $\pm$ 0.001 & 0.271 $\pm$ 0.001 & 0.271 $\pm$ 0.001 & 0.271 $\pm$ 0.001 \\
  & LoRACLR & 0.272 $\pm$ 0.002 & 0.270 $\pm$ 0.002 & 0.271 $\pm$ 0.001 & 0.271 $\pm$ 0.001 & 0.271 $\pm$ 0.001 & 0.272 $\pm$ 0.001 & 0.271 $\pm$ 0.001 & - & - \\
\midrule
\multirow{5}{*}{\rotatebox{90}{HPSv3}} & SeqLoRA (Bilevel) & 9.539 $\pm$ 0.588 & 9.476 $\pm$ 0.424 & 9.171 $\pm$ 0.509 & 9.043 $\pm$ 0.399 & 9.136 $\pm$ 0.283 & 8.828 $\pm$ 0.326 & 8.898 $\pm$ 0.252 & 8.772 $\pm$ 0.231 & 8.904 $\pm$ 0.195 \\
  & Continual Alternating & 9.529 $\pm$ 0.552 & 9.827 $\pm$ 0.328 & 9.636 $\pm$ 0.407 & 9.242 $\pm$ 0.338 & 9.427 $\pm$ 0.303 & 9.008 $\pm$ 0.347 & 9.115 $\pm$ 0.261 & 8.991 $\pm$ 0.228 & 9.032 $\pm$ 0.200 \\
  & Mix-of-Show & 8.847 $\pm$ 0.735 & 8.968 $\pm$ 0.444 & 8.446 $\pm$ 0.503 & 8.039 $\pm$ 0.457 & 8.013 $\pm$ 0.337 & 7.688 $\pm$ 0.314 & - & - & - \\
  & Orthogonal Adaptation & 8.886 $\pm$ 0.788 & 9.135 $\pm$ 0.431 & 8.723 $\pm$ 0.413 & 8.336 $\pm$ 0.432 & 8.343 $\pm$ 0.312 & 8.101 $\pm$ 0.314 & 8.104 $\pm$ 0.290 & 8.029 $\pm$ 0.244 & 8.240 $\pm$ 0.205 \\
  & LoRACLR & 9.089 $\pm$ 0.722 & 9.035 $\pm$ 0.400 & 8.681 $\pm$ 0.432 & 8.382 $\pm$ 0.431 & 8.388 $\pm$ 0.304 & 8.142 $\pm$ 0.308 & 8.268 $\pm$ 0.270 & - & - \\
\bottomrule
\end{tabular}%
}
\end{table}
}

\section{Supplementary Ablation Results}\label{sec:supp_ablation}
We provide the ablation study results for SeqLoRA with 32 concepts in Table~\ref{tab:ablation_32}, where we vary the number of local steps, the regularization weight $\epsilon$, and the number of bilevel iterations. The results show that the default configuration (first row) provides a strong balance across all metrics. Reducing the number of local steps or bilevel iterations to 1 yields essentially identical identity-preservation scores (DINO, DINOv2, DINOv3, DreamSim) but lowers the human preference score (HPSv3). Increasing the number of local steps to 5 further improves HPSv3 with a marginal change in identity preservation, while smaller values of $\epsilon$ ($10^{-4}$ or $10^{-5}$) slightly reduce identity scores. Overall, the default configuration is chosen as it maintains near-optimal identity preservation while sustaining a competitive human preference score.

\begin{table}[H]
\centering
\caption{Ablation study on hyperparameters for SeqLoRA with 32 concepts.}
\label{tab:ablation_32}
\resizebox{\textwidth}{!}{%
\begin{tabular}{@{}llcccccccc@{}}
\toprule
Factor & Value & DINO$\uparrow$ & DINOv2$\uparrow$ & DINOv3$\uparrow$ & DreamSim$\uparrow$ & CLIP-I$\uparrow$ & CLIP-T$\uparrow$ & HPSv2$\uparrow$ & HPSv3$\uparrow$ \\
\midrule
Default & - & 0.468 $\pm$ 0.021 & 0.418 $\pm$ 0.019 & 0.418 $\pm$ 0.016 & 0.477 $\pm$ 0.016 & 0.693 $\pm$ 0.009 & 0.286 $\pm$ 0.003 & 0.270 $\pm$ 0.001 & 9.043 $\pm$ 0.399 \\
\midrule
\multirow[c]{2}{*}{Local Steps} & 1 Step & 0.466 $\pm$ 0.021 & 0.418 $\pm$ 0.017 & 0.414 $\pm$ 0.015 & 0.479 $\pm$ 0.016 & 0.694 $\pm$ 0.009 & 0.287 $\pm$ 0.003 & 0.270 $\pm$ 0.001 & 8.814 $\pm$ 0.422 \\
 & 5 Steps & 0.469 $\pm$ 0.021 & 0.417 $\pm$ 0.020 & 0.415 $\pm$ 0.017 & 0.475 $\pm$ 0.016 & 0.690 $\pm$ 0.009 & 0.286 $\pm$ 0.003 & 0.271 $\pm$ 0.001 & 9.378 $\pm$ 0.346 \\
\midrule
\multirow[c]{2}{*}{Epsilon} & $10^{-4}$ & 0.459 $\pm$ 0.021 & 0.406 $\pm$ 0.018 & 0.405 $\pm$ 0.016 & 0.470 $\pm$ 0.017 & 0.686 $\pm$ 0.009 & 0.288 $\pm$ 0.003 & 0.271 $\pm$ 0.001 & 8.838 $\pm$ 0.402 \\
 & $10^{-5}$ & 0.466 $\pm$ 0.021 & 0.422 $\pm$ 0.019 & 0.417 $\pm$ 0.017 & 0.476 $\pm$ 0.016 & 0.692 $\pm$ 0.009 & 0.287 $\pm$ 0.004 & 0.271 $\pm$ 0.001 & 8.914 $\pm$ 0.402 \\
\midrule
\multirow[c]{2}{*}{Bilevel Iters} & 1 Iter & 0.466 $\pm$ 0.021 & 0.418 $\pm$ 0.019 & 0.417 $\pm$ 0.016 & 0.477 $\pm$ 0.017 & 0.695 $\pm$ 0.009 & 0.288 $\pm$ 0.003 & 0.270 $\pm$ 0.001 & 8.784 $\pm$ 0.377 \\
 & 4 Iters & 0.464 $\pm$ 0.020 & 0.408 $\pm$ 0.018 & 0.410 $\pm$ 0.016 & 0.474 $\pm$ 0.016 & 0.691 $\pm$ 0.009 & 0.288 $\pm$ 0.003 & 0.271 $\pm$ 0.001 & 8.855 $\pm$ 0.390 \\
\bottomrule
\end{tabular}%
}
\end{table}

\end{document}